\newcommand*{\ICML}{}
\newcommand*{\CAMREADY}{}
	\newtheorem{claim}[theorem]{Claim}
	\newtheorem{fact}[theorem]{Fact}
	\newtheorem{procedure}{Procedure}
	\newtheorem{conjecture}{Conjecture}	
	\newtheorem{hypothesis}{Hypothesis}	
	\newcommand{\qed}{\hfill\ensuremath{\blacksquare}}
	\newtheorem{lemma}{Lemma}
	\newtheorem{corollary}{Corollary}
	\newtheorem{theorem}{Theorem}
	\newtheorem{proposition}{Proposition}
	\newtheorem{assumption}{Assumption}
	\theoremstyle{definition}
	\newtheorem{definition}{Definition}
\def\be{\begin{equation}}
\def\ee{\end{equation}}
\def\beas{\begin{eqnarray*}}
\def\eeas{\end{eqnarray*}}
\def\bea{\begin{eqnarray}}
\def\eea{\end{eqnarray}}
\newcommand{\x}{{\mathbf x}}
\newcommand{\w}{{\mathbf w}}
\newcommand{\aaa}{{\mathbf a}}
\newcommand{\bb}{{\mathbf b}}
\newcommand{\A}{{\mathcal A}}
\newcommand{\D}{{\mathcal D}}
\renewcommand{\S}{{\mathcal S}}
\newcommand{\W}{{\mathcal W}}
\newcommand{\X}{{\mathcal X}}
\newcommand{\Y}{{\mathcal Y}}
\newcommand{\OO}{{\mathcal O}}
\renewcommand{\L}{\mathcal{L}}
\newcommand{\R}{{\mathbb R}}
\newcommand{\N}{{\mathbb N}}
\newcommand{\abs}[1]{\left\lvert #1 \right\rvert}
\newcommand{\norm}[1]{\left\| #1 \right\|}
\newcommand{\normnoflex}[1]{\| #1 \|}
\newcommand{\inprod}[2]  {\left\langle{#1},{#2}\right\rangle}
\newcommand{\inprodnoflex}[2]{\langle{#1},{#2}\rangle}
\newcommand{\mat}[1]{\llbracket#1\rrbracket}
\newcommand{\matflex}[1]{\left\llbracket#1\right\rrbracket}
\newcommand{\tenp}{\otimes}
\newcommand{\kronp}{\odot}
\newcommand{\sign}{\mathrm{sign}}
\definecolor{xcolor-gray}{gray}{0.95}
\DeclareFontFamily{U}{mathx}{\hyphenchar\font45}
\DeclareFontShape{U}{mathx}{m}{n}{<-> mathx10}{}
\DeclareSymbolFont{mathx}{U}{mathx}{m}{n}
\DeclareMathAccent{\widebar}{0}{mathx}{"73}
\definecolor{darkspringgreen}{rgb}{0.09, 0.45, 0.27}
	\renewcommand{\endnote}[1]{\null} 
\let\note\footnote
	\newcommand*{\ABBR}{}
	\newcommand*{\ABBR}{}
	\newcommand*{\ABBR}{}
	\newcommand*{\ABBR}{}
	\newcommand{\eg}{{\it e.g.}}
	\newcommand{\ie}{{\it i.e.}}
	\newcommand{\cf}{{\it cf.}}
\begin{document}
	
	\ifdefined\NEURIPS
	\title{Paper Title}
		\author{
			Author 1 \\
			Author 1 Institution \\	
			\texttt{author1@email} \\
			\And
			Author 1 \\
			Author 1 Institution \\	
			\texttt{author1@email} \\
		}
		\maketitle
	\fi
	\ifdefined\CVPR
		\title{Paper Title}
		\author{
			Author 1 \\
			Author 1 Institution \\	
			\texttt{author1@email} \\
			\and
			Author 2 \\
			Author 2 Institution \\
			\texttt{author2@email} \\	
			\and
			Author 3 \\
			Author 3 Institution \\
			\texttt{author3@email} \\
		}
		\maketitle
	\fi
	\ifdefined\AISTATS
		\twocolumn[
		\aistatstitle{Paper Title}
		\ifdefined\CAMREADY
			\aistatsauthor{Author 1 \And Author 2 \And Author 3}
			\aistatsaddress{Author 1 Institution \And Author 2 Institution \And Author 3 Institution}
		\else
			\aistatsauthor{Anonymous Author 1 \And Anonymous Author 2 \And Anonymous Author 3}
			\aistatsaddress{Unknown Institution 1 \And Unknown Institution 2 \And Unknown Institution 3}
		\fi
		]	
	\fi
	\ifdefined\ICML
		\icmltitlerunning{Implicit Regularization in Tensor Factorization}
		\twocolumn[
		\icmltitle{Implicit Regularization in Tensor Factorization} 
		\icmlsetsymbol{equal}{*}
		\begin{icmlauthorlist}
			\icmlauthor{Noam Razin}{tau,equal} 
			\icmlauthor{Asaf Maman}{tau,equal}
			\icmlauthor{Nadav Cohen}{tau}
		\end{icmlauthorlist}
		\icmlaffiliation{tau}{Blavatnik School of Computer Science, Tel Aviv University, Israel}
		\icmlcorrespondingauthor{Noam Razin}{noam.razin@cs.tau.ac.il}
		\icmlcorrespondingauthor{Asaf Maman}{asafmaman@mail.tau.ac.il}
		\icmlkeywords{Implicit Regularization, Tensor Factorization, Deep Learning, Generalization}
		\vskip 0.3in
		]
		\printAffiliationsAndNotice{\icmlEqualContribution} 
	\fi
	\ifdefined\ICLR
		\title{Paper Title}
		\author{
			Author 1 \\
			Author 1 Institution \\
			\texttt{author1@email}
			\And
			Author 2 \\
			Author 2 Institution \\
			\texttt{author2@email}
			\And
			Author 3 \\ 
			Author 3 Institution \\
			\texttt{author3@email}
		}
		\maketitle
	\fi
	\ifdefined\COLT
		\title{Paper Title}
		\coltauthor{
			\Name{Author 1} \Email{author1@email} \\
			\addr Author 1 Institution
			\And
			\Name{Author 2} \Email{author2@email} \\
			\addr Author 2 Institution
			\And
			\Name{Author 3} \Email{author3@email} \\
			\addr Author 3 Institution}
		\maketitle
	\fi

	\begin{abstract}

Recent efforts to unravel the mystery of implicit regularization in deep learning have led to a theoretical focus on matrix factorization~---~matrix completion via linear neural network.
As a step further towards practical deep learning, we provide the first theoretical analysis of implicit regularization in tensor factorization~---~tensor completion via certain type of non-linear neural network.
We circumvent the notorious difficulty of tensor problems by adopting a dynamical systems perspective, and characterizing the evolution induced by gradient descent.
The characterization suggests a form of greedy low tensor rank search, which we rigorously prove under certain conditions, and empirically demonstrate under others.
Motivated by tensor rank capturing the implicit regularization of a non-linear neural network, we empirically explore it as a measure of complexity, and find that it captures the essence of datasets on which neural networks generalize.
This leads us to believe that tensor rank may pave way to explaining both implicit regularization in deep learning, and the properties of real-world data translating this implicit regularization to generalization.

\end{abstract}

	\ifdefined\COLT
		\medskip
		\begin{keywords}
			\emph{TBD}, \emph{TBD}, \emph{TBD}
		\end{keywords}
	\fi

	\section{Introduction} \label{sec:intro}

The ability of neural networks to generalize when having far more learnable parameters than training examples, even in the absence of any explicit regularization, is an enigma lying at the heart of deep learning theory.
Conventional wisdom is that this generalization stems from an \emph{implicit regularization}~---~a tendency of gradient-based optimization to fit training examples with predictors whose ``complexity'' is as low as possible.
The fact that ``natural'' data gives rise to generalization while other types of data (\eg~random) do not, is understood to result from the former being amenable to fitting by predictors of lower complexity.
A major challenge in formalizing this intuition is that we lack definitions for predictor complexity that are both quantitative (\ie~admit quantitative generalization bounds) and capture the essence of natural data (in the sense of it being fittable with low complexity).
Consequently, existing analyses typically focus on simplistic settings, where a notion of complexity is apparent. 
A prominent example of such a setting is \emph{matrix~completion}.

In matrix completion, we are given a randomly chosen subset of entries from an unknown matrix~$W^* \in \R^{d , d'}$, and our goal is to recover unseen entries.
This can be viewed as a prediction problem, where the set of possible inputs is $\X = \{ 1 , ...\, , d \} {\times} \{ 1 , ...\, , d' \}$, the possible labels are $\Y = \R$, and the label of $( i , j ) \in \X$ is~$[ W^* ]_{i , j}$.
Under this viewpoint, observed entries constitute the training set, and the average reconstruction error over unobserved entries is the test error, quantifying generalization.
A predictor, \ie~a function from $\X$ to~$\Y$, can then be seen as a matrix, and a natural notion of complexity is its rank.
It is known empirically (\cf~\citet{gunasekar2017implicit,arora2019implicit}) that this complexity measure is oftentimes implicitly minimized by \emph{matrix factorization}~---~\emph{linear} neural network\note{
That is, parameterization of learned predictor (matrix) as a product of matrices.
With such parameterization it is possible to explicitly constrain rank (by limiting shared dimensions of multiplied matrices), but the setting of interest is where rank is unconstrained, meaning all regularization is implicit.
} 
trained via gradient descent with small learning rate and near-zero initialization.
Mathematically characterizing the implicit regularization in matrix factorization is a highly active area of research.
Though initially conjectured to be equivalent to norm minimization (see~\citet{gunasekar2017implicit}), recent studies \cite{arora2019implicit,razin2020implicit,li2021towards} suggest that this is not the case, and instead adopt a dynamical view, ultimately establishing that (under certain conditions) the implicit regularization in matrix factorization is performing a greedy low rank search.

A central question that arises is the extent to which the study of implicit regularization in matrix factorization is relevant to more practical settings.
Recent experiments (see~\citet{razin2020implicit}) have shown that the tendency towards low rank extends from matrices (two-dimensional arrays) to \emph{tensors} (multi-dimensional arrays).
Namely, in the task of $N$-dimensional \emph{tensor completion}, which (analogously to matrix completion) can be viewed as a prediction problem over $N$~input variables, training a \emph{tensor factorization}\note{
The term ``tensor factorization'' refers throughout to the classic \emph{CP factorization}; other (more advanced) factorizations will be named differently (see \citet{kolda2009tensor,hackbusch2012tensor} for an introduction to various tensor factorizations).
\label{note:cp_decomp}
}
via gradient descent with small learning rate and near-zero initialization tends to produce tensors (predictors) with low \emph{tensor rank}.
Analogously to how matrix factorization may be viewed as a linear neural network, tensor factorization can be seen as a certain type of \emph{non-linear} neural network (two layer network with multiplicative non-linearity, \cf~\citet{cohen2016expressive}), and so it represents a setting much closer to practical deep learning.

In this paper we provide the first theoretical analysis of implicit regularization in tensor factorization.
We circumvent the notorious difficulty of tensor problems (see \citet{hillar2013most}) by adopting a dynamical systems perspective.
Characterizing the evolution that gradient descent with small learning rate and near-zero initialization induces on the components of a factorization, we show that their norms are subject to a momentum-like effect, in the sense that they move slower when small and faster when large.
This implies a form of greedy low tensor rank search, generalizing phenomena known for the case of matrices.
We employ the finding to prove that, with the classic Huber loss from robust statistics~\cite{huber1964robust}, arbitrarily small initialization leads tensor factorization to follow a trajectory of rank one tensors for an arbitrary amount of time or distance.
Experiments validate our analysis, demonstrating implicit regularization towards low tensor rank in a wide array of configurations.

Motivated by the fact that tensor rank captures the implicit regularization of a non-linear neural network, we empirically explore its potential to serve as a measure of complexity for multivariable predictors.
We find that it is possible to fit standard image recognition datasets~---~MNIST~\cite{lecun1998mnist} and Fashion-MNIST~\cite{xiao2017fashion}~---~with predictors of extremely low tensor rank, far beneath what is required for fitting random data.
This leads us to believe that tensor rank (or more advanced notions such as hierarchical tensor ranks) may pave way to explaining both implicit regularization of contemporary deep neural networks, and the properties of real-world data translating this implicit regularization to generalization.

\medskip

The remainder of the paper is organized as follows.
Section~\ref{sec:tf} presents the tensor factorization model, as well as its interpretation as a neural network.
Section~\ref{sec:dynamic} characterizes its dynamics, followed by Section~\ref{sec:rank} which employs the characterization to establish (under certain conditions) implicit tensor rank minimization.
Experiments, demonstrating both the dynamics of learning and the ability of tensor rank to capture the essence of standard datasets, are given in Section~\ref{sec:experiments}.
In Section~\ref{sec:related} we review related work.
Finally, Section~\ref{sec:conclusion} concludes.
Extension of our results to tensor sensing (more general setting than tensor completion) is discussed in Appendix~\ref{app:sensing}.

	\section{Tensor Factorization} \label{sec:tf}

Consider the task of completing an $N$-dimensional tensor ($N \geq 3$) with axis lengths $d_1, \ldots, d_N \in \N$, or, in standard tensor analysis terminology, an \emph{order}~$N$ tensor with \emph{modes} of \emph{dimensions} $d_1, \ldots, d_N$.
Given a set of observations $\{ y_{i_1, \ldots, i_N} \in \R \}_{(i_1, \ldots, i_N) \in \Omega }$, where $\Omega$ is a subset of all possible index tuples, a standard (undetermined) loss function for the task is:
\bea
\L : \R^{d_1, \ldots, d_N} \to \R_{\geq 0} 
\hspace{25mm}
\label{eq:tc_loss} \\
\L ( \W ) \,{=}\, \frac{1}{\abs{\Omega}} \hspace{-0.5mm} \sum\nolimits_{(i_1, \ldots, i_N) \in \Omega} \hspace{-0.5mm} \ell \left ( [ \W ]_{i_1, \ldots, i_N} \,{-}\, y_{i_1, \ldots, i_N} \right )
\text{\,,} \hspace{-1.5mm}
\nonumber
\eea
where $\ell : \R \to \R_{\geq 0}$ is differentiable and locally smooth.
A typical choice for~$\ell ( \cdot )$ is $\ell ( z ) = \frac{1}{2} z^2$, corresponding to $\ell_2$~loss.
Other options are also common, for example that given in Equation~\eqref{eq:huber_loss}, which corresponds to the Huber loss from robust statistics~\cite{huber1964robust}~---~a differentiable surrogate for $\ell_1$~loss.

Performing tensor completion with an $R$-component tensor factorization amounts to optimizing the following (non-convex) objective:
\be
\phi \left ( \{ \w_r^n \}_{r = 1}^R\hspace{0mm}_{n = 1}^N \right ) := \L \left ( \W_e \right )
\text{\,,}
\label{eq:cp_objective}
\ee
defined over \emph{weight vectors}  $\{ \w_r^n \in \R^{d_n} \}_{r = 1}^R\hspace{0mm}_{n = 1}^N$, where:
\be
\W_e := \sum\nolimits_{r = 1}^R \w_r^1 \tenp \cdots \tenp \w_r^N
\label{eq:end_tensor}
\ee
is referred to as the \emph{end tensor} of the factorization, with $\tenp$ representing outer product.\note{
For any $\{ \w^n \,{\in}\, \R^{d_n} \}_{n = 1}^N$, the outer product $\w^1 {\tenp} \cdots {\tenp} \w^N$, denoted also $\tenp_{n = 1}^N \w^n$, is the tensor in $\R^{d_1, \ldots, d_N}$ defined by $[ \tenp_{n = 1}^N \w^n ]_{i_1, \ldots, i_N} \,{=}\,  \prod_{n = 1}^N [ \w^n ]_{i_n}$.
}
The minimal number of components $R$ required in order for $\W_e$ to be able to express a given tensor $\W \in \R^{d_1, \ldots, d_N}$, is defined to be the \emph{tensor rank} of~$\W$.
One may explicitly restrict the tensor rank of solutions produced by the tensor factorization via limiting~$R$.
However, since our interest lies in the implicit regularization induced by gradient descent, \ie~in the type of end tensors (Equation~\eqref{eq:end_tensor}) it will find when applied to the objective $\phi (\cdot)$ (Equation~\eqref{eq:cp_objective}) with no explicit constraints, we treat the case where $R$ can be arbitrarily large.

In line with analyses of matrix factorization (\eg~\citet{gunasekar2017implicit,arora2018optimization,arora2019implicit,eftekhari2020implicit,li2021towards}), we model small learning rate for gradient descent through the infinitesimal limit, \ie~through \emph{gradient flow}:
\bea
&&\frac{d}{dt} \w_r^{n} (t) := - \frac{\partial}{\partial \w_r^{n}} \phi  \left ( \{ \w_{r'}^{n'} (t) \}_{r' = 1}^R\hspace{0mm}_{n' = 1}^N \right ) 
\label{eq:cp_gf} \\[1mm]
&&\hspace{15mm} , t \geq 0 ~ , ~ r = 1, \ldots, R ~ , ~ n = 1, \ldots, N 
\text{\,,}
\nonumber
\eea
where $\{ \w_r^n (t) \}_{r = 1}^R\hspace{0mm}_{n = 1}^N$ denote the weight vectors at time~$t$ of optimization.

Our aim is to theoretically investigate the prospect of implicit regularization towards low tensor rank, \ie~of gradient flow with near-zero initialization learning a solution that can be represented with a small number of components.

\subsection{Interpretation as Neural Network} \label{sec:tf:nn}

\begin{figure}
	\begin{center}
		\hspace{-2.2mm}
		\includegraphics[width=0.49\textwidth]{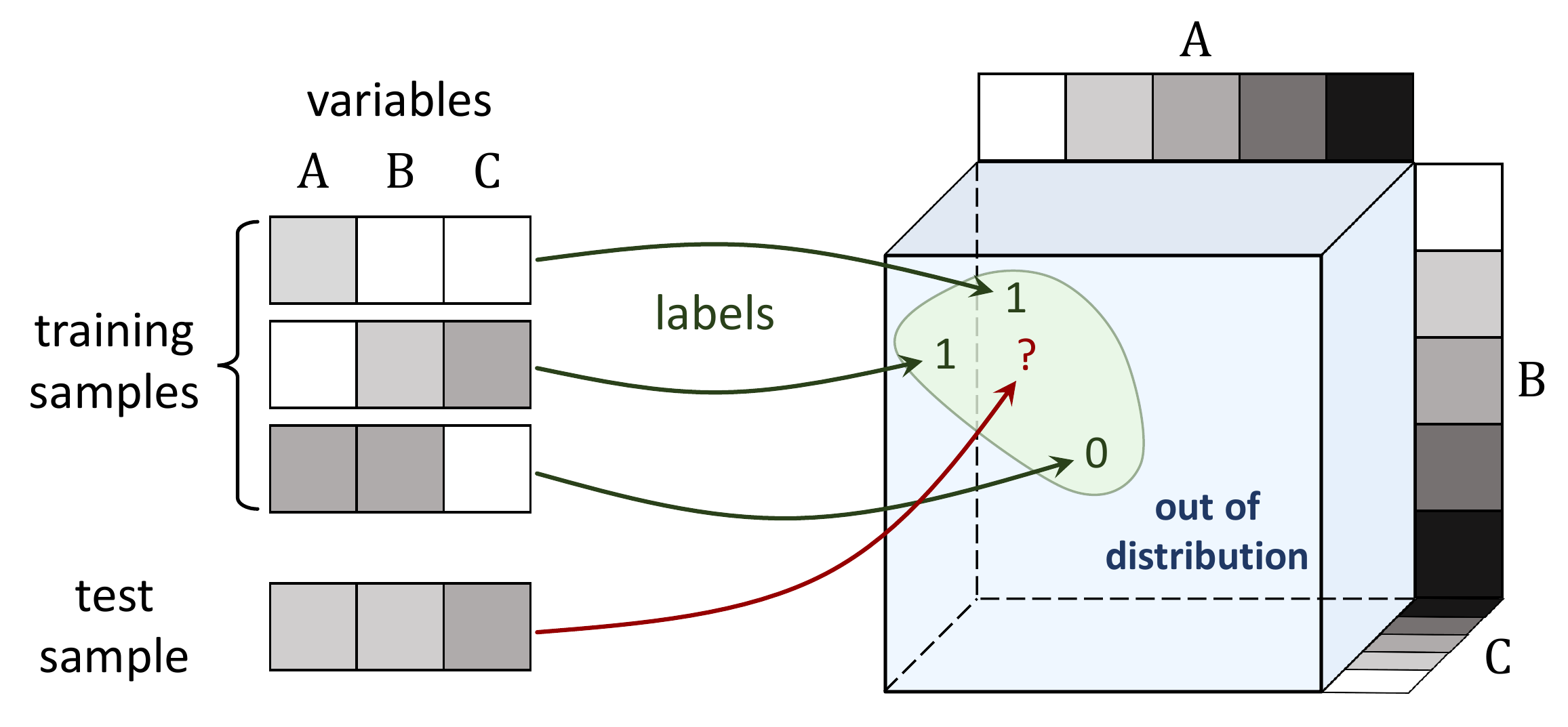}
	\end{center}
	\vspace{-3mm}
	\caption{
		Prediction tasks over discrete variables can be viewed as tensor completion problems.		
		Consider the task of learning a predictor from domain $\X = \{ 1 , \ldots , d_1 \} \times \cdots \times \{ 1 , \ldots , d_N \}$ to range $\Y = \R$ (figure assumes $N = 3$ and $d_1 = \cdots = d_N = 5$ for the sake of illustration).
		Each input sample is associated with a location in an order~$N$ tensor with mode (axis) dimensions $d_1, \ldots, d_N$, where the value of a variable (depicted as a shade of gray) determines the index of the corresponding mode (marked by ``A", ``B" or ``C").
		The associated location stores the label of the sample.
		Under this viewpoint, training samples are observed entries, drawn according to an unknown distribution from a ground truth tensor.
		Learning a predictor amounts to completing the unobserved entries, with test error measured by (weighted) average reconstruction error.
		In many standard prediction tasks (\eg~image recognition), only a small subset of the input domain has non-negligible probability.
		From the tensor completion perspective this means that observed entries reside in a restricted part of the tensor, and reconstruction error is weighted accordingly (entries outside the support of the distribution are neglected).
	}
	\label{fig:pred_prob_as_tc}
\end{figure}

\begin{figure*}
	\begin{center}
		\includegraphics[width=0.83\textwidth]{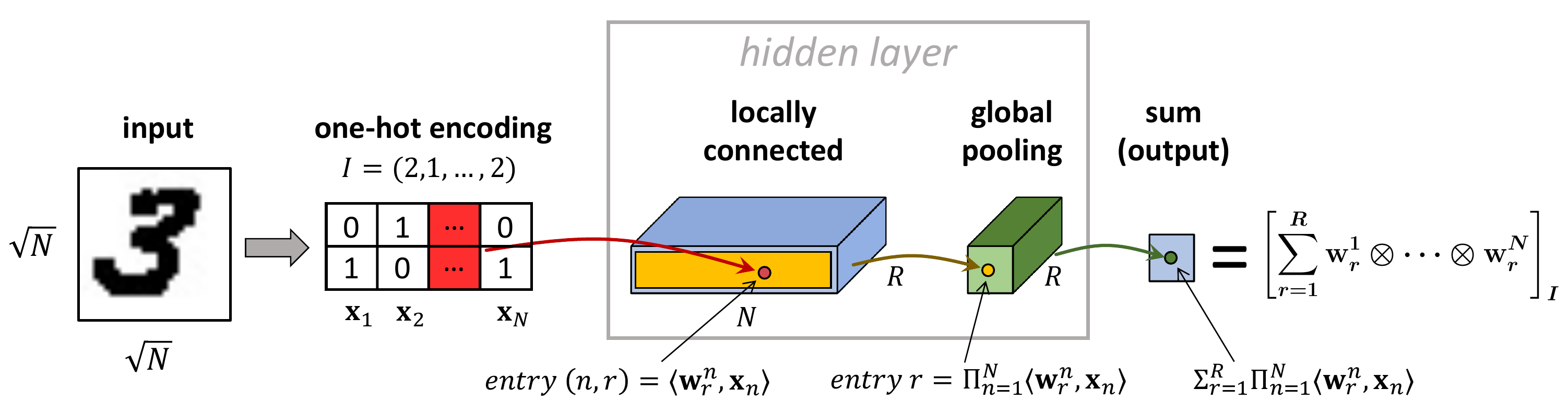}
	\end{center}
	\vspace{-3mm}
	\caption{
		Tensor factorizations correspond to a class of non-linear neural networks.
		Figure~\ref{fig:pred_prob_as_tc} illustrates how a prediction task can be viewed as a tensor completion problem.
		The current figure extends this correspondence, depicting an equivalence between solving tensor completion via tensor factorization, and learning a predictor using the non-linear neural network portrayed above.
		The input to the network is a tuple $I = (i_1, \ldots, i_N) \in \{ 1, \ldots, d_1 \} \times \cdots \times \{ 1, \ldots, d_N\}$, encoded via one-hot vectors $(\x_1, \ldots, \x_N) \in \R^{d_1} \times \cdots \times \R^{d_N}$.
		For example, in the diagram, $I$ stands for a binary image with $N$ pixels (in which case $d_1 = \cdots = d_N = 2$).
		The one-hot representations are passed through a hidden layer consisting of: \emph{(i)}~locally connected linear operator with $R$~channels, the $r$'th one computing $\inprodnoflex{\w_r^{\scriptscriptstyle 1}}{\x_1}, \ldots, \inprodnoflex{\w_r^{\scriptscriptstyle N}}{\x_N}$ with filters (learnable weights) $\{ \w_r^{\scriptscriptstyle n }\}_{n = 1}^N$; and \emph{(ii)} channel-wise global product pooling (multiplicative non-linearity).
		The resulting activations are then reduced through summation to a scalar --- the output of the network.
		All in all, given input tuple $I = ( i_1 , \ldots , i_N )$, the network outputs the $I$'th entry of $\sum_{r = 1}^R \w_r^{\scriptscriptstyle 1} \tenp \cdots \tenp \w_r^{\scriptscriptstyle N}$.
		Notice that the number of components $R$ and the weight vectors $\{ \w_r^{\scriptscriptstyle n} \}_{r, n}$ in the factorization correspond to the width and the learnable filters of the network, respectively.
	}
	\label{fig:tf_as_convac}
\end{figure*}

Tensor completion can be viewed as a prediction problem, where each mode corresponds to a discrete input variable.
For an unknown tensor $\W^* \in \R^{d_1, \ldots, d_N}$, inputs are index tuples of the form $(i_1, \ldots, i_N)$, and the label associated with such an input is $[ \W^*]_{i_1, \ldots, i_N}$.
Under this perspective, the training set consists of the observed entries, and the average reconstruction error over unseen entries measures test error.
The standard case, in which observations are drawn uniformly across the tensor and reconstruction error weighs all entries equally, corresponds to a data distribution that is uniform, but other distributions are also viable.

Consider for example the task of predicting a continuous label for a $100$-by-$100$ binary image.
This can be formulated as an order $10000$ tensor completion problem, where all modes are of dimension $2$.
Each input image corresponds to a location (entry) in the tensor~$\W^*$, holding its continuous label.
As image pixels are (typically) not distributed independently and uniformly, locations in the tensor are not drawn uniformly when observations are generated, and are not weighted equally when reconstruction error is computed.
See Figure~\ref{fig:pred_prob_as_tc} for further illustration of how a general prediction task (with discrete inputs and scalar output) can be formulated as a tensor completion problem.

Under the above formulation, tensor factorization can be viewed as a two layer neural network with multiplicative non-linearity.
Given an input, \ie~a location in the tensor, the network produces an output equal to the value that the factorization holds at the given location.
Figure~\ref{fig:tf_as_convac} illustrates this equivalence between solving tensor completion with a tensor factorization and solving a prediction problem with a non-linear neural network.
A major drawback of matrix factorization as a theoretical surrogate for modern deep learning is that it misses the critical aspect of non-linearity.
Tensor factorization goes beyond the realm of linear predictors~---~a significant step towards practical neural networks.

	\section{Dynamical Characterization} \label{sec:dynamic}

In this section we derive a dynamical characterization for the norms of individual components in the tensor factorization.
The characterization implies that with small learning rate and near-zero initialization, components tend to be learned incrementally, giving rise to a bias towards low tensor rank solutions.
This finding is used in Section~\ref{sec:rank} to prove (under certain conditions) implicit tensor rank minimization, and is demonstrated empirically in Section~\ref{sec:experiments}.\note{
We note that all results in this section apply even if the tensor completion loss~$\L (\cdot)$ (Equation~\eqref{eq:tc_loss}) is replaced by any differentiable and locally smooth function.
The proofs in Appendix~\ref{app:proofs} already account for this more general setting.
\label{note:dyn_holds_for_diff_local_smooth_loss}
}

Hereafter, unless specified otherwise, when referring to a norm we mean the standard Frobenius (Euclidean) norm, denoted by~$\norm{\cdot}$.

The following lemma establishes an invariant of the dynamics, showing that the differences between squared norms of vectors in the same component are constant through time.

\begin{lemma}
	\label{lem:balancedness_conservation_body}
	For all $r \in \{ 1, \ldots, R \}$ and $n, \bar{n} \in \{ 1 , \ldots, N \}$:
	\[
	\norm{ \w_r^{n} (t) }^2 - \norm{ \w_r^{\bar{n}} (t) }^2 = \norm{ \w_r^{n} (0) }^2 - \norm{ \w_r^{\bar{n}} (0) }^2 , ~t \geq 0
	\text{\,.}
	\]
\end{lemma}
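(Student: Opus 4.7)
The plan is to differentiate $\norm{\w_r^n(t)}^2 - \norm{\w_r^{\bar{n}}(t)}^2$ in time and show that the derivative vanishes identically, so the difference equals its value at $t=0$. By gradient flow,
\[
\tfrac{d}{dt}\norm{\w_r^n(t)}^2 \,=\, 2\inprod{\w_r^n(t)}{\dot{\w}_r^n(t)} \,=\, -2\inprod{\w_r^n(t)}{\tfrac{\partial \phi}{\partial \w_r^n}}\text{,}
\]
so the task reduces to proving that the scalar $\inprod{\w_r^n}{\partial \phi/\partial \w_r^n}$ does not depend on $n$ (for fixed $r$).

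The key step is to exploit the multilinearity of the end tensor $\W_e = \sum_{r' = 1}^R \w_{r'}^1 \tenp \cdots \tenp \w_{r'}^N$ in each weight vector. Writing $G := \nabla_{\W_e} \L(\W_e) \in \R^{d_1,\ldots,d_N}$ and applying the chain rule, $\partial \phi/\partial \w_r^n$ is the tensor obtained by contracting $G$ with $\w_r^1,\ldots,\w_r^{n-1},\w_r^{n+1},\ldots,\w_r^N$ along all modes except the $n$'th. Explicitly, in coordinates,
\[
\Bigl[\tfrac{\partial \phi}{\partial \w_r^n}\Bigr]_{i_n} \,=\, \sum_{i_1,\ldots,i_{n-1},i_{n+1},\ldots,i_N} [G]_{i_1,\ldots,i_N}\,\prod_{n' \neq n} [\w_r^{n'}]_{i_{n'}}\text{.}
\]
Taking inner product with $\w_r^n$ simply adjoins the remaining factor and yields the full contraction
\[
\inprod{\w_r^n}{\tfrac{\partial \phi}{\partial \w_r^n}} \,=\, \sum_{i_1,\ldots,i_N} [G]_{i_1,\ldots,i_N}\,\prod_{n'=1}^N [\w_r^{n'}]_{i_{n'}}\text{,}
\]
which is manifestly independent of $n$.

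Putting these together,
\[
\tfrac{d}{dt}\bigl(\norm{\w_r^n(t)}^2 - \norm{\w_r^{\bar{n}}(t)}^2\bigr) \,=\, -2\inprod{\w_r^n}{\tfrac{\partial \phi}{\partial \w_r^n}} + 2\inprod{\w_r^{\bar n}}{\tfrac{\partial \phi}{\partial \w_r^{\bar n}}} \,=\, 0\text{,}
\]
so the claim follows by integrating from $0$ to $t$. I do not anticipate a genuine obstacle; the only care required is the bookkeeping when differentiating $\W_e$ with respect to a single $\w_r^n$, which is routine since only the $r$'th summand depends on it and does so linearly. The computation also goes through verbatim when $\L(\cdot)$ is replaced by any differentiable function of $\W_e$, matching the generality noted in the paper's footnote.
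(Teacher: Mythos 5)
Your proposal is correct and follows essentially the same route as the paper: both differentiate $\norm{\w_r^n(t)}^2$ in time and show that $\inprod{\w_r^n}{\partial\phi/\partial\w_r^n}$ equals the full contraction $\inprod{\nabla\L(\W_e)}{\tenp_{n'=1}^N \w_r^{n'}}$, which is independent of $n$. The only difference is cosmetic: you work directly in index/coordinate notation, whereas the paper packages the same computation through its matricization and Kronecker-product lemmas (Lemmas A.1, A.5, A.6); the underlying multilinearity argument is identical.
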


\begin{proof}[Proof sketch (for proof see Lemma~\ref{lem:balancedness_conservation} in Subappendix~\ref{app:proofs:useful_lemmas:tf})]
	The claim readily follows by showing that under gradient flow $\frac{d}{dt}  \normnoflex{ \w_r^{n} (t) }^2 = \frac{d}{dt}  \normnoflex{ \w_r^{\bar{n}} (t) }^2$ for all $t \geq 0$.
\end{proof}

Lemma~\ref{lem:balancedness_conservation_body} naturally leads to the definition below.

\begin{definition}
	\label{def:unbalancedness_magnitude}
	The \emph{unbalancedness magnitude} of the weight vectors $\{ \w_r^n \in \R^{d_n} \}_{r = 1}^R\hspace{0mm}_{n = 1}^N$ is defined to be:
	\[
	\max\nolimits_{r \in \{ 1, \ldots, R\}, ~n, \bar{n} \in \{ 1, \ldots, N\}} \abs{ \norm{ \w_r^n }^2 - \norm{ \w_r^{\bar{n}} }^2 }
	\text{\,.}
	\]
\end{definition}

By Lemma~\ref{lem:balancedness_conservation_body}, the unbalancedness magnitude is constant during optimization, and thus, is determined at initialization.
When weight vectors are initialized near the origin~---~regime of interest~---~the unbalancedness magnitude is small, approaching zero as initialization scale decreases.

Theorem~\ref{thm:dyn_fac_comp_norm_unbal} below provides a dynamical characterization for norms of individual components in the tensor factorization.

\begin{theorem}
	\label{thm:dyn_fac_comp_norm_unbal}
	Assume unbalancedness magnitude $\epsilon \geq 0$ at initialization, and denote by~$\W_e (t)$ the end tensor (Equation~\eqref{eq:end_tensor}) at time $t \geq 0$ of optimization.
	Then, for any $r \in \{ 1, \ldots, R \}$ and time $t \geq 0$ at which $\normnoflex{ \tenp_{n = 1}^N \w_r^{n} (t) } > 0$:\footnote{
		When $\| \tenp_{n = 1}^N \w_r^{n} (t) \|$ is zero it may not be differentiable.
	}
	\begin{itemize}[leftmargin=3.5mm]
		\item If $\gamma_r (t) := \inprodnoflex{ - \nabla \L ( \W_e (t) ) }{ \tenp_{n = 1}^N \widehat{\w}_r^{n} (t) } \geq 0$, then:
		\be
		\begin{split}
			& \hspace{-3.2mm} \frac{d}{dt} \normnoflex{ \tenp_{n = 1}^N \w_r^{n} (t) } \leq N \gamma_r (t) ( \normnoflex{ \tenp_{n = 1}^N \w_r^{n} (t) }^{\frac{2}{N}} + \epsilon )^{N - 1} \\
			& \hspace{-3.2mm} \frac{d}{dt} \normnoflex{ \tenp_{n = 1}^N \w_r^{n} (t) } \geq N \gamma_r (t) \cdot \frac{ \normnoflex{ \tenp_{n = 1}^N \w_r^{n} (t) }^2 }{  \normnoflex{ \tenp_{n = 1}^N \w_r^{n} (t) }^{\frac{2}{N}} + \epsilon }
			\text{\,,}
		\end{split}
		\label{eq:dyn_fac_comp_norm_unbal_pos}
		\ee
		
		\item otherwise, if $\gamma_r (t)  < 0$, then:
		\be
		\begin{split}
			& \hspace{-3.2mm} \frac{d}{dt} \normnoflex{ \tenp_{n = 1}^N \w_r^{n} (t) } \geq N \gamma_r (t) ( \normnoflex{ \tenp_{n = 1}^N \w_r^{n} (t) }^{\frac{2}{N}} + \epsilon )^{N - 1} \\
			& \hspace{-3.2mm} \frac{d}{dt} \norm{ \tenp_{n = 1}^N \w_r^{n} (t) } \leq N \gamma_r (t) \cdot \frac{ \norm{ \tenp_{n = 1}^N \w_r^{n} (t) }^2 }{  \norm{ \tenp_{n = 1}^N \w_r^{n} (t) }^{\frac{2}{N}} + \epsilon }
			\text{\,,}
		\end{split}
		\label{eq:dyn_fac_comp_norm_unbal_neg}
		\ee
	\end{itemize}
	where $\widehat{\w}_r^{n} (t) := \w_r^{n} (t) / \normnoflex{ \w_r^{n} (t) }$ for $n = 1, \ldots, N$.
\end{theorem}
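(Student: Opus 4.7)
The plan is to derive a closed-form ODE for $s_r(t) := \normnoflex{\tenp_{n=1}^N \w_r^n(t)}$ and then sandwich it using the balancedness invariant from Lemma~\ref{lem:balancedness_conservation_body}. First, use the identity $s_r(t) = \prod_{n=1}^N \normnoflex{\w_r^n(t)}$ (which holds for outer products of vectors) and set $a_n(t) := \normnoflex{\w_r^n(t)}$ and $b_n(t) := a_n(t)^2$. The chain rule applied under gradient flow gives
$$\frac{d}{dt} b_n(t) \,=\, -2\inprodnoflex{\nabla\L(\W_e(t))}{\tenp_{n'=1}^N \w_r^{n'}(t)} \,=\, 2\,\gamma_r(t)\, s_r(t),$$
where the second equality uses $\tenp_{n'} \widehat\w_r^{n'}(t) = (\tenp_{n'} \w_r^{n'}(t))/s_r(t)$ inside the definition of $\gamma_r(t)$. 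Since this right-hand side is independent of $n$ (this is precisely how Lemma~\ref{lem:balancedness_conservation_body} is proved), dividing by $2 a_n(t)$ and differentiating $s_r = \prod_n a_n$ yields
$$\frac{d s_r(t)}{dt} \,=\, s_r(t) \sum_{n=1}^N \frac{\dot a_n(t)}{a_n(t)} \,=\, \gamma_r(t) \sum_{n=1}^N \prod_{n'\neq n} b_{n'}(t).$$
All steps are legitimate because the hypothesis $s_r(t) > 0$ forces $a_n(t) > 0$ for every $n$.

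Next I would exploit the balancedness invariant. Fix $t$, drop the time argument, and let $m := \min_n b_n$, $M := \max_n b_n$. Lemma~\ref{lem:balancedness_conservation_body} together with Definition~\ref{def:unbalancedness_magnitude} gives $M - m \leq \epsilon$, and combined with $\prod_n b_n = s_r^2$ the elementary inequalities $m^N \leq s_r^2 \leq M^N$ yield the sandwich
$$s_r^{2/N} \,\leq\, M \,\leq\, m + \epsilon \,\leq\, s_r^{2/N} + \epsilon.$$
Using $\prod_{n'\neq n} b_{n'} = s_r^2 / b_n$ with $b_n \leq M \leq s_r^{2/N}+\epsilon$ for the lower bound, and $\prod_{n'\neq n} b_{n'} \leq M^{N-1}$ with $M \leq s_r^{2/N}+\epsilon$ for the upper bound, I obtain
$$N \cdot \frac{s_r^2}{s_r^{2/N} + \epsilon} \,\leq\, \sum_{n=1}^N \prod_{n'\neq n} b_{n'} \,\leq\, N \bigl( s_r^{2/N} + \epsilon \bigr)^{N-1}.$$

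Finally, multiplying this sandwich by $\gamma_r(t)$ and substituting into the ODE from the first step yields \eqref{eq:dyn_fac_comp_norm_unbal_pos} when $\gamma_r(t) \geq 0$, and \eqref{eq:dyn_fac_comp_norm_unbal_neg} when $\gamma_r(t) < 0$ (where multiplication by a negative number flips both inequalities). I expect the crux to be the sandwich $s_r^{2/N} \leq M \leq s_r^{2/N} + \epsilon$: this is the single place where $\epsilon$-near-balancedness gets converted into the additive $\epsilon$ featured in the theorem statement. The remaining steps are a routine chain-rule bookkeeping and direct substitution.
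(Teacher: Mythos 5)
Your proposal is correct and takes essentially the same approach as the paper: derive $\frac{d}{dt}\normnoflex{\tenp_{n}\w_r^n(t)} = \gamma_r(t)\sum_n\prod_{n'\neq n}\normnoflex{\w_r^{n'}(t)}^2$ from Lemma~\ref{lem:balancedness_conservation_body}'s underlying identity, then bound each $\normnoflex{\w_r^{n}(t)}^2$ by $\normnoflex{\tenp_{n'}\w_r^{n'}(t)}^{2/N}+\epsilon$ via conservation of unbalancedness. Your $m$/$M$ sandwich and the paper's "$\normnoflex{\w_r^n}^2\leq\min_{n'}\normnoflex{\w_r^{n'}}^2+\epsilon$" argument are the same inequality presented slightly differently, and your logarithmic-derivative route to the ODE is an equivalent rewriting of the paper's product-rule computation.
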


\begin{proof}[Proof sketch (for proof see Subappendix~\ref{app:proofs:dyn_fac_comp_norm_unbal})]
	Differentia- ting a component's norm with respect to time, we obtain $\frac{d}{dt} \normnoflex{ \tenp_{n = 1}^N \w_r^{n} (t) } = \gamma_r (t) \cdot \sum_{n = 1}^N \prod_{n' \neq n} \normnoflex{ \w_r^{n'} (t) }^2$.
	The desired bounds then follow from using conservation of unbalancedness magnitude (as implied by Lemma~\ref{lem:balancedness_conservation_body}), and showing that $\normnoflex{ \w_r^{n'} (t) }^2 \leq \normnoflex{ \tenp_{n = 1}^N \w_r^n (t) }^{ 2 / N } + \epsilon$ for all $t \geq 0$ and $n' \in  \{ 1, \ldots, N \}$.
\end{proof}

Theorem~\ref{thm:dyn_fac_comp_norm_unbal} shows that when unbalancedness magnitude at initialization (denoted~$\epsilon$) is small, the evolution rates of component norms are roughly proportional to their size exponentiated by $2 - 2 / N$, where $N$ is the order of the tensor factorization.
Consequently, component norms are subject to a momentum-like effect, by which they move slower when small and faster when large.
This suggests that when initialized near zero, components tend to remain close to the origin, and then, upon reaching a critical threshold, quickly grow until convergence, creating an incremental learning effect that yields implicit regularization towards low tensor rank.
This phenomenon is used in Section~\ref{sec:rank} to formally prove (under certain conditions) implicit tensor rank minimization, and is demonstrated empirically in Section~\ref{sec:experiments}.

When the unbalancedness magnitude at initialization is exactly zero, our dynamical characterization takes on a particularly lucid form.

\begin{corollary}
	\label{cor:dyn_fac_comp_norm_balanced}
	Assume unbalancedness magnitude zero at initialization.
	Then, with notations of Theorem~\ref{thm:dyn_fac_comp_norm_unbal}, for any $r \in \{ 1, \ldots, R \}$, the norm of the $r$'th component evolves by:
	\vspace{-3.5mm}
	\be
	\frac{d}{dt} \norm{ \tenp_{n = 1}^N \w_r^{n} (t) } = N \gamma_r (t) \cdot \norm{ \tenp_{n = 1}^N \w_r^{n} (t) }^{2 - \frac{2}{N}} 
	\text{\,,}
	\label{eq:dyn_fac_comp_norm}
	\ee
	where by convention $\widehat{\w}_r^{n} (t) = 0$ if $\w_r^{n} (t) = 0$.
\end{corollary}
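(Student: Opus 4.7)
The plan is to derive Corollary~\ref{cor:dyn_fac_comp_norm_balanced} directly from Theorem~\ref{thm:dyn_fac_comp_norm_unbal} by observing that the upper and lower bounds pinch to a single value when $\epsilon = 0$. Concretely, substituting $\epsilon = 0$ into $(\normnoflex{ \tenp_{n = 1}^N \w_r^{n} (t) }^{2/N} + \epsilon)^{N-1}$ yields $\normnoflex{ \tenp_{n = 1}^N \w_r^{n} (t) }^{2 - 2/N}$, and the same substitution into $\normnoflex{ \tenp_{n = 1}^N \w_r^{n} (t) }^2 / (\normnoflex{ \tenp_{n = 1}^N \w_r^{n} (t) }^{2/N} + \epsilon)$ yields the identical expression. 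Hence the two inequalities in~\eqref{eq:dyn_fac_comp_norm_unbal_pos} (respectively in~\eqref{eq:dyn_fac_comp_norm_unbal_neg}) collapse to the equality claimed in~\eqref{eq:dyn_fac_comp_norm}, simultaneously covering both possible signs of $\gamma_r (t)$.

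The only wrinkle is the degenerate case $\normnoflex{ \tenp_{n = 1}^N \w_r^{n} (t) } = 0$, which Theorem~\ref{thm:dyn_fac_comp_norm_unbal} explicitly excludes. Here I would invoke Lemma~\ref{lem:balancedness_conservation_body} with zero unbalancedness to conclude $\norm{ \w_r^n (t) }$ is the same for every $n$, so $\normnoflex{ \tenp_{n = 1}^N \w_r^n (t) } = 0$ forces $\w_r^n (t) = 0$ for all $n$. The gradient flow equation~\eqref{eq:cp_gf} then gives $\tfrac{d}{dt} \w_r^n (t) = 0$, since $\nabla_{\w_r^n} \phi$ is a contraction of $\nabla \L (\W_e (t))$ against the outer product of the other $\w_r^{n'} (t)$, which vanishes. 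Consequently the component remains at the origin, $\tfrac{d}{dt} \normnoflex{ \tenp_{n = 1}^N \w_r^n (t) } = 0$, and the right-hand side of~\eqref{eq:dyn_fac_comp_norm} is likewise zero under the convention $\widehat{\w}_r^n (t) = 0$; so the equality holds trivially in this case as well.

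An alternative, self-contained route would bypass Theorem~\ref{thm:dyn_fac_comp_norm_unbal} entirely: first apply Lemma~\ref{lem:balancedness_conservation_body} to obtain $\norm{ \w_r^n (t) } = \normnoflex{ \tenp_{n = 1}^N \w_r^n (t) }^{1/N}$ for every $n$; then differentiate $\normnoflex{ \tenp_{n = 1}^N \w_r^n (t) } = \prod_{n = 1}^N \norm{ \w_r^n (t) }$ via the product rule, substitute the gradient-flow expression for $\tfrac{d}{dt} \w_r^n (t)$, and recognize that $\inprodnoflex{ -\nabla \L (\W_e (t)) }{ \tenp_{n = 1}^N \w_r^n (t) } = \gamma_r (t) \prod_{n = 1}^N \norm{ \w_r^n (t) }$. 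Balancedness then turns the resulting sum of $N$ identical terms into a factor of $N / \normnoflex{ \tenp_{n = 1}^N \w_r^n (t) }^{2/N}$, yielding~\eqref{eq:dyn_fac_comp_norm}. No step here constitutes a real obstacle~---~the only subtlety is handling the origin as described above, and everything else is a direct consequence of the pinching phenomenon already built into Theorem~\ref{thm:dyn_fac_comp_norm_unbal}.
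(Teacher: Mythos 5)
Your argument for the main case $\normnoflex{ \tenp_{n = 1}^N \w_r^{n} (t) } > 0$ is exactly the paper's: substituting $\epsilon = 0$ into Theorem~\ref{thm:dyn_fac_comp_norm_unbal} makes the upper and lower bounds coincide, giving the equality in~\eqref{eq:dyn_fac_comp_norm}.

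The degenerate case is where the approaches diverge. The paper invokes Lemma~\ref{lem:balanced_param_vector_norm_no_sign_change}, which establishes a global dichotomy for balanced initialization: each component is \emph{either} identically zero for all $t \geq 0$ \emph{or} strictly positive for all $t \geq 0$ (the positive case relies on Lemma~\ref{lem:ivp_no_sign_change}, and the zero case on ODE uniqueness). With this dichotomy, $\normnoflex{ \tenp_{n = 1}^N \w_r^n (t) } = 0$ at one time forces the component to be identically zero, so both sides of~\eqref{eq:dyn_fac_comp_norm} vanish as functions of time and there is no differentiability issue to worry about.

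Your route is more local: from $\w_r^n(t) = 0$ for all $n$ (correct, by balancedness) you deduce $\tfrac{d}{dt}\w_r^n(t) = 0$ (correct, via Lemma~\ref{lem:cp_gradient}), and then jump to ``the component remains at the origin.'' That jump is not justified by what precedes it~---~a zero velocity at a single instant does not, by itself, pin down the trajectory for $t' > t$; one needs ODE uniqueness (which is precisely what Lemma~\ref{lem:balanced_param_vector_norm_no_sign_change} provides). Fortunately, the global claim is also unnecessary for your conclusion: from $\w_r^n(t') = o(|t'-t|)$ for each $n$ (first-order Taylor with zero derivative) you get $\prod_n \normnoflex{ \w_r^n(t') } = o(|t'-t|^N)$, and since $N \geq 3$ the derivative of $\normnoflex{ \tenp_{n=1}^N \w_r^n(t') }$ at $t$ exists and equals zero. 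So your final conclusion is right, but either spell out this local Taylor estimate (the product of norms is not obviously differentiable where it vanishes, as the footnote to Theorem~\ref{thm:dyn_fac_comp_norm_unbal} warns) or replace the overstated ``remains at the origin'' step by the paper's Lemma~\ref{lem:balanced_param_vector_norm_no_sign_change}. Your alternative self-contained route is sound as well~---~it is essentially the proof of Theorem~\ref{thm:dyn_fac_comp_norm_unbal} specialized to $\epsilon = 0$, so it buys nothing beyond avoiding the reference.
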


\begin{proof}[Proof sketch (for proof see Subappendix~\ref{app:proofs:dyn_fac_comp_norm_balanced})]
	If the time~$t$ is such that $\normnoflex{ \tenp_{n = 1}^N \w_r^{n} (t) } > 0$, Equation~\eqref{eq:dyn_fac_comp_norm} readily follows from applying Theorem~\ref{thm:dyn_fac_comp_norm_unbal} with $\epsilon = 0$.
	For the case where $\normnoflex{ \tenp_{n = 1}^N \w_r^{n} (t) } = 0$, we show that the component $\tenp_{n = 1}^N \w_r^{n} (t)$ must be identically zero throughout, hence both sides of Equation~\eqref{eq:dyn_fac_comp_norm} are equal to zero.
\end{proof}

It is worthwhile highlighting the relation to matrix factorization.
There, an implicit bias towards low rank emerges from incremental learning dynamics similar to above, with singular values standing in place of component norms.
In fact, the dynamical characterization given in Corollary~\ref{cor:dyn_fac_comp_norm_balanced} is structurally identical to the one provided by Theorem~3 in \citet{arora2019implicit} for singular values of a matrix factorization.
We thus obtained a generalization from matrices to tensors, notwithstanding the notorious difficulty often associated with the latter (\cf~\citet{hillar2013most}),

	\section{Implicit Tensor Rank Minimization} \label{sec:rank}

In this section we employ the dynamical characterization derived in Section~\ref{sec:dynamic} to theoretically establish implicit regularization towards low tensor rank.
Specifically, we prove that under certain technical conditions, arbitrarily small initialization leads tensor factorization to follow a trajectory of rank one tensors for an arbitrary amount of time or distance.
As a corollary, we obtain that if the tensor completion problem admits a rank one solution, and all rank one trajectories uniformly converge to it, tensor factorization with infinitesimal initialization will converge to it as well.
Our analysis generalizes to tensor factorization recent results developed in \citet{li2021towards} for matrix factorization.
As typical in transitioning from matrices to tensors, this generalization entails significant challenges necessitating use of fundamentally different techniques.

For technical reasons, our focus in this section lies on the Huber loss from robust statistics~\cite{huber1964robust}, given by:
\be
\hspace{-0.125mm}
\ell_h : \R \,{\to}\, \R_{\geq 0}
~ , ~
\ell_h ( z ) \,{:=} \begin{cases}
	\frac{1}{2} z^2 & \hspace{-2.5mm} , \abs{z} < \delta_h \\
	\delta_h (\abs{z} - \frac{1}{2} \delta_h) & \hspace{-2.5mm} , \text{otherwise}
\end{cases}
\hspace{-0.5mm}\text{,}\hspace{-2mm}
\label{eq:huber_loss}
\ee
where $\delta_h > 0$, referred to as the transition point of the loss, is predetermined.
Huber loss is often used as a differentiable surrogate for $\ell_1$~loss, in which case~$\delta_h$ is chosen to be small.
We will assume it is smaller than observed tensor entries:\note{
Note that this entails assumption of non-zero observations.
}
\begin{assumption}
\label{assump:delta_h}
$\delta_h < | y_{i_1, \ldots, i_N} | ~ , \forall (i_1, \ldots, i_N) \in \Omega$.
\end{assumption}

We will consider an initialization $\{ \aaa_r^n \in \R^{d_n} \}_{r = 1}^R\hspace{0mm}_{n = 1}^N$ for the weight vectors of the tensor factorization, and will scale this initialization towards zero.
In line with infinitesimal initializations being captured by unbalancedness magnitude zero (\cf~Section~\ref{sec:dynamic}), we assume that this is the case:
\begin{assumption}
\label{assump:a_balance}
The initialization $\{ \aaa_r^n \}_{r = 1}^R\hspace{0mm}_{n = 1}^N$ has unbalancedness magnitude zero.
\end{assumption}
We further assume that within $\{ \aaa_r^n \}_{r , n}$ there exists a leading component (subset $\{ \aaa_{\bar{r}}^n \}_n$), in the sense that it is larger than others, while having positive projection on the attracting force at the origin, \ie~on minus the gradient of the loss~$\L ( \cdot )$ (Equation~\eqref{eq:tc_loss}) at zero:
\begin{assumption}
\label{assump:a_lead_comp}
There exists $\bar{r} \in \{ 1, \ldots, R \}$ such~that:
\be
\hspace{-6mm}
\begin{split}
	& \inprod{ - \nabla \L ( 0 ) }{ \tenp_{n = 1}^N \widehat{\aaa}_{\bar{r}}^{n} } > 0 \text{\,,} \\
	& \normnoflex{ \aaa_{\bar{r}}^n } > \normnoflex{ \aaa_{r}^n } {\cdot} \hspace{-1mm} \left ( \tfrac{ \norm{ \nabla \L ( 0 ) } }{  \inprod{ - \nabla \L ( 0 ) }{  \tenp_{n = 1}^N \widehat{\aaa}_{ \bar{r} }^{n} }} \right )^{ 1 / (N - 2) } , \forall r \neq \bar{r} \text{\,,}
\end{split}
\label{eq:assump_components_sep_at_init}
\ee
where $\widehat{\aaa}_{\bar{r}}^{n} := \aaa_{\bar{r}}^{n} / \normnoflex{ \aaa_{\bar{r}}^{n} }$ for $n = 1, \ldots, N$.
\end{assumption}

Let $\alpha > 0$, and suppose we run gradient flow on the tensor factorization (see Section~\ref{sec:tf}) starting from the initialization $\{ \aaa_r^n \}_{r , n}$ scaled by~$\alpha$.
That is, we set:
\[
\w_r^n (0) = \alpha \cdot \aaa_r^n \quad , ~r = 1, \ldots, R ~,n = 1, \ldots, N
\text{\,,}
\]
and let $\{ \w_r^n ( t ) \}_{r , n}$ evolve per Equation~\eqref{eq:cp_gf}.
Denote by $\W_e (t)$, $t \geq 0$, the trajectory induced on the end tensor (Equation~\eqref{eq:end_tensor}).
We will study the evolution of this trajectory through time.
A hurdle that immediately arises is that, by the dynamical characterization of Section~\ref{sec:dynamic}, when the initialization scale $\alpha$ tends to zero (regime of interest), the time it takes $\W_e (t)$ to escape the origin grows to infinity.\note{
	To see this, divide both sides of Equation~\eqref{eq:dyn_fac_comp_norm} from Corollary~\ref{cor:dyn_fac_comp_norm_balanced} by $\normnoflex{ \tenp_{n = 1}^N \w_r^{n} (t) }^{2 - 2 / N}$, and integrate with respect to~$t$.
	It follows that the norm of a component at any fixed time tends to zero as initialization scale $\alpha$ decreases.
	This implies that for any $D > 0$, when taking $\alpha \to 0$, the time required for a component to reach norm~$D$ grows to infinity.
}
We overcome this hurdle by considering a \emph{reference sphere}~---~a sphere around the origin with sufficiently small radius:
\be
\S := \{ \W \in \R^{d_1, \ldots, d_N} : \| \W \| = \rho \}
\text{\,,}
\label{eq:ref_sphere}
\ee
where $\rho \in ( 0 , \min_{(i_1, \ldots, i_N) \in \Omega} \abs{ y_{i_1, \ldots, i_N} } - \delta_h )$ can be chosen arbitrarily.
With the reference sphere $\S$ at hand, we define a time-shifted version of the trajectory $\W_e (t)$, aligning $t = 0$ with the moment at which $\S$ is reached:
\be
\overline{\W}_e ( t ) := \W_e \big( t + \inf \{ t' \geq 0 : \W_e ( t' ) \in \S \} \big)
\label{eq:end_tensor_time_shift}
\text{\,,}
\ee
where by definition $\inf \{ t' \geq 0 : \W_e ( t' ) \in \S \} = 0$ if $\W_e (t)$ does not reach~$\S$.
Unlike the original trajectory~$\W_e (t)$, the shifted one $\overline{\W}_e ( t )$ disregards the process of escaping the origin, and thus admits a concrete meaning to the time elapsing from optimization commencement.

We will establish proximity of $\overline{\W}_e ( t )$ to trajectories of rank one tensors.
We say that $\W_1 ( t ) \in \R^{d_1, \ldots, d_N}$, $t \geq 0$, is a \emph{rank one trajectory}, if it coincides with some trajectory of an end tensor in a one-component factorization, \ie~if there exists an initialization for gradient flow over a tensor factorization with $R = 1$ components, leading the induced end tensor to evolve by~$\W_1 ( t )$.
If the latter initialization has unbalancedness magnitude zero (\cf~Definition~\ref{def:unbalancedness_magnitude}), we further say that $\W_1 ( t )$ is a \emph{balanced rank one trajectory}.\note{
	Note that the definitions of rank one trajectory and balanced rank one trajectory allow for $\W_1 ( t )$ to have rank zero (\ie~to be equal to zero) at some or all times~$t \geq 0$.
}

We are now in a position to state our main result, by which arbitrarily small initialization leads tensor factorization to follow a (balanced) rank one trajectory for an arbitrary amount of time or distance.
\begin{theorem}
\label{thm:approx_rank_1}
Under Assumptions \ref{assump:delta_h}, \ref{assump:a_balance} and~\ref{assump:a_lead_comp}, for any distance from origin $D > 0$, time duration $T > 0$, and degree of approximation $\epsilon \in ( 0 , 1 )$, if initialization scale $\alpha$ is sufficiently small,\note{
Hiding problem-dependent constants, an initialization scale of $\epsilon D^{-1}  \exp ( - \OO (D^{2} T) )$ suffices.
Exact constants are specified at the beginning of the proof in Subappendix~\ref{app:proofs:approx_rank_1}.
}
then:
\emph{(i)}~$\W_e ( t )$ reaches the reference sphere~$\S$;
and
\emph{(ii)}~there exists a balanced rank one trajectory $\W_1 ( t )$ emanating from~$\S$, such that $\| \overline{\W}_e ( t ) - \W_1 ( t ) \| \leq \epsilon$ at least until $t \geq T$ or $\| \overline{\W}_e ( t ) \| \geq D$.
\end{theorem}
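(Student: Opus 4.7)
The strategy is to split the dynamics at the hitting time $t_\alpha := \inf\{t \ge 0 : \W_e(t)\in \S\}$ and control each phase separately. The key simplifying fact is that whenever $\|\W_e\|\le\rho$, every observed residual has magnitude at least $\min_{(i_1,\ldots,i_N)\in\Omega}|y_{i_1,\ldots,i_N}| - \rho > \delta_h$, so the Huber loss sits in its linear regime on every observed entry and $\nabla\L(\W) \equiv G := \nabla\L(0)$ throughout the open ball bounded by~$\S$.

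\textbf{Phase I (reaching the sphere).} Inside this ball the $R$ components of the factorization decouple: each tuple $(\w_r^1(t),\ldots,\w_r^N(t))$ evolves under an autonomous system driven solely by~$G$. Combining this with Corollary~\ref{cor:dyn_fac_comp_norm_balanced} (applicable by Assumption~\ref{assump:a_balance}) yields, for each~$r$, the explicitly solvable scalar ODE
\[
\frac{d}{dt}u_r(t) = N\gamma_r(t)\,u_r(t)^{2-2/N}, \qquad u_r(t) := \bigl\|\textstyle\bigotimes_n\w_r^n(t)\bigr\|,
\]
with $\gamma_r(t) = \langle -G, \bigotimes_n\widehat{\w}_r^n(t)\rangle \in [-\|G\|,\|G\|]$. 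A short direction-tracking argument for the leading component shows that the autonomous flow can only \emph{increase} the alignment of $\bigotimes_n\widehat{\w}_{\bar r}^n$ with the best rank-one approximation of~$G$, so $\gamma_{\bar r}(t)\ge\gamma_{\bar r}(0) > 0$ throughout. Explicit integration gives $t_\alpha \asymp (\alpha^N \|\aaa_{\bar r}^1\|^N)^{-(1-2/N)}/[(N-2)\gamma_{\bar r}(0)]$, and the worst-case upper bound $\gamma_r(t)\le \|G\|$, combined with the strict inequality in Assumption~\ref{assump:a_lead_comp}, forces $u_r(t_\alpha) \le C_r\,\alpha^N \to 0$ for every $r\neq \bar r$. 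Hence $\W_e$ reaches~$\S$ in finite time, and $\overline{\W}_e(0) = \bigotimes_n\w_{\bar r}^n(t_\alpha) + O(\alpha^N)$ is a rank-one tensor plus a vanishing perturbation.

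\textbf{Phase II (tracking a rank-one trajectory).} Let $\W_1(t)$ be the balanced rank-one trajectory produced by gradient flow on a one-component factorization whose time-zero weights agree, up to a rescaling onto $\S$, with $\{\w_{\bar r}^n(t_\alpha)\}_n$; Assumption~\ref{assump:a_balance} ensures these weights are balanced. Both $\overline{\W}_e$ and $\W_1$ satisfy evolution equations on the end-tensor level driven by the negative Huber gradient contracted with the surrounding weight vectors, and the resulting vector field is globally $C^1$ with Lipschitz constant $L_D = \OO(D^{2-2/N})$ on $\{\|\W\|\le D\}$. Treating the non-leading components as a driving perturbation of size $O(\alpha^N)$ whose growth is controlled by the same scalar ODE as in Phase~I, a Gr\"onwall comparison gives
\[
\|\overline{\W}_e(t) - \W_1(t)\| \le \bigl(\|\overline{\W}_e(0) - \W_1(0)\| + C_D\,T\,\sup\nolimits_{s\le t}\|\text{non-leading}(s)\|\bigr)e^{L_D t},
\]
valid on $[0,\min\{T,\tau_D\}]$ with $\tau_D$ the first exit time from $\{\|\W\|\le D\}$. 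Since both the initial gap and the perturbation are $O(\alpha^N)$, choosing $\alpha$ small enough that this bound is at most~$\epsilon$ yields the advertised scale $\alpha\lesssim \epsilon D^{-1}\exp(-\OO(D^2 T))$.

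\textbf{Main obstacle.} The delicate step is Phase~I: one must simultaneously bound $\gamma_{\bar r}(t)$ away from zero over the long escape time $t_\alpha \sim \alpha^{-(N-2)}$ \emph{and} upper-bound every non-leading norm under the worst-case coefficient $\|G\|$. Assumption~\ref{assump:a_lead_comp} is calibrated precisely so that the explicit integration of the two comparison ODEs — one with rate $N\gamma_{\bar r}(0)$ for $u_{\bar r}$, and one with rate $N\|G\|$ for $u_r$, $r\neq \bar r$ — shows that $u_{\bar r}$ crosses $\rho$ strictly before any non-leading norm grows past a threshold that vanishes with $\alpha$. Phase~II is then comparatively routine ODE perturbation theory on a bounded domain.
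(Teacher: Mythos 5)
Your overall two--phase decomposition matches the paper's (escape to the reference sphere while $\nabla\L_h$ is frozen at $\nabla\L_h(0)$, then a Gr\"onwall comparison to a reference rank-one trajectory), and your Phase~I analysis is essentially the same: the components decouple under the constant gradient, you integrate the scalar ODE from Corollary~\ref{cor:dyn_fac_comp_norm_balanced}, use monotonicity of $\gamma_{\bar r}(t)$ to lower-bound the leading component and the Cauchy--Schwartz bound $\gamma_r(t)\le\|\nabla\L_h(0)\|$ to upper-bound the rest, and Assumption~\ref{assump:a_lead_comp} delivers the $O(\alpha^N)$ conclusion. (The paper proves monotonicity of $\gamma_r(t)$ not by an ``alignment with the best rank-one approximation of $G$'' argument but by a direct computation showing $\frac{d}{dt}\gamma_r(t)$ is a weighted sum of terms of the form $\|\frac{d}{dt}\w_r^n\|^2 - (\frac{d}{dt}\|\w_r^n\|)^2 \ge 0$; your heuristic is plausible but is not what is actually established, and you would need to supply a proof.)

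There is, however, a genuine gap in Phase~II. You run the Gr\"onwall comparison ``on the end-tensor level,'' asserting that $\overline{\W}_e$ and $\W_1$ evolve under a Lipschitz vector field on $\{\|\W\|\le D\}$ with the non-leading components acting as a driving perturbation. But the end tensor of an $R$-component factorization does \emph{not} evolve by an autonomous ODE: $\frac{d}{dt}\W_e$ depends on the individual weight vectors $\{\w_r^n\}$, and distinct factorizations of the same $\W_e$ flow differently, so there is no well-defined vector field on end-tensor space to which a Lipschitz/Gr\"onwall argument applies. Moreover, your claim that the non-leading components' growth in Phase~II ``is controlled by the same scalar ODE as in Phase~I'' is false once $\W_e$ leaves the frozen-gradient region: the coefficient $\gamma_r(t)$ in Corollary~\ref{cor:dyn_fac_comp_norm_balanced} is no longer bounded by the fixed $\|\nabla\L_h(0)\|$, and nothing a priori prevents the non-leading components from awakening. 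The paper avoids both problems by lifting the comparison to the weight-vector level: it compares $\{\widebar\w_r^n(t)\}_{r,n}$ to the trajectory $\{\widetilde\w_r^n(t)\}_{r,n}$ of the \emph{same} $R$-component gradient flow, where $\widetilde\w_1^n(0)$ is the rescaled leading component and $\widetilde\w_r^n\equiv 0$ for $r\ge 2$ (this is a valid gradient-flow trajectory by Lemma~\ref{lem:width_R_equivalent_to_larger_width_with_zero_init}). A single Gr\"onwall bound (Lemma~\ref{lem:gf_smooth_dist_bound}) using smoothness of $\phi_h$ over a bounded parameter ball (Lemma~\ref{lem:huber_cp_objective_is_smooth_over_bounded_domain}) then simultaneously controls both the tracking of the leading component and the growth of the non-leading ones (since the reference has them identically zero), and Lemma~\ref{lem:param_dist_to_end_to_end_dist} translates the parameter-space bound into an end-tensor bound. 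To repair your Phase~II you should replace the end-tensor perturbation argument with this weight-vector comparison; ``routine ODE perturbation theory on a bounded domain'' understates what is actually required.
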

\begin{proof}[Proof sketch (for proof see Subappendix~\ref{app:proofs:approx_rank_1})]
Using the dynamical characterization from Section~\ref{sec:dynamic} (Lemma~\ref{lem:balancedness_conservation_body} and Corollary~\ref{cor:dyn_fac_comp_norm_balanced}), and the fact that $\nabla \L (\cdot)$ is locally constant around the origin, we establish that 
\emph{(i)} $\W_e (t)$ reaches the reference sphere $\S$; 
and 
\emph{(ii)} at that time, the norm of the $\bar{r}$'th component is of constant scale (independent of $\alpha$), while the norms of all other components are $\OO (\alpha^N)$.
Thus, taking $\alpha$ towards zero leads $\W_e (t)$ to arrive at~$\S$ while being arbitrarily close to the initialization of a balanced rank one trajectory~---~$\W_1 (t)$.
Since the objective is locally smooth, this ensures $\overline{\W}_e ( t )$ is within distance~$\epsilon$ from $\W_1 (t)$ for an arbitrary amount of time or distance.
That is, if $\alpha$ is sufficiently small, $\| \overline{\W}_e ( t ) - \W_1 ( t ) \| \leq \epsilon$ at least until $t \geq T$ or $\| \overline{\W}_e ( t ) \| \geq D$.
\end{proof}

As an immediate corollary of Theorem~\ref{thm:approx_rank_1}, we obtain that if all balanced rank one trajectories uniformly converge to a global minimum, tensor factorization with infinitesimal initialization will do so too.
In particular, its implicit regularization will direct it towards a solution with tensor~rank~one.
\begin{corollary}
\label{corollary:converge_rank_1}
Assume the conditions of Theorem~\ref{thm:approx_rank_1} (Assumptions \ref{assump:delta_h}, \ref{assump:a_balance} and~\ref{assump:a_lead_comp}), and in addition, that all balanced rank one trajectories emanating from~$\S$ converge to a tensor $\W^* \in \R^{d_1, \ldots, d_N}$ uniformly, in the sense that they are all confined to some bounded domain, and for any $\epsilon > 0$, there exists a time~$T$ after which they are all within distance $\epsilon$ from~$\W^*$.
Then, for any $\epsilon > 0$, if initialization scale $\alpha$ is sufficiently small, there exists a time~$T$ for which $\| \W_e ( T ) - \W^* \| \leq \epsilon$.
\end{corollary}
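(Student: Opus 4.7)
The plan is to reduce the corollary to Theorem~\ref{thm:approx_rank_1} by choosing the approximation parameters so that the shifted tensor factorization trajectory is close to \emph{some} balanced rank one trajectory precisely at a time when all such trajectories have already come close to~$\W^*$.

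Fix $\epsilon > 0$. By the uniform convergence assumption, all balanced rank one trajectories emanating from $\S$ lie in a bounded domain; let $R_0 > 0$ be such that every such trajectory is contained in $\{\W : \|\W\| \leq R_0\}$. By the same assumption (applied with tolerance $\epsilon/2$), there exists a time $T_1 > 0$ such that every balanced rank one trajectory $\W_1(t)$ emanating from $\S$ satisfies $\| \W_1(t) - \W^* \| \leq \epsilon/2$ for all $t \geq T_1$. Set $D := R_0 + 1$ and apply Theorem~\ref{thm:approx_rank_1} with this $D$, time duration $T_1$, and approximation degree $\min(\epsilon/2, 1/2)$. This furnishes a sufficiently small initialization scale $\alpha > 0$ for which $\W_e(t)$ reaches the reference sphere $\S$, and there exists a balanced rank one trajectory $\W_1(t)$ emanating from $\S$ with $\| \overline{\W}_e(t) - \W_1(t) \| \leq \epsilon/2$ at least until $t \geq T_1$ or $\| \overline{\W}_e(t) \| \geq D$.

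The key observation is that the second termination condition never triggers before $t = T_1$. Indeed, for any $t$ in the guaranteed interval we have by the triangle inequality
\[
\| \overline{\W}_e(t) \| \leq \| \W_1(t) \| + \| \overline{\W}_e(t) - \W_1(t) \| \leq R_0 + 1/2 < D,
\]
where we used that $\W_1(t)$ lies in the bounded domain. Hence the approximation persists all the way to $t = T_1$, and at that time
\[
\| \overline{\W}_e(T_1) - \W^* \| \leq \| \overline{\W}_e(T_1) - \W_1(T_1) \| + \| \W_1(T_1) - \W^* \| \leq \epsilon/2 + \epsilon/2 = \epsilon.
\]

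Unwinding the time shift in Equation~\eqref{eq:end_tensor_time_shift}, define $T := T_1 + \inf\{ t' \geq 0 : \W_e(t') \in \S \}$, which is finite by the guarantee that $\W_e(t)$ reaches $\S$. Then $\W_e(T) = \overline{\W}_e(T_1)$, yielding $\| \W_e(T) - \W^* \| \leq \epsilon$ as required. The only substantive step is the bounded-domain argument that prevents the early-termination clause $\| \overline{\W}_e(t) \| \geq D$ from firing; everything else is a direct assembly of Theorem~\ref{thm:approx_rank_1} and the uniform convergence hypothesis via the triangle inequality.
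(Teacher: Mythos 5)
Your proof is correct and follows essentially the same route as the paper's: apply Theorem~\ref{thm:approx_rank_1} with a distance parameter one unit larger than the radius bounding all balanced rank one trajectories, observe that the distance-termination clause can then never fire before the target time (the paper phrases this as a contradiction at the first escape time, you phrase it as a direct bound, but they are the same argument), and finish with the triangle inequality. Your explicit unwinding of the time shift via Equation~\eqref{eq:end_tensor_time_shift} is a small extra detail the paper leaves implicit, but the substance is identical.
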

\begin{proof}[Proof sketch (for proof see Subappendix~\ref{app:proofs:converge_rank_1})]
Let $T' > 0$ be a time at which all balanced rank one trajectories that emanated from $\S$ are within distance~$\epsilon / 2$ from $\W^*$.
By Theorem~\ref{thm:approx_rank_1}, if $\alpha$ is sufficiently small, $\overline{\W}_e ( t )$~is guaranteed to be within distance~$\epsilon / 2$ from a balanced rank one trajectory that emanated from~$\S$, at least until time~$T'$.
Recalling that $\overline{\W}_e ( t )$ is a time-shifted version of~$\W_e ( t )$, the desired result follows from the triangle inequality.
\end{proof}

	\section{Experiments} \label{sec:experiments}

In this section we present our experiments.
Subsection~\ref{sec:experiments:dyn} corroborates our theoretical analyses (Sections~\ref{sec:dynamic} and~\ref{sec:rank}), evaluating tensor factorization (Section~\ref{sec:tf}) on synthetic low (tensor) rank tensor completion problems.
Subsection~\ref{sec:experiments:tensor_rank_complexity} explores tensor rank as a measure of complexity, examining its ability to capture the essence of standard datasets.
For brevity, we defer a description of implementation details, as well as some experiments, to Appendix~\ref{app:experiments}.

\subsection{Dynamics of Learning}
\label{sec:experiments:dyn}

\begin{figure*}
	\vspace{-4mm}
	\begin{center}
		\subfloat{
			\includegraphics[width=0.23\textwidth]{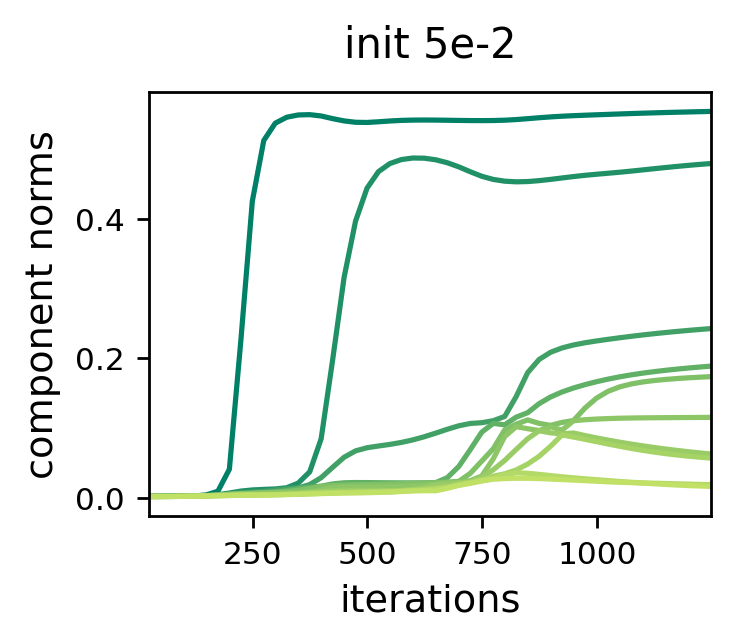}
		}
		\subfloat{
			\includegraphics[width=0.23\textwidth]{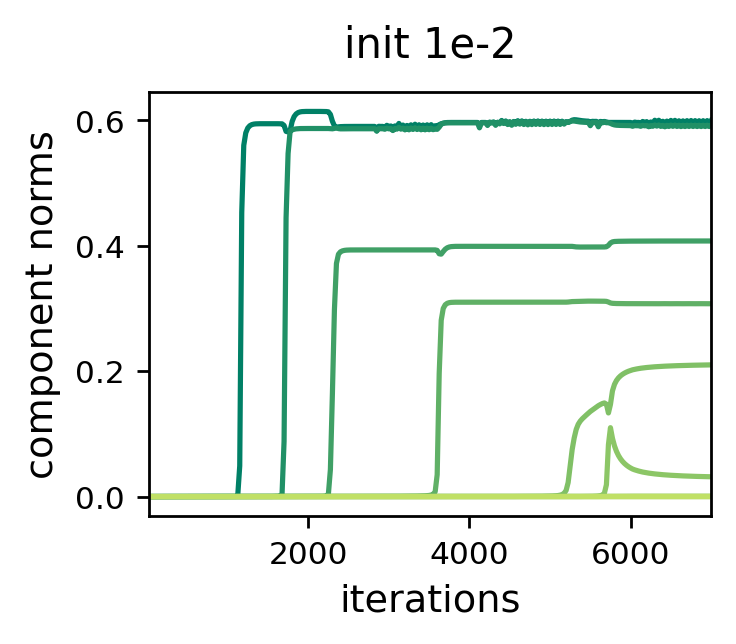}
		}
		\subfloat{
			\includegraphics[width=0.23\textwidth]{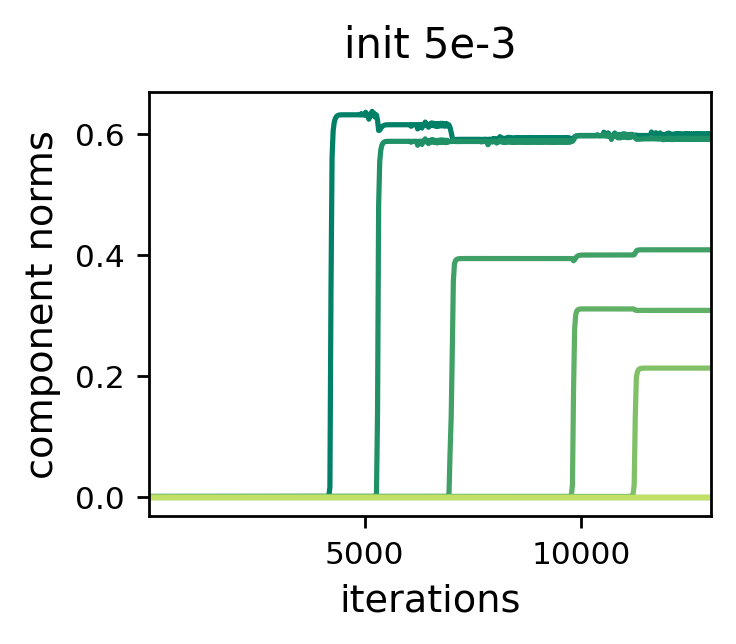}
		}
		\subfloat{
			\includegraphics[width=0.23\textwidth]{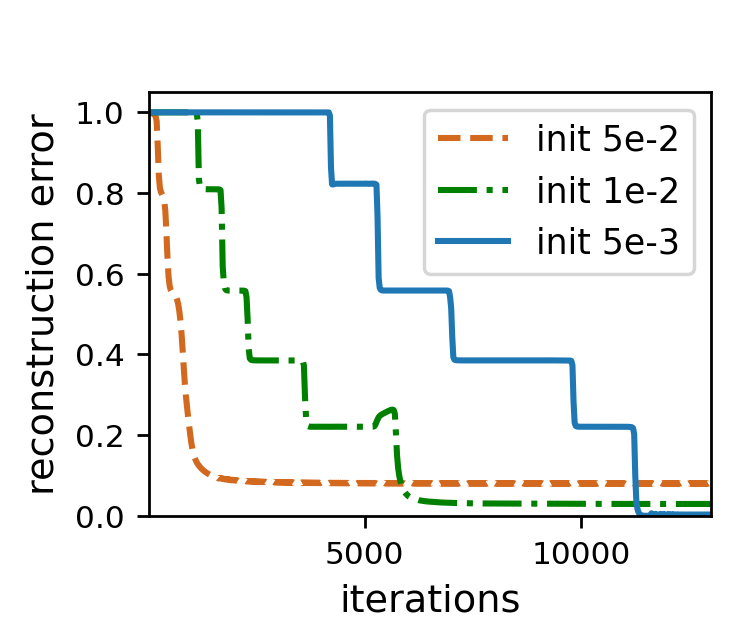}
		}
	\end{center}
	\vspace{-3.5mm}
	\caption{
		Dynamics of gradient descent over tensor factorization --- incremental learning of components yields low tensor rank solutions.
		Presented plots correspond to the task of completing a (tensor) rank $5$ ground truth tensor of size $10$-by-$10$-by-$10$-by-$10$ (order $4$) based on $2000$ observed entries chosen uniformly at random without repetition (smaller sample sizes led to solutions with tensor rank lower than that of the ground truth tensor).
		In each experiment, the $\ell_2$ loss (more precisely, Equation~\eqref{eq:tc_loss} with $\ell ( z ) := z^2$) was minimized via gradient descent over a tensor factorization with $R = 1000$ components (large enough to express any tensor), starting from (small) random initialization.
		First (left) three plots show (Frobenius) norms of the ten largest components under three standard deviations for initialization~---~$0.05, 0.01,$ and $0.005$.
		Further reduction of initialization scale yielded no noticeable change.
		The rightmost plot compares reconstruction errors (Frobenius distance from ground truth) from the three runs.
		To facilitate more efficient experimentation, we employed an adaptive learning rate scheme (see Subappendix~\ref{app:experiments:details} for details).
		Notice that, in accordance with the theoretical analysis of Section~\ref{sec:dynamic}, component norms move slower when small and faster when large, creating an incremental process in which components are learned one after the other.
		This effect is enhanced as initialization scale is decreased, producing low tensor rank solutions that accurately reconstruct the low (tensor) rank ground truth tensor.
		In particular, even though the factorization consists of $1000$ components, when initialization is sufficiently small, only five (tensor rank of the ground truth tensor) substantially depart from zero.
		Appendix~\ref{app:experiments} provides further implementation details, as well as similar experiments with: \emph{(i)} Huber loss (see Equation~\eqref{eq:huber_loss}) instead of $\ell_2$~loss; \emph{(ii)} ground truth tensors of different orders and (tensor) ranks; and \emph{(iii)} tensor sensing (see Appendix~\ref{app:sensing}).
		\vspace{-0.5mm}
	}
	\label{fig:tc_mse_ord4}
\end{figure*}

Recently,~\citet{razin2020implicit} empirically showed that, with small learning rate and near-zero initialization, gradient descent over tensor factorization exhibits an implicit regularization towards low tensor rank.
Our theory (Sections~\ref{sec:dynamic} and~\ref{sec:rank}) explains this implicit regularization through a dynamical analysis~---~we prove that the movement of component norms is attenuated when small and enhanced when large, thus creating an incremental learning effect which becomes more potent as initialization scale decreases.
Figure~\ref{fig:tc_mse_ord4} demonstrates this phenomenon empirically on synthetic low (tensor) rank tensor completion problems.
Figures~\ref{fig:tc_huber_ord4},~\ref{fig:tc_mse_ord3} and~\ref{fig:ts_mse_ord4} in Subappendix~\ref{app:experiments:further} extend the experiment, corroborating our analyses in a wide array of settings.

\subsection{Tensor Rank as Measure of Complexity}
\label{sec:experiments:tensor_rank_complexity}

Implicit regularization in deep learning is typically viewed as a tendency of gradient-based optimization to fit training examples with predictors whose ``complexity'' is as low as possible.
The fact that ``natural'' data gives rise to generalization while other types of data (\eg~random) do not, is understood to result from the former being amenable to fitting by predictors of lower complexity.
A major challenge in formalizing this intuition is that we lack definitions for predictor complexity that are both quantitative (\ie~admit quantitative generalization bounds) and capture the essence of natural data (types of data on which neural networks generalize in practice), in the sense of it being fittable~with~low~complexity.

As discussed in Subsection~\ref{sec:tf:nn}, learning a predictor with multiple discrete input variables and a continuous output can be viewed as a tensor completion problem.
Specifically, with $N \in \N$, $d_1 , \ldots , d_N \in \N$, learning a predictor from domain $\X = \{ 1 , \ldots , d_1 \} \times \cdots \times \{ 1 , \ldots , d_N \}$ to range $\Y = \R$ corresponds to completion of an order~$N$ tensor with mode (axis) dimensions $d_1 , \ldots , d_N$. 
Under this correspondence, any predictor can simply be thought of as a tensor, and vice versa.
We have shown that solving tensor completion via tensor factorization amounts to learning a predictor through a certain neural network (Subsection~\ref{sec:tf:nn}), whose implicit regularization favors solutions with low tensor rank (Sections \ref{sec:dynamic} and~\ref{sec:rank}).
Motivated by these connections, the current subsection empirically explores tensor rank as a measure of complexity for predictors, by evaluating the extent to which it captures natural data, \ie~allows the latter to be fit with low complexity predictors.

As representatives of natural data, we chose the classic MNIST dataset \cite{lecun1998mnist}~---~perhaps the most common benchmark for demonstrating ideas in deep learning~---~and its more modern counterpart Fashion-MNIST \cite{xiao2017fashion}.
A hurdle posed by these datasets is that they involve classification into multiple categories, whereas the equivalence to tensors applies to predictors whose output is a scalar.
It is possible to extend the equivalence by equating a multi-output predictor with multiple tensors, in which case the predictor is associated with multiple tensor ranks.
However, to facilitate a simple presentation, we avoid this extension and simply map each dataset into multiple one-vs-all binary classification problems.
For each problem, we associate the label~$1$ with the active category and $0$ with all the rest, and then attempt to fit training examples with predictors of low tensor rank, reporting the resulting mean squared error, \ie~the residual of the fit.
This is compared against residuals obtained when fitting two types of random data: one generated via shuffling labels, and the other by replacing inputs with noise.

\begin{figure*}
	\begin{center}
		\hspace*{-1mm}
		\includegraphics[width=1\textwidth]{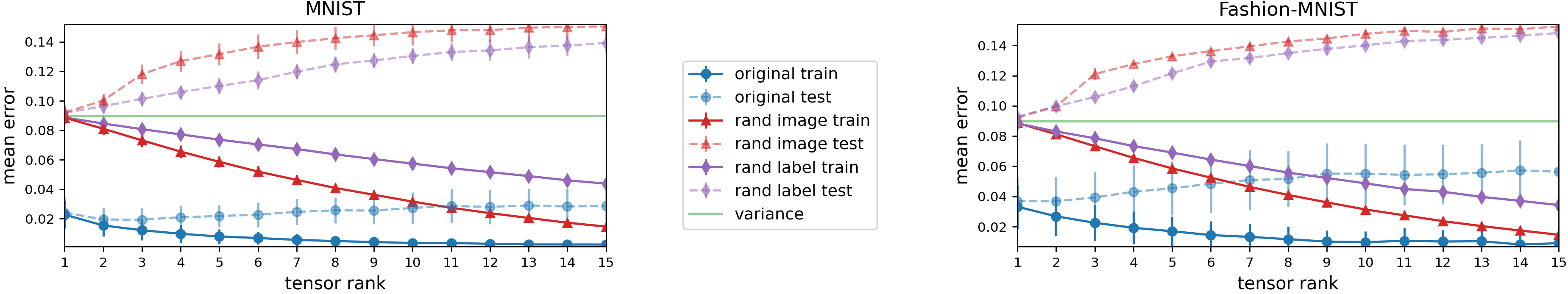}
	\end{center}
	\vspace{-2.75mm}
	\caption{
		Evaluation of tensor rank as measure of complexity~---~standard datasets can be fit accurately with predictors of extremely low tensor rank (far beneath what is required by random datasets), suggesting it may capture the essence of natural data.
		Left and right plots show results of fitting MNIST and Fashion-MNIST datasets, respectively, with predictors of increasing tensor rank.
		Original datasets are compared against two random variants: one generated by replacing images with noise (``rand image''), and the other via shuffling labels (``rand label'').
		As described in the text (Subsection~\ref{sec:experiments:tensor_rank_complexity}), for simplicity of presentation, each dataset was mapped into multiple (ten) one-vs-all prediction tasks (label~$1$ for active category, $0$~for the rest), with fit measured via mean squared error.
		Separately for each one-vs-all prediction task and each value $k \in \{ 1 , \ldots , 15 \}$ for the tensor rank, we applied an approximate numerical method (see Subappendix~\ref{app:experiments:details:natural_data} for details) to find the predictor of tensor rank~$k$ (or less) with which the mean squared error over training examples is minimal.
		We report this mean squared error, as well as that obtained by the predictor on the test set (to mitigate impact of outliers, large squared errors over test samples were clipped~---~see Subappendix~\ref{app:experiments:details:natural_data} for details).
		Plots show, for each value of~$k$, mean (as marker) and standard deviation (as error bar) of these errors taken over the different one-vs-all prediction tasks.
		Notice that the original datasets are fit accurately (low train error) by predictors of tensor rank as low as one, whereas random datasets are not (with tensor rank one, residuals of their fit are close to trivial, \ie~to the variance of the label).
		This suggests that tensor rank as a measure of complexity for predictors has potential to capture the essence of natural data.
		Notice also that, as expected, accurate fit with low tensor rank coincides with accurate prediction on test set, \ie~with generalization.
		For further details, as well as an experiment showing that linear predictors are incapable of accurately fitting the datasets, see Appendix~\ref{app:experiments}.
	}
	\label{fig:mnist_fmnist_rank}
\end{figure*}

Both MNIST and Fashion-MNIST comprise $28$-by-$28$ grayscale images, with each pixel taking one of $256$ possible values.
Tensors associated with predictors are thus of order~$784$, with dimension $256$ in each mode (axis).\note{
In practice, when associating predictors with tensors, it is often beneficial to modify the representation of the input (\cf~\citet{cohen2016expressive}).
For example, in the context under discussion, rather than having the discrete input variables hold pixel intensities, they may correspond to small image patches, where each patch is represented by the index of a centroid it is closest to, with centroids determined via clustering applied to all patches across all images in the dataset.
For simplicity, we did not transform representations in our experiments, and simply operated over raw image pixels.
}
A general rank one tensor can then be expressed as an outer product between $784$ vectors of dimension $256$ each, and accordingly has roughly $784 \cdot 256$ degrees of freedom.
This significantly exceeds the number of training examples in the datasets ($60000$), hence it is no surprise that we could easily fit them, as well as their random variants, with a predictor whose tensor rank is one.
To account for the comparatively small training sets, and render their fit more challenging, we quantized pixels to hold one of two values, \ie~we reduced images from grayscale to black and white.
Following the quantization, tensors associated with predictors have dimension two in each mode, and the number of degrees of freedom in a general rank one tensor is roughly $784 \cdot 2$~---~well below the number of training examples.
We may thus expect to see a difference between the tensor ranks needed for fitting original datasets and those required by the random ones.
This is confirmed by Figure~\ref{fig:mnist_fmnist_rank}, displaying the results of the experiment. 

Figure~\ref{fig:mnist_fmnist_rank} shows that with predictors of low tensor rank, MNIST and Fashion-MNIST can be fit much more accurately than the random datasets.
Moreover, as one would presume, accurate fit with low tensor rank coincides with accurate prediction on unseen data (test set), \ie~with generalization.
Combined with the rest of our results, we interpret this finding as an indication that tensor rank may shed light on both implicit regularization of neural networks, and the properties of real-world data translating this implicit regularization to generalization.

	\section{Related Work} \label{sec:related}

Theoretical analysis of implicit regularization induced by gradient-based optimization in deep learning is a highly active area of research.
Works along this line typically focus on simplified settings, delivering results such as:
characterizations of dynamical or statistical aspects of learning \cite{du2018algorithmic,gidel2019implicit,arora2019implicit,brutzkus2020inductive,gissin2020implicit,chou2020gradient};
solutions for test error when data distribution is known \cite{advani2017high,goldt2019dynamics,lampinen2019analytic};
and
proofs of complexity measures being implicitly minimized in certain situations, either exactly or approximately.\note{
Recent results of \citet{vardi2020implicit} imply that under certain conditions, implicit minimization of a complexity measure must be approximate (cannot be exact).
}
The latter type of results is perhaps the most common, covering complexity measures based on:
frequency content of input-output mapping \cite{rahaman2018spectral,xu2018understanding};
curvature of training objective \cite{mulayoff2020unique};
and
norm or margin of weights or input-output mapping \cite{soudry2018implicit,gunasekar2018characterizing,gunasekar2018implicit,jacot2018neural,ji2019implicit,mei2019mean,wu2019implicit,nacson2019convergence,ji2019gradient,oymak2019overparameterized,ali2020implicit,woodworth2020kernel,chizat2020implicit,yun2021unifying}.
An additional complexity measure, arguably the most extensively~studied,~is~matrix~rank.

Rank minimization in matrix completion (or sensing) is a classic problem in science and engineering (\cf~\citet{davenport2016overview}).
It relates to deep learning when solved via linear neural network, \ie~through matrix factorization.
The literature on matrix factorization for rank minimization is far too broad to cover here~---~we refer to~\citet{chi2019nonconvex} for a recent review.
Notable works proving rank minimization via matrix factorization trained by gradient descent with no explicit regularization are \citet{tu2016low,ma2018implicit,li2018algorithmic}.
\citet{gunasekar2017implicit} conjectured that this implicit regularization is equivalent to norm minimization, but the recent studies \citet{arora2019implicit,razin2020implicit,li2021towards} argue otherwise, and instead adopt a dynamical view, ultimately establishing that (under certain conditions) the implicit regularization in matrix factorization is performing a greedy low rank search.
These studies are relevant to ours in the sense that we generalize some of their results to tensor factorization.
As typical in transitioning from matrices to tensors (see \citet{hillar2013most}), this generalization entails significant challenges necessitating use of fundamentally different techniques.

Recovery of low (tensor) rank tensors from incomplete observations via tensor factorizations is a setting of growing interest (\cf~\citet{acar2011scalable,narita2012tensor,anandkumar2014tensor,jain2014provable,yokota2016smooth,karlsson2016parallel,xia2017polynomial,zhou2017tensor,cai2019nonconvex} and the survey \citet{song2019tensor}).\note{
It stands in contrast to inferring representations for fully observed low (tensor) rank tensors via tensor factorizations (\cf~\citet{wang2020lazy})~---~a setting where implicit regularization (as conventionally defined in deep learning) is not applicable.
}
However, the experiments of~\citet{razin2020implicit} comprise the only evidence we are aware of for successful recovery under gradient-based optimization with no explicit regularization (in particular without imposing low tensor rank through a factorization).\note{
	In a work parallel to ours, \citet{milanesi2021implicit}~provides further empirical evidence for such implicit regularization.
}
The current paper provides the first theoretical support for this implicit regularization.

We note that the equivalence between tensor factorizations and different types of neural networks has been studied extensively, primarily in the context of expressive power (see, \eg, \citet{cohen2016expressive,cohen2016convolutional,sharir2016tensorial,cohen2017inductive,cohen2017analysis,sharir2018expressive,levine2018deep,cohen2018boosting,levine2018benefits,balda2018tensor,khrulkov2018expressive,levine2019quantum,khrulkov2019generalized,levine2020limits}).
Connections between tensor analysis and generalization in deep learning have also been made (\cf~\citet{li2020understanding}), but to the best of our knowledge, the notion of quantifying the complexity of predictors through their tensor rank (supported empirically in Subsection~\ref{sec:experiments:tensor_rank_complexity}) is novel to this work.

	\section{Conclusion} \label{sec:conclusion}

In this paper we provided the first theoretical analysis of implicit regularization in tensor factorization.
To circumvent the notorious difficulty of tensor problems (see \citet{hillar2013most}), we adopted a dynamical systems perspective, and characterized the evolution that gradient descent (with small learning rate and near-zero initialization) induces on the components of a factorization.
The characterization suggests a form of greedy low tensor rank search, rigorously proven under certain conditions.
Experiments demonstrated said phenomena.

A major challenge in mathematically explaining generalization in deep learning is to define measures for predictor complexity that are both quantitative (\ie~admit quantitative generalization bounds) and capture the essence of ``natural'' data (types of data on which neural networks generalize in practice), in the sense of it being fittable with low complexity.
Motivated by the fact that tensor factorization is equivalent to a certain non-linear neural network, and by our analysis implying that the implicit regularization of this network minimizes tensor rank, we empirically explored the potential of the latter to serve as a measure of predictor complexity.
We found that it is possible to fit standard image recognition datasets (MNIST and Fashion-MNIST) with predictors of extremely low tensor rank (far beneath what is required for fitting random data), suggesting that it indeed captures aspects of natural data.

The neural network to which tensor factorization is equivalent entails multiplicative non-linearity.
It was shown in~\citet{cohen2016convolutional} that more prevalent non-linearities, for example rectified linear unit (ReLU), can be accounted for by considering \emph{generalized tensor factorizations}.
Studying the implicit regularization in generalized tensor factorizations (both empirically and theoretically) is regarded as a promising direction for future work.

There are two drawbacks to tensor factorization when applied to high-dimensional prediction problems.
The first is technical, and relates to numerical stability~---~an order~$N$ tensor factorization involves products of $N$ numbers, thus is susceptible to arithmetic underflow or overflow if $N$ is large. 
Care should be taken to avoid this pitfall, for example by performing computations in log domain (as done in \citet{cohen2014simnets,cohen2016deep,sharir2016tensorial}).
The second limitation is more fundamental, arising from the fact that tensor rank~---~the complexity measure implicitly minimized~---~is oblivious to the ordering of tensor modes (axes).
This means that the implicit regularization does not take into account how predictor inputs are arranged (\eg, in the context of image recognition, it does not take into account spatial relationships between pixels).
A potentially promising path for overcoming this limitation is introduction of \emph{hierarchy} into the tensor factorization, equivalent to adding depth to the corresponding neural network (\cf~\citet{cohen2016expressive}).
It may then be the case that a \emph{hierarchical tensor rank} (see \citet{hackbusch2012tensor}), which does account for mode ordering, will be implicitly minimized.
We hypothesize that hierarchical tensor ranks may be key to explaining both implicit regularization of contemporary deep neural networks, and the properties of real-world data translating this implicit regularization to generalization.

	\ifdefined\NEURIPS
		\begin{ack}
			This work was supported by a Google Research Scholar Award, a Google Research Gift, the Yandex Initiative in Machine Learning, Len Blavatnik and the Blavatnik Family Foundation, and Amnon and Anat Shashua.

		\end{ack}
	\else
		\newcommand{This work was supported by a Google Research Scholar Award, a Google Research Gift, the Yandex Initiative in Machine Learning, Len Blavatnik and the Blavatnik Family Foundation, and Amnon and Anat Shashua.
}
		{}
	\fi
	\ifdefined\COLT
		\acks{}
	\else
		\ifdefined\CAMREADY
			\ifdefined\ICLR
				\newcommand*{\subsuback}{}
			\fi
			\ifdefined\NEURIPS
			\else
				\section*{Acknowledgements}
				
			\fi
		\fi
	\fi

	\section*{References}
	{\small
		\ifdefined\ICML
			\bibliographystyle{icml2021}
		\else
			\bibliographystyle{plainnat}
		\fi
		\bibliography{refs}
	}

	\clearpage
	\appendix
	
	\onecolumn
	
	\ifdefined\ENABLEENDNOTES
		\theendnotes
	\fi

	\section{Extension to Tensor Sensing} \label{app:sensing}

Our theoretical analyses (Sections \ref{sec:dynamic} and~\ref{sec:rank}) are presented in the context of tensor completion, but readily extend to the more general task of \emph{tensor sensing}~---~reconstruction of an unknown tensor from linear measurements (projections).
In this appendix we outline the extension.
Empirical demonstrations for tensor sensing are given in Subappendix~\ref{app:experiments:further} (Figure~\ref{fig:ts_mse_ord4}).

For a ground truth tensor $\W^* \in \R^{d_1, \ldots, d_N}$ and measurement tensors $\{ \A_i \in \R^{d_1, \ldots, d_N } \}_{i = 1}^m$, the goal in tensor sensing is to reconstruct $\W^*$ based on $\{ \inprodnoflex{ \A_i }{ \W^* } \in \R \}_{i = 1}^{m}$, where $\inprod{\, \cdot \,}{\, \cdot \,}$ represents the standard inner product.
Similarly to tensor completion (\cf~Equation~\eqref{eq:tc_loss}), a standard loss function for the task is:
\[
\L_{s} ( \W ) = \frac{1}{m} \sum\nolimits_{i = 1}^m \ell \left ( \inprod{ \A_i }{ \W }  - \inprod{ \A_i }{ \W^* }  \right )
\text{\,,}
\]
where $\L_{s} : \R^{d_1, \ldots, d_N} \to \R_{\geq 0}$, and $\ell : \R \to \R_{\geq 0}$ is differentiable and locally smooth.
Note that tensor completion is a special case, in which the measurement tensors hold $1$ at a single entry and $0$ elsewhere.

Beginning with Section~\ref{sec:dynamic}, its results (in particular Lemma~\ref{lem:balancedness_conservation_body}, Theorem~\ref{thm:dyn_fac_comp_norm_unbal} and Corollary~\ref{cor:dyn_fac_comp_norm_balanced}) hold (and are proven in Subappendix~\ref{app:proofs}) for any differentiable and locally smooth~$\L (\cdot)$, thus they apply as is to tensor sensing.
Turning to Section~\ref{sec:rank}, the extension of Theorem~\ref{thm:approx_rank_1} and Corollary~\ref{corollary:converge_rank_1} to tensor sensing (with Huber loss) is straightforward.
Proofs rely on the specifics of tensor completion only in the preliminary Lemmas~\ref{lem:huber_loss_const_grad_near_zero},~\ref{lem:huber_loss_smooth} and~\ref{lem:huber_cp_objective_is_smooth_over_bounded_domain} (Subappendix~\ref{app:proofs:approx_rank_1:prelim_lemmas}), for which analogous lemmas may readily be established.
Thus, up to slight changes in constants if $\max_{i = 1, \ldots, m} \normnoflex{ \A_i} > 1$, the results carry over.

\subsection{Stronger Results Under Restricted Isometry Property}
\label{app:sensing:rip}

In the classic setting of \emph{matrix sensing} (tensor sensing with order~$N = 2$), a commonly studied condition on the measurement matrices is the \emph{restricted isometry property}.
This condition allows for efficient recovery when the ground truth matrix has low rank, and holds with high probability when the entries of the measurement matrices are drawn independently from a zero-mean sub-Gaussian distribution (\cf~\citet{recht2010guaranteed}).
The notion of restricted isometry property extends from matrix to tensor sensing (\ie~from order $N = 2$ to arbitrary $N \in \N_{\geq 2}$)~---~see~\citet{rauhut2017low,ibrahim2020recoverability}.
When it applies, the tensor sensing analogues of Theorem~\ref{thm:approx_rank_1} and Corollary~\ref{corollary:converge_rank_1} can be strengthened as described below.

\medskip

In the context of tensor sensing, the restricted isometry property is defined as follows.
\begin{definition}
\label{def:rip}
We say that the measurement tensors $\{ \A_i \in \R^{d_1, \ldots, d_N } \}_{n = 1}^m$ satisfy \emph{$r$-restricted isometry property} (\emph{$r$-RIP}) with parameter $\delta \in [0, 1)$ if:
\[
(1 - \delta) \norm{ \W }^2 \leq \sum\nolimits_{i = 1}^m \inprod{ \A_i }{ \W }^2 \leq (1 + \delta) \norm{ \W }^2
\text{\,,}
\]
for all $\W \in \R^{d_1, \ldots, d_N}$ of tensor rank $r$ or less.
\end{definition}
By~\citet{ibrahim2020recoverability}, given $m \in \OO ( \log (N) \cdot \sum_{n = 1}^N d_n )$ measurement tensors with entries drawn independently from a zero-mean sub-Gaussian distribution, $1$-RIP holds with high probability.
In this case, we may strengthen the tensor sensing analogue of Theorem~\ref{thm:approx_rank_1}, such that it ensures that arbitrarily small initialization leads tensor factorization to follow a rank one trajectory for an arbitrary amount of time, regardless of the distance traveled.
That is, with the notations of Theorem~\ref{thm:approx_rank_1}, for any time duration $T > 0$ and degree of approximation $\epsilon \in ( 0 , 1 )$, if initialization is sufficiently small, $\overline{W}_e (t)$~is within $\epsilon$~distance from a balanced rank one trajectory emanating from~$\S$ at least until time $t \geq T$.
To see it is so, notice that since the loss function during gradient flow is monotonically non-increasing, $\sum_{i = 1}^m \inprodnoflex{ \A_i }{ \W_1 (t) }^2$ is bounded through time for any rank one trajectory~$\W_1 (t)$.
In turn, since the measurement tensors satisfy $1$-RIP, all such trajectories emanating from $\S$ are confined to a ball of radius $D > 0$ about the origin, for some $D > 0$.
By the tensor sensing analogue of Theorem~\ref{thm:approx_rank_1}, sufficiently small initialization ensures that there exists $\W_1 (t)$~---~a balanced rank one trajectory emanating from $\S$~---~such that $\overline{\W}_e (t)$ is within $\epsilon$~distance from it at least until $t \geq T$ or $\normnoflex{\overline{\W}_e (t)} \geq D + 1$.
However, we know that $\normnoflex{\W_1 (t)} \leq D$, and so $\overline{\W}_e (t)$ cannot reach norm of $D + 1$ before time~$T$, as that would entail a contradiction~---~$\normnoflex{\W_1 (t)} > D$.
As a consequence of the above, in the tensor sensing analogue of Corollary~\ref{corollary:converge_rank_1}, when $1$-RIP is satisfied we need not assume all balanced rank one trajectories emanating from~$\S$ are jointly bounded.

	\begin{figure*}[h!]
	\vspace{-3mm}
	\begin{center}
		\subfloat{
			\includegraphics[width=0.23\textwidth]{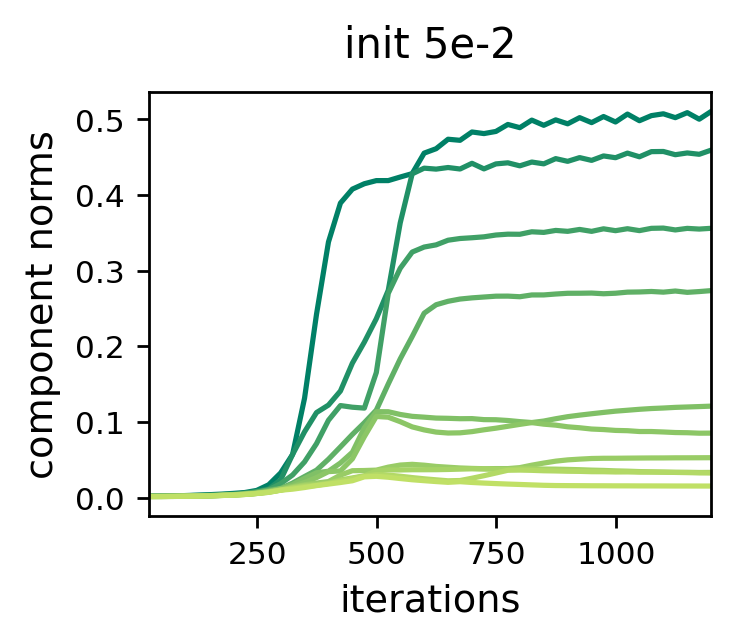}
		}
		\subfloat{
			\includegraphics[width=0.23\textwidth]{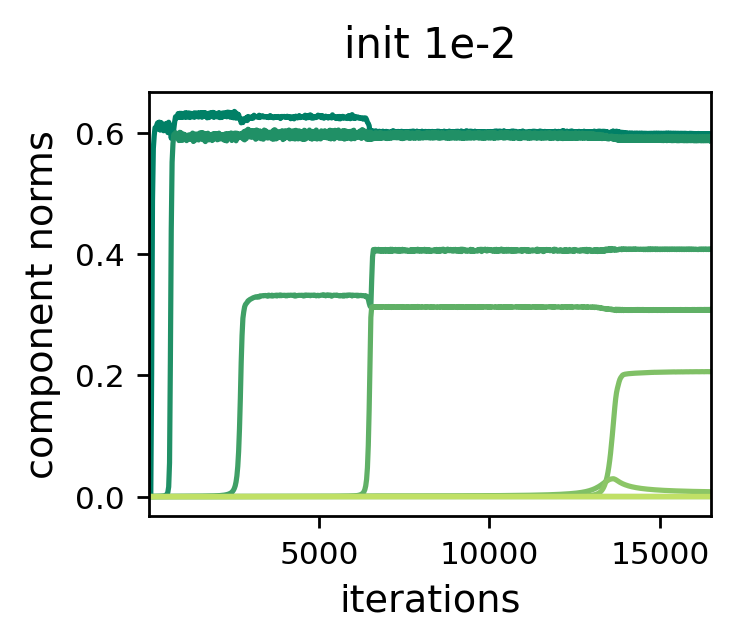}
		}
		\subfloat{
			\includegraphics[width=0.23\textwidth]{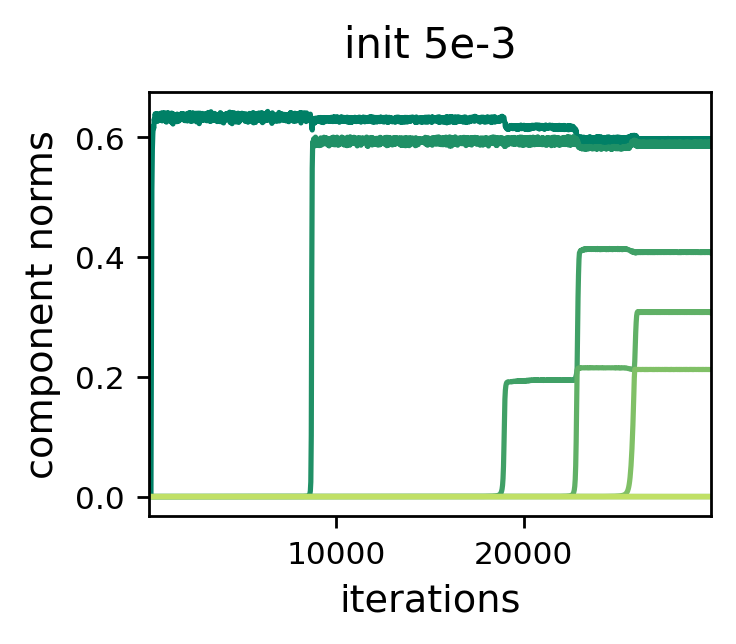}
		}
		\subfloat{
			\includegraphics[width=0.23\textwidth]{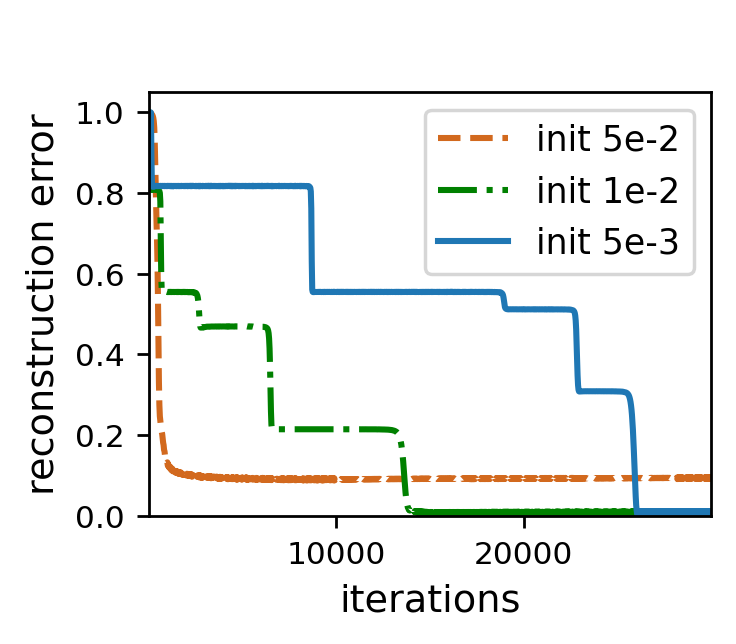}
		}
	\end{center}
	\vspace{-3mm}
	\caption{
		Dynamics of gradient descent over tensor factorization (with Huber loss) --- incremental learning of components yields low tensor rank solutions.
		This figure is identical to Figure~\ref{fig:tc_mse_ord4}, except that the minimized objective (Equation~\eqref{eq:tc_loss}) is based on Huber loss~($\ell_h ( \cdot )$ from Equation~\eqref{eq:huber_loss}) instead of $\ell_2$~loss.
		In accordance with Assumption~\ref{assump:delta_h}, the transition point~$\delta_h$ was set to $5 \cdot 10^{-7}$~---~smaller than the absolute value of observed entries (though larger $\delta_h$ led to similar results).
		For further details see caption of Figure~\ref{fig:tc_mse_ord4}, as well as Subappendix~\ref{app:experiments:details:synth_ts}.
	}
	\vspace{-1mm}
	\label{fig:tc_huber_ord4}
\end{figure*}

\begin{figure*}[h!]
	\vspace{-3mm}
	\begin{center}
		\subfloat{
			\includegraphics[width=0.23\textwidth]{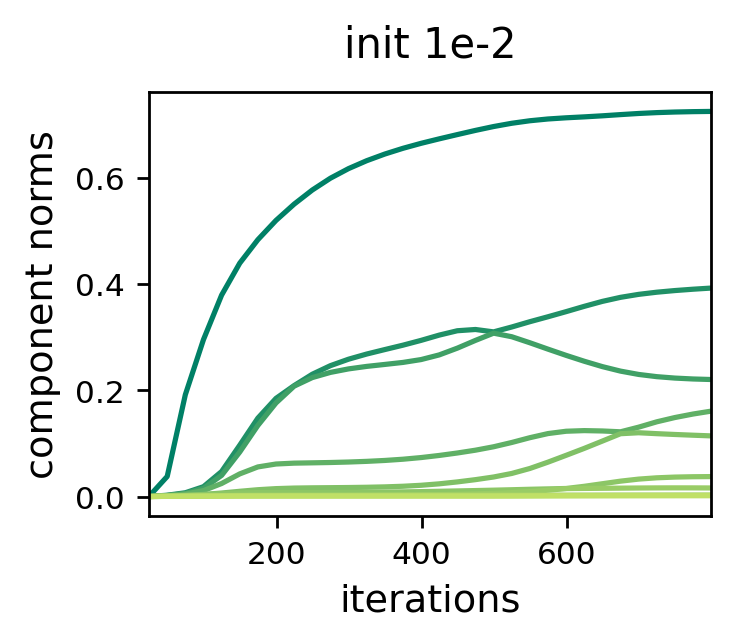}
		}
		\subfloat{
			\includegraphics[width=0.23\textwidth]{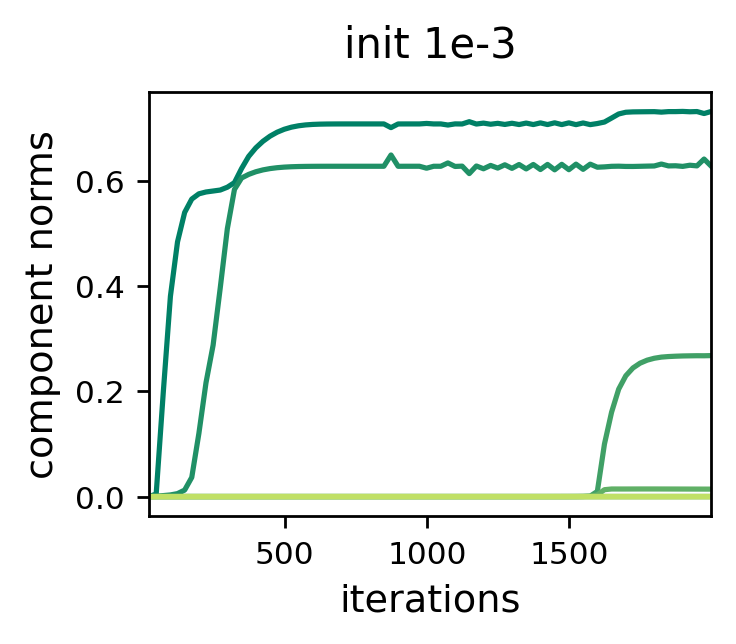}
		}
		\subfloat{
			\includegraphics[width=0.23\textwidth]{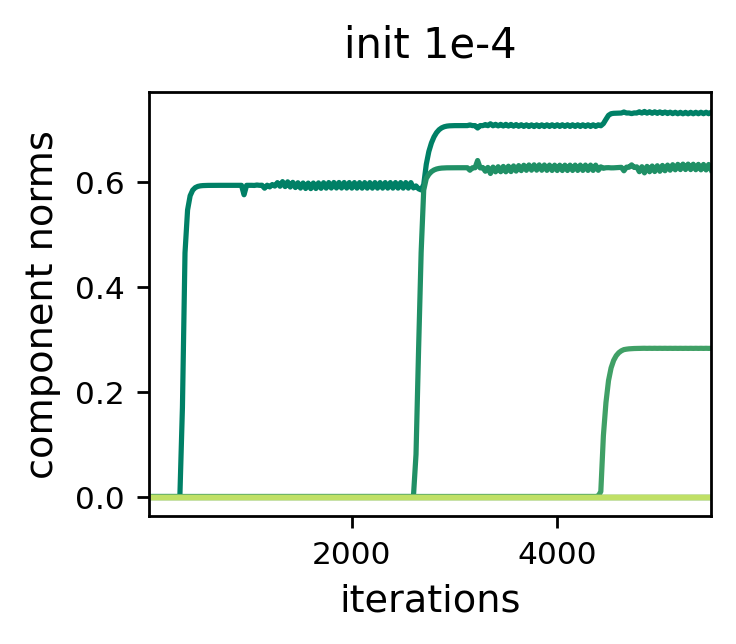}
		}
		\subfloat{
			\includegraphics[width=0.23\textwidth]{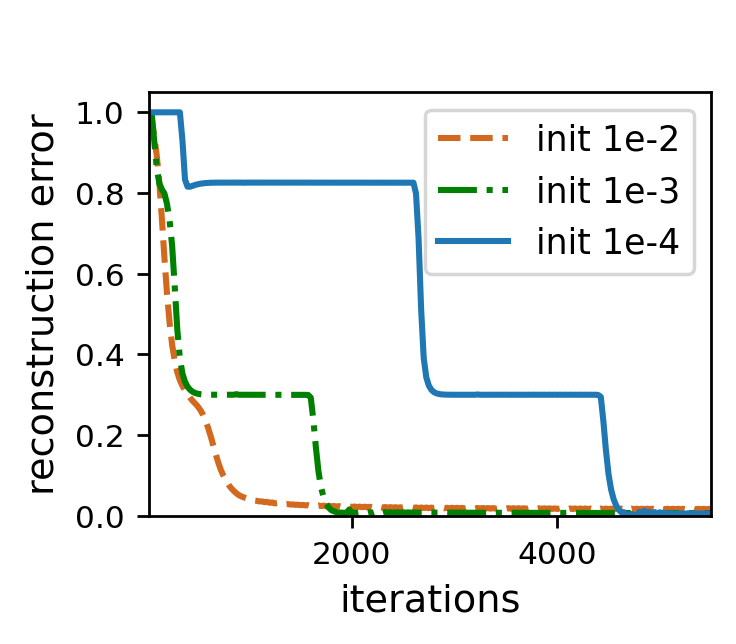}
		}
	\end{center}
	\vspace{-3mm}
	\caption{
		Dynamics of gradient descent over (order~$3$) tensor factorization --- incremental learning of components yields low tensor rank solutions.
		This figure is identical to Figure~\ref{fig:tc_mse_ord4}, except that: \emph{(i)} the ground truth tensor is of (tensor) rank $3$ with size $10$-by-$10$-by-$10$ (order $3$), completed based on $300$ observed entries (smaller sample sizes led to solutions with tensor rank lower than that of the ground truth tensor); and \emph{(ii)} the employed tensor factorization consists of $R = 100$ components (large enough to express any tensor).
		For further details see caption of Figure~\ref{fig:tc_mse_ord4}, as well as Subappendix~\ref{app:experiments:details:synth_ts}.
	}
	\vspace{-1mm}
	\label{fig:tc_mse_ord3}
\end{figure*}

\begin{figure*}[h!]
	\vspace{-3mm}
	\begin{center}
		\subfloat{
			\includegraphics[width=0.23\textwidth]{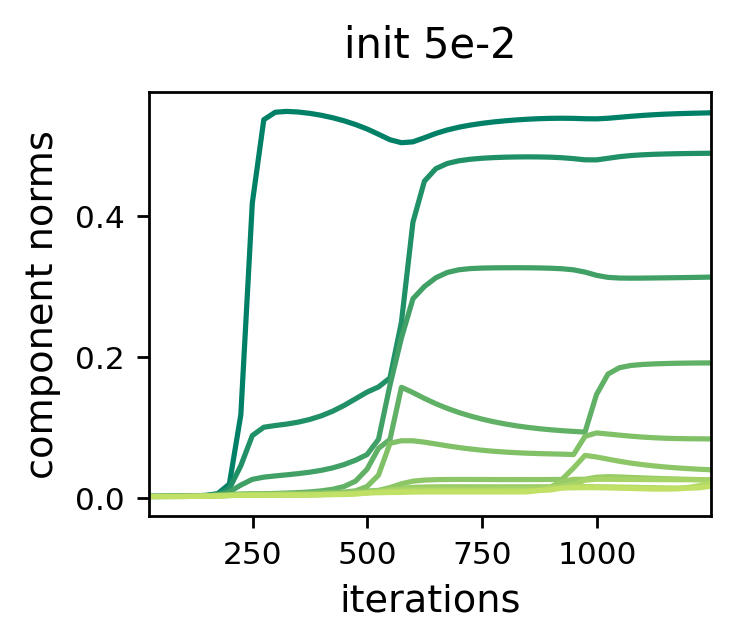}
		}
		\subfloat{
			\includegraphics[width=0.23\textwidth]{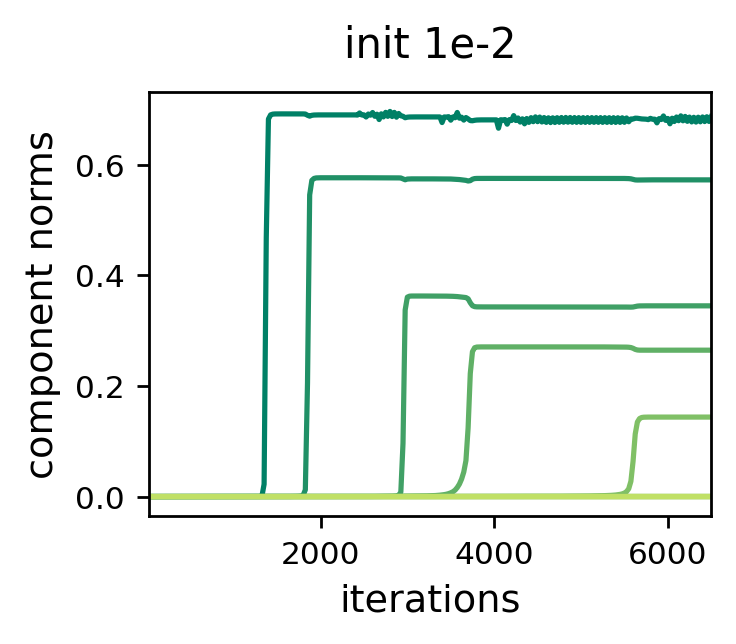}
		}
		\subfloat{
			\includegraphics[width=0.23\textwidth]{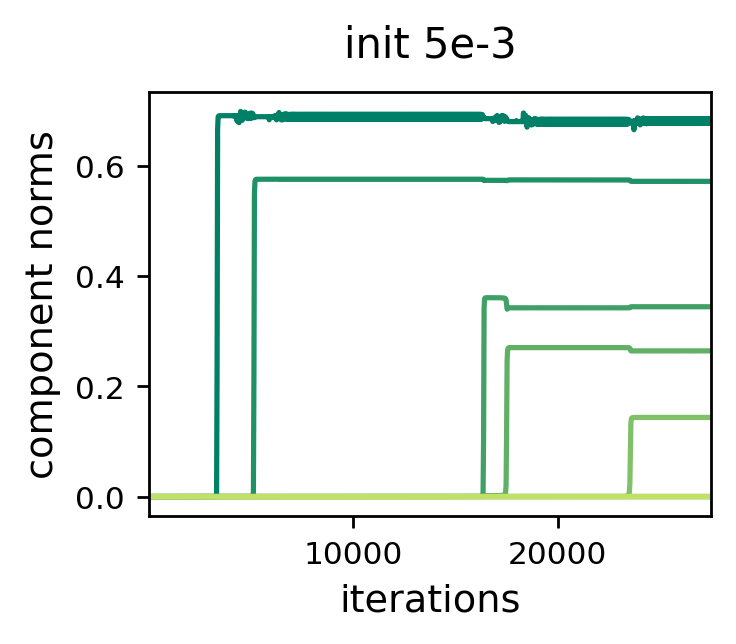}
		}
		\subfloat{
			\includegraphics[width=0.23\textwidth]{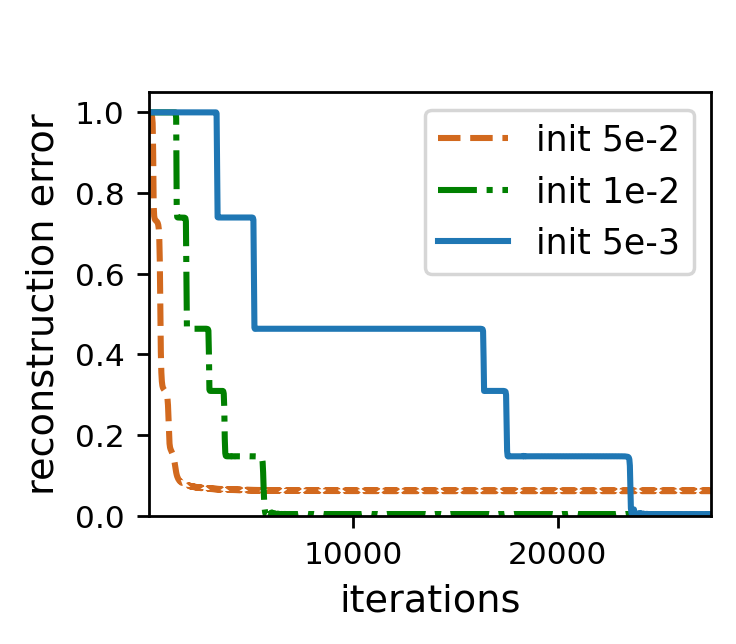}
		}
	\end{center}
	\vspace{-3mm}
	\caption{
		Dynamics of gradient descent over tensor factorization (on tensor sensing task) --- incremental learning of components yields low tensor rank solutions.
		This figure is identical to Figure~\ref{fig:tc_mse_ord4}, except that reconstruction of the ground truth tensor is based on $2000$ linear measurements (instead of $2000$ randomly chosen entries), \ie~on $\{ \inprodnoflex{ \A_i }{ \W^* } \}_{i = 1}^{2000}$, where $\W^* \in \R^{d_1, \ldots, d_N}$ is the ground truth tensor and \smash{$\A_1, \ldots, \A_{2000} \in \R^{d_1, \ldots, d_N}$} are measurement tensors sampled independently from a zero-mean Gaussian distribution (see Appendix~\ref{app:sensing} for a description of the tensor sensing task).
		For further details see caption of Figure~\ref{fig:tc_mse_ord4}, as well as Subappendix~\ref{app:experiments:details:synth_ts}.
	}
	\vspace{-1mm}
	\label{fig:ts_mse_ord4}
\end{figure*}

\begin{table}[t]
	\caption{
		Linear predictors are incapable of accurately fitting the datasets in the experiment reported by Figure~\ref{fig:mnist_fmnist_rank}.
		Table presents mean squared errors (over train and test sets) attained by fitting linear predictors to the one-vs-all prediction tasks induced by MNIST and Fashion-MNIST datasets, as well as their random variants (in compliance with Figure~\ref{fig:mnist_fmnist_rank}, to mitigate impact of outliers, large squared errors over test samples were clipped~---~see Subappendix~\ref{app:experiments:details:natural_data} for details).
		For each dataset, mean and standard deviation of train and test errors, taken over the different one-vs-all prediction tasks, are reported.
		Notice that all errors are not far from~$0.09$~---~the variance of the label~---~which is trivial to achieve.		
		For further details see caption of Figure~\ref{fig:mnist_fmnist_rank}, as well as Subappendix~\ref{app:experiments:details:natural_data}.
	}
	\label{table:mnist_fmnist_linear_errors}
	\begin{center}
		\begin{small}
			\begin{sc}
				\fontsize{8}{1}
				\begin{tabular}{lcccc}
					\toprule
					& \multicolumn{2}{c}{MNIST} & \multicolumn{2}{c}{Fashion-MNIST}\\
					& Train& Test & Train& Test \\
					\midrule
					Original & $3.90\cdot{}10^{-2}$ $\pm$ $8.37\cdot{}10^{-3}$ & $3.92\cdot{}10^{-2}$ $\pm$ $8.04\cdot{}10^{-2}$ & $4.09\cdot{}10^{-2}$ $\pm$ $1.50\cdot{}10^{-2}$ & $4.24\cdot{}10^{-2}$ $\pm$ $1.58\cdot{}10^{-2}$ \\
					Rand Image & $8.88\cdot{}10^{-2}$ $\pm$ $4.24\cdot{}10^{-3}$ & $9.11\cdot{}10^{-2}$ $\pm$ $4.80\cdot{}10^{-3}$ & $8.88\cdot{}10^{-2}$ $\pm$ $3.11\cdot{}10^{-5}$ & $9.12\cdot{}10^{-2}$ $\pm$ $2.07\cdot{}10^{-4}$ \\
					Rand Label & $8.89\cdot{}10^{-2}$ $\pm$ $4.22\cdot{}10^{-3}$ & $9.09\cdot{}10^{-2}$ $\pm$ $4.77\cdot{}10^{-3}$ & $8.88\cdot{}10^{-2}$ $\pm$ $7.46\cdot{}10^{-5}$ & $9.11\cdot{}10^{-2}$ $\pm$ $2.23\cdot{}10^{-4}$ \\
					\bottomrule
				\end{tabular}
			\end{sc}
		\end{small}
	\end{center}
	\vskip -0.1in
\end{table}

\section{Further Experiments and Implementation Details} \label{app:experiments}

\subsection{Further Experiments} \label{app:experiments:further}

Figures~\ref{fig:tc_huber_ord4},~\ref{fig:tc_mse_ord3} and~\ref{fig:ts_mse_ord4} supplement Figure~\ref{fig:tc_mse_ord4} from Subsection~\ref{sec:experiments:dyn} by including, respectively: \emph{(i)} Huber loss (Equation~\eqref{eq:huber_loss}) instead of $\ell_2$~loss; \emph{(ii)} ground truth tensors of different orders and (tensor) ranks; and \emph{(iii)} tensor sensing (see Appendix~\ref{app:sensing}).
Table~\ref{table:mnist_fmnist_linear_errors} supplements Figure~\ref{fig:mnist_fmnist_rank}, reporting mean squared errors of linear predictors fitted to the different datasets.

\subsection{Implementation Details} \label{app:experiments:details}

Below are implementation details omitted from our experimental reports (Section~\ref{sec:experiments} and Subappendix~\ref{app:experiments:further}).
Source code for reproducing our results and figures\ifdefined\CAMREADY
~can be found at \url{https://github.com/noamrazin/imp_reg_in_tf} (based on the PyTorch framework~\cite{paszke2017automatic}).
\else
, based on the PyTorch framework~\cite{paszke2017automatic}, is attached as supplementary material and will be made publicly available.
\fi

\subsubsection{Dynamics of Learning (Figures~\ref{fig:tc_mse_ord4},~\ref{fig:tc_huber_ord4}, ~\ref{fig:tc_mse_ord3} and~\ref{fig:ts_mse_ord4})} \label{app:experiments:details:synth_ts}

The number of components $R$ was set to ensure an unconstrained search space, \ie~to $10^2$ and $10^3$ for tensor sizes $10$-by-$10$-by-$10$ and $10$-by-$10$-by-$10$-by-$10$ respectively.\footnote{
	For any $d_1 , \ldots , d_N \in \N$, setting $R = ( \Pi_{n = 1}^N d_n ) / \max \{ d_n \}_{n = 1}^N$ suffices for expressing all tensors in $\R^{d_1 , \ldots , d_N}$ (\cf~\citet{hackbusch2012tensor}).
}
Gradient descent was initialized randomly by sampling each weight independently from a zero-mean Gaussian distribution, and was run until the loss reached a value lower than $10^{-8}$ or~$10^6$ iterations elapsed.
For each figure, experiments were carried out with standard deviation of initialization varying over $\{ 0.05, 0.01, 0.005, 0.001, 0.0005, 0.0001, 0.00005 \}$.
Reported are representative runs illustrating the different types of dynamics encountered.
To facilitate more efficient experimentation, we employed an adaptive learning rate scheme, where at each iteration a base learning rate is divided by the square root of an exponential moving average of squared gradient norms.
That is, with base learning rate $\eta = 10^{-2}$ and weighted average coefficient $\beta = 0.99$, at iteration~$t$ the learning rate was set to $\eta_t = \eta / (\sqrt{\gamma_t / (1 - \beta^t)} + 10^{-6})$, where \smash{$\gamma_t = \beta \cdot \gamma_{t-1} + (1 - \beta) \cdot \sum \hspace{0mm}_{r = 1}^{R} \hspace{0mm}_{n = 1}^{N} \normnoflex{ \nicefrac{\partial}{\partial \w_r^{n}} \phi ( \{ \w^{ n }_r ( t ) \}_{r = 1}^R\hspace{0mm}_{n = 1}^N ) }^2$} and $\gamma_0 = 0$.
Note that only the learning rate (step size) is affected by this scheme, not the direction of movement.
When compared to optimization with a fixed (small) learning rate, no significant difference in the dynamics was observed, while run times were significantly shorter.

Generating a ground truth rank~$R^*$ tensor $\W^* \in \R^{d_1, \ldots, d_N}$ was done by computing $\W^* = \sum\nolimits_{r = 1}^{R^*} \w^{*  1 }_r \tenp \cdots \tenp \w^{*  N }_r$, with $\{ \w^{* n }_r \in \R^{d_n} \}_{r = 1}^{R^*} \hspace{0mm}_{n = 1}^{N}$ drawn independently from the standard normal distribution.
For convenience, the ground truth tensor was normalized to be of unit Frobenius norm.
In tensor completion experiments (Figures~\ref{fig:tc_mse_ord4},~\ref{fig:tc_huber_ord4} and~\ref{fig:tc_mse_ord3}), the subset of observed entries was chosen uniformly at random.
For tensor sensing (Figure~\ref{fig:ts_mse_ord4}), we sampled the entries of all measurement tensors independently from a zero-mean Gaussian distribution with standard deviation $10^{-2}$ (ensures measurement tensors have expected square Frobenius norm of $1$).

\subsubsection{Tensor Rank as Measure of Complexity (Figure~\ref{fig:mnist_fmnist_rank} and Table~\ref{table:mnist_fmnist_linear_errors})} \label{app:experiments:details:natural_data}

For both MNIST and Fashion-MNIST datasets, we quantized pixels to hold either $0$ or~$1$ by rounding grayscale values to the nearest integer.
Random input datasets were created by replacing all pixels in all images with random values ($0$ or~$1$) drawn independently from the uniform distribution.
Random label datasets were generated by shuffling labels according to a random permutation, separately for train and test sets.

Given a prediction task, fitting the corresponding tensor completion problem with a predictor of tensor rank~$k$ (or less) was done by minimizing the mean squared error over a $k$-component tensor factorization.
Stochastic gradient descent, using the Adam optimizer~\cite{kingma2014adam} with learning rate $5 \cdot 10^{-4}$, default $\beta_1, \beta_2$ coefficients, and a batch size of $5000$, was run until the loss reached a value lower than $10^{-8}$ or $10^4$ iterations elapsed.
For numerical stability, factorization weights were initialized near one.
Namely, their initial values were sampled independently from a Gaussian distribution with mean one and standard deviation~$10^{-3}$.
To accelerate convergence, label values ($0$ or~$1$) were scaled up by two during optimization (thereby ensuring symmetry about initialization), with predictions of resulting models scaled down by the same factor during evaluation.
Results reported in Table~\ref{table:mnist_fmnist_linear_errors} were obtained using the ridge regression implementation of scikit-learn~\cite{scikit-learn} with $\alpha = 0.5$ (setting $\alpha = 0$, \ie~using unregularized linear regression, led to numerical issues due to bad conditioning of the data).
Lastly, to mitigate impact of outliers, in both Figure~\ref{fig:mnist_fmnist_rank} and Table~\ref{table:mnist_fmnist_linear_errors} squared errors over test samples were clipped at one, \ie~taken to be the minimum between one and the calculated error.

\section{Deferred Proofs}
\label{app:proofs}

\subsection{Notations}
\label{app:proofs:notations}

For $N \in \N$, let $[N] := \{ 1, \ldots, N \}$.
We use $\inprod{ \cdot }{ \cdot}$ to denote the standard Euclidean (Frobenius) inner product between two vectors, matrices, or tensors, and $\norm{ \cdot }$ to denote the norm induced by it.
Furthermore, we denote the outer and Kronecker products by $\tenp$ and $\kronp$, respectively.
For a tensor $\W \in \R^{d_1 ,  \ldots , d_N}$ and $n \in [N]$, we let $\mat{ \W }_{n}$ be the mode-$n$ matricization of $\W$, \ie~its arrangement as a matrix where the rows correspond to the $n$'th mode and the columns correspond to all other modes (see Subsection~2.4 in~\citet{kolda2009tensor}).

\subsection{Useful Lemmas}
\label{app:proofs:useful_lemmas}

\subsubsection{Technical}
\label{app:proofs:useful_lemmas:technical}

Following are several technical lemmas, which are used throughout the proofs.

\begin{lemma}
	\label{lem:inp_with_tenp_to_mat_kronp}
	For any $\W \in \R^{d_1 ,  \ldots , d_N}$ and $\{ \w^{n} \in \R^{d_n} \}_{n = 1}^N$, where $d_1, \ldots, d_N \in \N$, it holds that:
	\[
	\inprod{ \W }{ \tenp_{n' = 1}^N \w^{n'} } = \inprod{ \mat{ \W }_n \cdot \kronp_{n' \neq n} \w^{n'} }{ \w^{n} } \quad , ~n = 1, \ldots, N
	\text{\,.}
	\]
\end{lemma}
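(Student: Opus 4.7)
The plan is to verify the identity by expanding both sides in coordinates and recognizing the right-hand side as a reindexing of the same sum that defines the left-hand side. This is a standard book-keeping identity about mode-$n$ matricization, so the proof is computational rather than conceptual.

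First I would write out the left-hand side directly from the definitions of the Frobenius inner product and the outer product:
\[
\inprod{ \W }{ \tenp_{n' = 1}^N \w^{n'} } = \sum_{i_1 = 1}^{d_1} \cdots \sum_{i_N = 1}^{d_N} [\W]_{i_1, \ldots, i_N} \prod_{n' = 1}^N [\w^{n'}]_{i_{n'}}.
\]
Then, for a fixed $n \in [N]$, I would factor out the sum over $i_n$, writing the expression as
\[
\sum_{i_n = 1}^{d_n} [\w^{n}]_{i_n} \cdot \Bigg( \sum_{\{i_{n'}\}_{n' \neq n}} [\W]_{i_1, \ldots, i_N} \prod_{n' \neq n} [\w^{n'}]_{i_{n'}} \Bigg).
\]
The outer sum will then match the form $\inprod{ \vv }{ \w^n }$ for a suitable vector $\vv \in \R^{d_n}$, so it only remains to show that $\vv = \mat{\W}_{n} \cdot \kronp_{n' \neq n} \w^{n'}$.

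For this identification I would invoke the definitions of the mode-$n$ matricization (from, e.g., Section~2.4 of Kolda and Bader) and of the Kronecker product. Under the standard ordering, the column of $\mat{\W}_{n}$ indexed by the tuple $(i_{n'})_{n' \neq n}$ equals $[\W]_{i_1, \ldots, i_N}$ in row~$i_n$, while the entry of $\kronp_{n' \neq n} \w^{n'}$ at the corresponding position equals $\prod_{n' \neq n} [\w^{n'}]_{i_{n'}}$. With the two orderings matched (which is why the conventional definitions are compatible), the inner parenthesized sum above is exactly the $i_n$-th coordinate of $\mat{\W}_{n} \cdot \kronp_{n' \neq n} \w^{n'}$, and the full expression becomes $\inprod{ \mat{\W}_{n} \cdot \kronp_{n' \neq n} \w^{n'} }{ \w^n }$, as claimed.

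The only genuine subtlety is ensuring that the indexing convention used for matricization agrees with the one used for the Kronecker product; once both are spelled out, the equality is immediate. Since these conventions are standard and chosen precisely so this identity holds, I would cite the relevant definitions rather than re-derive the index bijection.
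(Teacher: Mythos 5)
Your proof is correct and follows essentially the same route as the paper's: expand the Frobenius inner product in coordinates, factor out the sum over $i_n$, and identify the remaining sum as the $i_n$-th coordinate of $\mat{\W}_n \cdot \kronp_{n' \neq n} \w^{n'}$ by the definitions of matricization and Kronecker product. The paper carries this out explicitly only for $n=1$ and appeals to symmetry for the rest, whereas you frame it for general $n$ and flag the index-ordering compatibility more explicitly, but the substance is the same.
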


\begin{proof}
	To simplify presentation, we prove the equality for $n = 1$.
	For $n = 2, \ldots, N$, an analogous computation yields the desired result.
	By opening up the inner product and applying straightforward computations, we conclude:
	\[
	\begin{split}
		\inprod{ \W }{ \tenp_{n' = 1}^N \w^{n'} } & = \sum_{i_1 = 1}^{d_1} \ldots \sum_{i_N = 1}^{d_N} [ \W ]_{i_1, \ldots, i_N}  \cdot \prod_{n' = 1}^N [ \w^{n'}]_{i_{n'}} \\
		& = \sum_{i_1 = 1}^{d_1} [ \w^{1}]_{i_1} \sum_{i_2 = 1}^{d_2} \ldots \sum_{i_N = 1}^{d_N} [ \W ]_{i_1, \ldots, i_N}  \cdot \prod_{n' = 2}^N [ \w^{n'}]_{i_{n'}} \\
		& = \inprod{ \mat{ \W }_1 \cdot \kronp_{n' = 2}^N \w^{n'} }{ \w^{1} }
		\text{\,.}
	\end{split}
	\]
\end{proof}

\begin{lemma}
	\label{lem:outer_prod_distance_bound}
	For any $\{ \aaa^{n} \in \R^{d_n} \}_{n = 1}^N, \{ \bb^{n} \in \R^{d_n} \}_{n = 1}^N$, where $d_1, \ldots, d_N \in \N$, it holds that:
	\[
	\norm{ \tenp_{n = 1}^N \aaa^{n} - \tenp_{n = 1}^N \bb^{n} } \leq \sum_{n = 1}^N \norm{ \aaa^{n} - \bb^{n} } \cdot \prod_{n' \neq n} \max \left \{ \normnoflex{ \aaa^{n'} }, \normnoflex{ \bb^{n'} } \right \}
	\text{\,.}
	\]
\end{lemma}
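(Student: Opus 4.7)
The plan is to use a standard telescoping argument, replacing the $\aaa^n$ factors by $\bb^n$ factors one at a time, and then bounding each term in the telescoping sum via the triangle inequality together with the fact that the norm of an outer product equals the product of the norms.

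Concretely, I would define intermediate tensors
\[
T_k := \aaa^{1} \tenp \cdots \tenp \aaa^{k} \tenp \bb^{k+1} \tenp \cdots \tenp \bb^{N} \quad , ~k = 0, 1, \ldots, N \text{\,,}
\]
with the natural conventions $T_0 = \tenp_{n = 1}^N \bb^{n}$ and $T_N = \tenp_{n = 1}^N \aaa^{n}$. Writing the difference of interest as the telescoping sum $T_N - T_0 = \sum_{k = 1}^{N} ( T_k - T_{k-1} )$, multilinearity of the outer product gives
\[
T_k - T_{k-1} = \aaa^{1} \tenp \cdots \tenp \aaa^{k-1} \tenp ( \aaa^{k} - \bb^{k} ) \tenp \bb^{k+1} \tenp \cdots \tenp \bb^{N}
\text{\,,}
\]
so that $\norm{ T_k - T_{k-1} } = \norm{ \aaa^{k} - \bb^{k} } \cdot \prod_{n < k} \normnoflex{ \aaa^{n} } \cdot \prod_{n > k} \normnoflex{ \bb^{n} }$, using the identity $\normnoflex{ \tenp_{n = 1}^N \w^{n} } = \prod_{n = 1}^N \normnoflex{ \w^{n} }$.

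Applying the triangle inequality to the telescoping sum and upper bounding each $\normnoflex{ \aaa^{n} }$ or $\normnoflex{ \bb^{n} }$ appearing in the product by $\max \{ \normnoflex{ \aaa^{n} }, \normnoflex{ \bb^{n} } \}$ then yields
\[
\norm{ \tenp_{n = 1}^N \aaa^{n} - \tenp_{n = 1}^N \bb^{n} } \leq \sum_{k = 1}^{N} \norm{ \aaa^{k} - \bb^{k} } \cdot \prod_{n \neq k} \max \left \{ \normnoflex{ \aaa^{n} }, \normnoflex{ \bb^{n} } \right \} \text{\,,}
\]
which is the desired bound after renaming the summation index. There is no real obstacle in this proof; the only point to be mindful of is getting the telescoping indexing right and verifying the norm-of-outer-product identity (which follows immediately by writing out entries and factoring), so the argument should be essentially a few lines.
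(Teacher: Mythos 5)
Your telescoping argument is correct and is essentially the same as the paper's proof, which proceeds by induction on $N$: unrolling that induction produces exactly your telescoping decomposition (replacing one factor $\bb^n$ by $\aaa^n$ at a time, bounding each term via the triangle inequality, multiplicativity of the norm over outer products, and the $\max$). The presentation differs (explicit sum vs.\ induction) but the underlying idea is identical.
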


\begin{proof}
	The proof is by induction over $N \in \N$.
	For $N = 1$, the claim is trivial.
	Assuming it holds for $N - 1 \geq 1$, we show that it holds for $N$ as well:
	\[
	\begin{split}
		\norm{ \tenp_{n = 1}^N \aaa^{n} - \tenp_{n = 1}^N \bb^{n} } & =  \norm{ \tenp_{n = 1}^N \aaa^{n} - \left ( \tenp_{n = 1}^{N - 1} \aaa^{n} \right ) \tenp \bb^{N} +  \left ( \tenp_{n = 1}^{N - 1} \aaa^{n} \right ) \tenp \bb^{N} - \tenp_{n = 1}^N \bb^{n} } \\
		& \leq \norm{ \aaa^{N} - \bb^{N} } \cdot \norm{ \tenp_{n = 1}^{N - 1} \aaa^{n} } + \norm{ \tenp_{n = 1}^{N - 1} \aaa^{n} - \tenp_{n = 1}^{N - 1} \bb^{n} } \cdot \norm{ \bb^{N} } \\
		& \leq  \norm{ \aaa^{N} - \bb^{N} } \cdot \prod_{n = 1}^{N - 1} \max \left \{ \norm{ \aaa^{n} }, \norm{ \bb^{n} } \right \} \\
		& \hspace{5mm} + \norm{ \tenp_{n = 1}^{N - 1} \aaa^{n} - \tenp_{n = 1}^{N - 1} \bb^{n} } \cdot \max \left \{ \norm{ \aaa^{N} }, \norm{ \bb^{N} } \right \}
		\text{\,.}
	\end{split}
	\]
	The proof concludes by the inductive assumption for $N - 1$.
\end{proof}

\begin{lemma}
	\label{lem:param_dist_to_end_to_end_dist}
	Let $B_{\norm{\cdot}}, B_{dist} > 0$ and $\{ \aaa_{r}^{n} \in \R^{d_n} \}_{r = 1}^R\hspace{0mm}_{n = 1}^N, \{ \bb_{r}^{n} \in \R^{d_n} \}_{r = 1}^R\hspace{0mm}_{n = 1}^N$, where $d_1, \ldots, d_N \in \N$, such that $\max \{ \normnoflex{ \aaa_r^{n} }, \normnoflex{ \bb_r^{n} } \}_{r = 1}^R\hspace{0mm}_{n = 1}^N \leq B_{\norm{\cdot}}$ and $( \sum_{r = 1}^R \sum_{n = 1}^N \normnoflex{ \aaa_r^{n} - \bb_r^{n} }^2 )^{1 / 2} \leq B_{dist}$.
	Then:
	\[
	\norm{ \sum_{r = 1}^R \tenp_{n = 1}^N \aaa_r^{n} - \sum_{r = 1}^R \tenp_{n = 1}^N \bb_r^{n} } \leq \sqrt{R N} B_{\norm{\cdot}}^{N - 1} B_{dist}
	\text{\,.}
	\]
\end{lemma}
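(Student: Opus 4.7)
The plan is to reduce the claim to two standard tools: the triangle inequality (to decouple the sum over components $r$), the previously established \textbf{Lemma~\ref{lem:outer_prod_distance_bound}} (to bound each per-component outer-product difference by a sum of vector-wise distances scaled by the factor-wise norm bound), and finally Cauchy--Schwarz (to pass from the resulting $\ell_1$-style expression in the $\|\aaa_r^n - \bb_r^n\|$'s to the $\ell_2$-style bound $B_{dist}$).

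Concretely, I would first write
\[
\Big\| \sum_{r=1}^{R} \tenp_{n=1}^{N} \aaa_r^{n} - \sum_{r=1}^{R} \tenp_{n=1}^{N} \bb_r^{n} \Big\|
\leq \sum_{r=1}^{R} \Big\| \tenp_{n=1}^{N} \aaa_r^{n} - \tenp_{n=1}^{N} \bb_r^{n} \Big\|
\]
by the triangle inequality. Then for each $r$, invoke Lemma~\ref{lem:outer_prod_distance_bound}; because $\max\{\|\aaa_r^{n'}\|,\|\bb_r^{n'}\|\} \leq B_{\|\cdot\|}$ by hypothesis, each of the $N-1$ factors in its product bound is at most $B_{\|\cdot\|}$, giving
\[
\Big\| \tenp_{n=1}^{N} \aaa_r^{n} - \tenp_{n=1}^{N} \bb_r^{n} \Big\| \leq B_{\|\cdot\|}^{N-1} \sum_{n=1}^{N} \norm{\aaa_r^n - \bb_r^n}.
\]

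Summing over $r$ leaves me with $B_{\|\cdot\|}^{N-1}\sum_{r,n}\|\aaa_r^n - \bb_r^n\|$, which is a sum of $RN$ nonnegative scalars. Applying Cauchy--Schwarz against the all-ones vector of length $RN$ yields
\[
\sum_{r=1}^{R}\sum_{n=1}^{N}\norm{\aaa_r^n - \bb_r^n} \leq \sqrt{RN}\Big(\sum_{r=1}^{R}\sum_{n=1}^{N}\norm{\aaa_r^n - \bb_r^n}^2\Big)^{1/2} \leq \sqrt{RN}\,B_{dist},
\]
and combining with the previous displays gives exactly the desired $\sqrt{RN}\,B_{\|\cdot\|}^{N-1}B_{dist}$.

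There is no real obstacle here; the lemma is a quantitative stability estimate for sums of outer products, and the only subtlety is matching the $\ell_2$ norm on parameter-space differences (in $B_{dist}$) with the per-summand $\ell_1$-type bound produced by Lemma~\ref{lem:outer_prod_distance_bound}, which is exactly what Cauchy--Schwarz handles, at the cost of the $\sqrt{RN}$ factor appearing in the statement.
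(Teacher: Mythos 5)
Your proof is correct and follows essentially the same route as the paper's: triangle inequality over components, Lemma~\ref{lem:outer_prod_distance_bound} with each factor bounded by $B_{\norm{\cdot}}$, and then the passage from the $\ell_1$-style sum to the $\ell_2$ quantity $B_{dist}$ via the inequality $\norm{\x}_1 \leq \sqrt{d}\,\norm{\x}$ (which you phrase as Cauchy--Schwarz against the all-ones vector, an equivalent formulation).
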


\begin{proof}
	Applying the triangle inequality and Lemma~\ref{lem:outer_prod_distance_bound}, we have that:
	\[
	\begin{split}
		\norm{ \sum_{r = 1}^R \tenp_{n = 1}^N \aaa_r^{n} - \sum_{r = 1}^R \tenp_{n = 1}^N \bb_r^{n} } & \leq \sum_{r = 1}^R \norm{ \tenp_{n = 1}^N \aaa_r^{n} - \tenp_{n = 1}^N \bb_r^{n} } \\
		& \leq \sum_{r = 1}^R \sum_{n = 1}^N \norm{ \aaa_r^{n} - \bb_r^{n} } \cdot \prod_{n' \neq n} \max \left \{ \normnoflex{ \aaa_r^{n'} }, \normnoflex{ \bb_r^{n'} } \right \} \\
		& \leq B_{\norm{\cdot}}^{N - 1} \sum_{r = 1}^R \sum_{n = 1}^N \norm{ \aaa_r^{n} - \bb_r^{n} }
		\text{\,.}
	\end{split}
	\]
	The desired result readily follows from the fact that $\normnoflex{ \x }_1 \leq \sqrt{d} \cdot \normnoflex{ \x }$ for any $\x \in \R^d$:
	\[
	\begin{split}
		\norm{ \sum_{r = 1}^R \tenp_{n = 1}^N \aaa_r^{n} - \sum_{r = 1}^R \tenp_{n = 1}^N \bb_r^{n} } 
		& \leq B_{\norm{\cdot}}^{N - 1} \sum_{r = 1}^R \sum_{n = 1}^N \norm{ \aaa_r^{n} - \bb_r^{n} } \\
		& \leq B_{\norm{\cdot}}^{N - 1} \sqrt{R N} \left ( \sum_{r = 1}^R \sum_{n = 1}^N \norm{ \aaa_r^{n} - \bb_r^{n} }^2 \right )^{1 / 2} \\
		& \leq \sqrt{R N} B_{\norm{\cdot}}^{N - 1} B_{dist}
		\text{\,.}
	\end{split}
	\]
\end{proof}

\begin{lemma}
	\label{lem:ivp_no_sign_change}
	Let $f: [0, T_2) \to \R$ and $g: [0, T_1) \to \R$ be continuous functions, where $T_1 < T_2$.
	Suppose that $g(t)$ is bounded, $f(0) > 0$, and:
	\be
	\frac{d}{dt} f(t) = f(t)^p \cdot g(t) \quad, ~t \in [0, T_1)
	\text{\,,}
	\label{eq:ivp_no_sign_change_diff_eq}
	\ee
	for $1 < p \in \R$.
	Then, $f(t) > 0$ for all $t \in [0, T_1]$.
\end{lemma}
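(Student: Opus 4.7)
The plan is a standard contradiction via separation of variables. Assume the conclusion fails and let $t^* := \inf \{ t \in (0, T_1] : f(t) \leq 0 \}$. Continuity of $f$ together with $f(0) > 0$ forces $t^* > 0$ and, passing to the limit from both sides, $f(t^*) = 0$ while $f(t) > 0$ on $[0, t^*)$.

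On $[0, t^*)$ the ODE \eqref{eq:ivp_no_sign_change_diff_eq} holds and $f$ is strictly positive, so $f(t)^{1-p}$ is continuously differentiable (via the chain rule applied to $y \mapsto y^{1-p}/(1-p)$, which is smooth on $(0,\infty)$). Rewriting \eqref{eq:ivp_no_sign_change_diff_eq} as $\frac{d}{dt}\bigl[\tfrac{1}{1-p} f(t)^{1-p}\bigr] = g(t)$ and integrating from $0$ to any $t \in [0, t^*)$ yields
\be
\frac{1}{f(t)^{p-1}} \;=\; \frac{1}{f(0)^{p-1}} \;-\; (p-1) \int_0^t g(s) \, ds .
\label{eq:ivp_sep_var_plan}
\ee

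Let $M := \sup_{s \in [0, T_1)} \abs{g(s)} < \infty$ by the boundedness hypothesis on $g$. The right-hand side of \eqref{eq:ivp_sep_var_plan} is then bounded above, uniformly in $t \in [0, t^*)$, by $f(0)^{-(p-1)} + (p-1) M T_1$. On the other hand, as $t \to t^{*-}$, continuity gives $f(t) \to 0^+$, and since $p > 1$ the left-hand side diverges to $+\infty$. This contradicts the uniform upper bound, so no such $t^*$ can exist, and hence $f(t) > 0$ on all of $[0, T_1]$.

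I do not anticipate a real obstacle here; the only mildly delicate point is justifying the derivative identity for $f^{1-p}$ when $p$ is non-integer, which is handled by the chain rule once strict positivity of $f$ on $[0, t^*)$ is in hand. Everything else is routine bookkeeping, and it is worth noting that the argument never uses the sign of $g$, only its boundedness, which is exactly the hypothesis provided.
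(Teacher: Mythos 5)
Your proof is correct and uses the same core idea as the paper's: separation of variables to obtain $f(t)^{1-p} = f(0)^{1-p} - (p-1)\int_0^t g$, then boundedness of $g$ to control the right-hand side. The paper instead invokes uniqueness of the initial value problem to identify $f$ with the explicit solution $h(t)$ and then bounds $h$ below; your contradiction framing integrates the ODE directly on the interval where $f>0$ and thus avoids appealing to IVP uniqueness and the accompanying footnote about potential asymptotes of $h$, which is a modest simplification.
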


\begin{proof}
	Consider the initial value problem induced by Equation~\eqref{eq:ivp_no_sign_change_diff_eq} over the interval $[0, T_1)$, with an initial value of $f(0)$.
	One can verify by differentiation that it is solved by:
	\[
	h(t) = \left ( f (0)^{1 - p} - (p - 1) \int_{t' = 0}^t g(t') dt' \right )^{- \frac{1}{p - 1}}
	\text{\,.}
	\]
	Since the problem has a unique solution (see, \eg, Theorem~2.2 in~\citet{teschl2012ordinary}), it follows that for any $t \in [0, T_1)$:\footnote{
		A technical subtlety is that, in principle, $h( \cdot )$ may asymptote at some $\widebar{T}_1 \in [0, T_1)$.
		However, since the initial value problem has a unique solution, $f(t) = h(t)$ until that time.
		This means $h( \cdot )$ cannot asymptote before $T_1$ as that would contradict continuity of $f (\cdot)$ over $[0, T_2)$.
	}
	\[
		f(t) = h(t) = \left ( f (0)^{1 - p} - (p - 1) \int_{t' = 0}^t g(t') dt' \right )^{- \frac{1}{p - 1}} \geq \left ( f (0)^{1 - p} + (p - 1) \int_{t' = 0}^{t} \abs{ g(t') } dt'  \right )^{- \frac{1}{p - 1}} 
		\text{\,.}
	\]
	Recall that $g(t)$ is bounded.
	Hence, from the inequality above and continuity of $f(\cdot)$ we conclude:
	\[
	f(t) \geq \left ( f (0)^{1 - p} + (p - 1) \cdot \sup\nolimits_{t' \in [0, T_1)} \abs{g(t')} \cdot T_1 \right )^{- \frac{1}{p - 1}} > 0 ~~,~t \in [0, T_1]
	\text{\,.}
	\] 
\end{proof}

\begin{lemma}
	\label{lem:gf_smooth_dist_bound}
	Let $\theta, \theta' : [0, T] \to \R^d$, where $T > 0$, be two curves born from gradient flow over a continuously differentiable function $f: \R^{d} \to \R$:
	\beas
	\theta ( 0 ) = \theta_0 \in \R^d ~ &,& ~ \tfrac{d}{dt} \theta ( t ) = - \nabla f ( \theta ( t ) ) ~~ , ~ t \in [ 0 , T ] 
	\text{\,,}
	\\ [1mm]
	\theta' ( 0 ) = \theta' _0 \in \R^d ~ &,& ~ \tfrac{d}{dt} \theta' ( t ) = - \nabla f ( \theta' ( t ) ) ~~ , ~ t \in [ 0 , T ]
	\text{\,.}
	\eeas
	Let $D > 0$, and suppose that $f( \cdot )$ is $\beta$-smooth over $\D_{D + 1}$ for some $\beta \geq 0$,\footnote{
		That is, for any $\theta_1, \theta_2 \in \D_{D + 1}$ it holds that $\norm{ \nabla f (\theta_1) - \nabla f (\theta_2) } \leq \beta \cdot \norm{ \theta_1 - \theta_2}$.
	} where $\D_{D + 1} := \{ \theta \in \R^d : \normnoflex{ \theta } \leq D + 1 \}$.
	Then, if $\normnoflex{ \theta (0) - \theta' (0) } < \exp ( -\beta \cdot T )$, it holds that:
	\be
	\norm{ \theta (t) - \theta' (t) } \leq \norm{ \theta (0) - \theta' (0) } \cdot \exp \left ( \beta \cdot t \right )
	\text{\,}
	\label{eq:gf_smooth_dist_bound}
	\ee
	at least until $t \geq T$ or $\normnoflex{ \theta' (t) } \geq D$.
	That is, Equation~\eqref{eq:gf_smooth_dist_bound} holds for all $t \in [0, \min \{ T, T_D \}]$, where $T_D := \inf \{ t \geq 0 : \normnoflex{ \theta' (t) } \geq D \}$.

\end{lemma}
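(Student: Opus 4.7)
The plan is to apply a standard Gr\"onwall-type argument to the squared distance $g(t) := \| \theta (t) - \theta' (t) \|^2$ between the two trajectories, with the main subtlety being the need to ensure that both curves remain within the region $\D_{D+1}$ where the smoothness assumption is available.

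First, I would differentiate $g(t)$ and use the gradient flow equations to obtain
\[
\tfrac{d}{dt} g(t) = 2 \inprod{ \theta (t) - \theta' (t) }{ - \nabla f ( \theta (t) ) + \nabla f ( \theta' (t) ) } \leq 2 \norm{ \theta (t) - \theta' (t) } \cdot \norm{ \nabla f ( \theta (t) ) - \nabla f ( \theta' (t) ) }
\]
by Cauchy--Schwarz. Provided that both $\theta(t)$ and $\theta'(t)$ lie in $\D_{D+1}$, $\beta$-smoothness of $f(\cdot)$ on this set gives $\tfrac{d}{dt} g(t) \leq 2 \beta g(t)$, from which $g(t) \leq g(0) \exp(2\beta t)$ follows either by direct integration of $\tfrac{d}{dt} \log g(t)$ (on intervals where $g(t) > 0$) or by the standard Gr\"onwall lemma; taking square roots then yields Equation~\eqref{eq:gf_smooth_dist_bound}. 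The case $g(0) = 0$ is immediate from uniqueness of solutions to the flow.

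The main obstacle, which is where the unusual hypothesis $\norm{\theta(0) - \theta'(0)} < \exp(-\beta T)$ enters, is making this argument rigorous on the whole interval $[0, \min\{T, T_D\}]$, since the smoothness bound is only assumed on $\D_{D+1}$. For this I would use a continuity/bootstrap argument: let $T^\star$ be the supremum of times $t \in [0, \min\{T, T_D\}]$ up to which the target inequality~\eqref{eq:gf_smooth_dist_bound} holds and both trajectories lie in $\D_{D+1}$. By continuity $T^\star > 0$, and for any $t \leq T^\star$ the conclusion of the Gr\"onwall step above applies, giving $\norm{\theta(t) - \theta'(t)} \leq \exp(-\beta T) \cdot \exp(\beta t) \leq 1$. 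Since $\theta'(t) \in \D_D$ for $t \leq T_D$, this yields $\norm{\theta(t)} \leq \norm{\theta'(t)} + 1 \leq D + 1$, so $\theta(t) \in \D_{D+1}$ with strict slack in the interior. Were $T^\star < \min\{T, T_D\}$, this strict slack together with continuity of both curves would allow the region and the bound to be extended slightly past $T^\star$, contradicting maximality. Hence $T^\star = \min\{T, T_D\}$ and Equation~\eqref{eq:gf_smooth_dist_bound} holds throughout the desired interval.
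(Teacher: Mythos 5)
Your argument is correct and takes essentially the same approach as the paper: a Gr\"onwall bound on $\|\theta(t) - \theta'(t)\|^2$ obtained via Cauchy--Schwarz and $\beta$-smoothness, combined with a stopping-time device to certify that both curves remain in $\D_{D+1}$ for as long as needed. The paper's variant defines $\bar T_D := \inf\{t : \|\theta'(t)\|\ge D \text{ or } \|\theta(t)\|\ge D+1\}$, proves the bound on $[0,\bar T_D]$, and then shows $\min\{T,T_D\}\le \bar T_D$ via the same ``distance $<1$'' observation you use in your bootstrap on $T^\star$, after first disposing of the degenerate case $\|\theta'(0)\|\ge D$ (where $T_D = 0$) which your write-up leaves implicit.
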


\begin{proof}
	If $\normnoflex{  \theta' (0) } \geq D$, the claim trivially holds.
	Suppose $\normnoflex{ \theta' (0) } < D$, and notice that in this case $\normnoflex{ \theta (0) } < \normnoflex{ \theta' (0) } + \exp ( - \beta \cdot T) < D + 1$.
	We examine the initial time at which $\normnoflex{  \theta' (t) } \geq D$ or $\normnoflex{ \theta (t) } \geq D + 1$.
	That is, let $\widebar{T}_{D } := \inf \left \{ t \in [0, T ] :  \normnoflex{  \theta' (t) } \geq D \text{ or }  \normnoflex{ \theta (t) } \geq D + 1 \right \}$, where we take $\widebar{T}_{D} := T$ if the set is empty.
	Since both $\norm{  \theta' (t) }$ and $\norm{ \theta (t) }$ are continuous in $t$, it must be that $\widebar{T} > 0$.
	Furthermore, $\norm{  \theta' (t) } \leq D$ and $\norm{ \theta (t) } \leq D + 1$ for all $t \in [0, \widebar{T}_{D}]$.
	
	Now, define the function $g: [0, T] \to \R_{\geq 0}$ by $g(t) := \norm{ \theta(t) -  \theta' (t) }^2$.
	For any $t \in [0, \widebar{T}_{D}]$ it holds that:
	\[
	\begin{split}
		\frac{d}{dt} g(t) & = 2 \inprod{ \theta (t) -  \theta' (t) }{ \tfrac{d}{dt} \theta (t) - \tfrac{d}{dt}  \theta' (t) } \\
		& = -2 \inprod{ \theta (t) -  \theta' (t) }{ \nabla f ( \theta (t) ) - \nabla f ( \theta' (t) )  }
		\text{\,.}
	\end{split}
	\]
	By the Cauchy-Schwartz inequality and $\beta$-smoothness of $f (\cdot)$ over $\D_{D + 1}$ we have:
	\be
	\begin{split}
		\frac{d}{dt} g(t) & \leq 2 \beta \cdot \norm{ \theta (t) -  \theta' (t) }^2 = 2 \beta \cdot g(t) 
		\text{\,.}
	\end{split}
	\label{eq:g_time_deriv_smoothness_bound}
	\ee
	Thus, Gronwall's inequality leads to $g(t) \leq g(0) \cdot \exp ( 2 \beta \cdot t )$.
	Taking the square root of both sides then establishes Equation~\eqref{eq:gf_smooth_dist_bound} for all $t \in [0 , \widebar{T}_{D}]$.

	If $\widebar{T}_D = T$, the proof concludes since Equation~\eqref{eq:gf_smooth_dist_bound} holds over $[0, T]$.
	Otherwise, if $\widebar{T}_D < T$, then either $\normnoflex{ \theta' (\widebar{T}_D ) } = D$ or $\normnoflex{ \theta ( \widebar{T}_D ) } = D + 1$.
	It suffices to show that in both cases $T_D \leq \widebar{T}_D$.
	In case $\normnoflex{  \theta' ( \widebar{T}_D ) } = D$, the definition of $T_D$ implies $T_D = \widebar{T}_D$.
	On the other hand, suppose $\normnoflex{ \theta ( \widebar{T}_D ) } = D + 1$.
	Since $\norm{ \theta (0) -  \theta' (0) } < \exp ( -\beta \cdot T )$, the fact that Equation~\eqref{eq:gf_smooth_dist_bound} holds for $\widebar{T}_D$ gives $\norm{ \theta (\widebar{T}_D) -  \theta' (\widebar{T}_D) } \leq 1$.
	Therefore, it must be that $\norm{  \theta' ( \widebar{T}_D ) } \geq D$, and so $T_D \leq \widebar{T}_D$, completing the proof.
\end{proof}

\subsubsection{Tensor Factorization}
\label{app:proofs:useful_lemmas:tf}

Suppose that we minimize the objective $\phi(\cdot)$ (Equations~\eqref{eq:cp_objective} and~\eqref{eq:end_tensor}) via gradient flow over an $R$-component tensor factorization (Equation~\eqref{eq:cp_gf}), where we allow the loss $\L (\cdot)$ in Equation~\eqref{eq:cp_objective} to be any differentiable and locally smooth function.
Under this setting, the following lemmas establish several results which will be of use when proving the main theorems.

\begin{lemma}
	\label{lem:cp_gradient}
	For any $\{ \w_{r}^{n}  \in \R^{d_n} \}_{r = 1}^R\hspace{0mm}_{n = 1}^N$:
	\[
	\frac{\partial}{\partial \w_r^{n}} \phi \left (\{ \w_{r'}^{n'} \}_{r' = 1}^R\hspace{0mm}_{n' = 1}^N \right ) = \mat{ \nabla \L \left ( \W_e \right ) }_{n} \cdot \kronp_{n' \neq n} \w_r^{n'} \quad , ~r = 1, \ldots, R ~ , ~ n = 1, \ldots, N
	\text{\,,}
	\]
	where $\W_e$ denotes the end tensor (Equation~\eqref{eq:end_tensor}) induced by $\{ \w_{r}^{n} \}_{r = 1}^R\hspace{0mm}_{n = 1}^N$.
\end{lemma}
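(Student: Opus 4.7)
The plan is to prove the identity by computing a directional derivative and then applying the preceding technical lemma (Lemma~\ref{lem:inp_with_tenp_to_mat_kronp}) to convert the resulting tensor inner product into the claimed matricization-Kronecker form. Since each $\w_r^n$ affects only a single summand of the end tensor $\W_e$, and that summand is multilinear in the weight vectors, the chain rule yields a clean expression.

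Concretely, I would fix $r \in [R]$, $n \in [N]$, and an arbitrary direction $\vv \in \R^{d_n}$. The first step is to observe that among the $R$ rank-one summands defining $\W_e$, only the $r$'th one depends on $\w_r^n$, and it is linear in $\w_r^n$ with the other factors held fixed. Therefore
\[
\frac{d}{dt}\bigg|_{t=0} \W_e\big(\w_r^n + t\vv\big) \;=\; \w_r^{1} \tenp \cdots \tenp \w_r^{n-1} \tenp \vv \tenp \w_r^{n+1} \tenp \cdots \tenp \w_r^{N}.
\]
Composing with $\L$ and using the chain rule (valid since $\L$ is differentiable), the directional derivative of $\phi$ in direction $\vv$ equals
\[
\Big\langle \nabla \L(\W_e),\; \w_r^{1} \tenp \cdots \tenp \w_r^{n-1} \tenp \vv \tenp \w_r^{n+1} \tenp \cdots \tenp \w_r^{N} \Big\rangle.
\]

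The second step is to invoke Lemma~\ref{lem:inp_with_tenp_to_mat_kronp} with $\W = \nabla \L(\W_e)$, $\w^{n'} = \w_r^{n'}$ for $n' \neq n$, and $\w^n = \vv$, which rewrites the above tensor inner product as the Euclidean inner product
\[
\Big\langle \mat{\nabla \L(\W_e)}_n \cdot \kronp_{n' \neq n} \w_r^{n'},\; \vv \Big\rangle.
\]
Since by definition of the gradient this quantity also equals $\inprod{\partial \phi / \partial \w_r^n}{\vv}$, and $\vv \in \R^{d_n}$ is arbitrary, the two vectors in $\R^{d_n}$ must be equal, yielding the stated formula.

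There is no real obstacle here: the result is a routine chain-rule computation, and the only nontrivial bookkeeping step, namely the conversion from the outer-product inner product to a matricized Kronecker expression, has already been encapsulated in Lemma~\ref{lem:inp_with_tenp_to_mat_kronp}. The only point that deserves care is confirming that the differentiation is applied to the correct summand (multilinearity makes the others vanish), which is immediate from the definition of $\W_e$ in Equation~\eqref{eq:end_tensor}.
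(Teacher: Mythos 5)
Your proof is correct and takes essentially the same approach as the paper: both arguments exploit the multilinearity of the $r$'th summand of $\W_e$ in $\w_r^n$, pass the perturbation through $\L$ via its first-order expansion, and invoke Lemma~\ref{lem:inp_with_tenp_to_mat_kronp} to identify the gradient in the claimed matricization-Kronecker form. The only cosmetic difference is that you phrase the computation as a directional derivative in an arbitrary direction $\vv$, whereas the paper writes a first-order Taylor expansion with increment $\Delta$ and appeals to uniqueness of the linear approximation; these are equivalent.
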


\begin{proof}
	For $r \in [R], n \in [N]$, we treat $\{ \w_{r'}^{n'} \}_{(r', n') \neq (r, n)}$ as fixed, and with slight abuse of notation consider:
	\[
	\phi_{r, n} \left ( \w_r^{n} \right ) := \phi \left (\{ \w_{r'}^{n'} \}_{r' = 1}^R\hspace{0mm}_{n' = 1}^N \right )
	\text{\,.}
	\]
	For $\Delta \in \R^{d_n}$, from the first order Taylor approximation of $\L (\cdot)$ we have that:
	\[
	\begin{split}
		\phi_{r, n} \left ( \w_r^{n} + \Delta \right ) & = \L \left ( \W_e + \left ( \tenp_{n' = 1}^{n - 1} \w_r^{n'} \right ) \tenp \Delta \tenp \left ( \tenp_{n' = n + 1}^N \w_r^{n'} \right ) \right ) \\
		& = \L \left (  \W_e \right ) + \inprod{ \nabla \L \left ( \W_e \right ) }{ \left ( \tenp_{n' = 1}^{n - 1} \w_r^{n'} \right ) \tenp \Delta \tenp \left ( \tenp_{n' = n + 1}^N \w_r^{n'} \right )  } + o \left ( \norm{ \Delta } \right )
		\text{\,.}
	\end{split}
	\]
	Since $\L \left ( \W_e \right ) = \phi_{r, n} ( \w_r^{n} )$, by applying Lemma~\ref{lem:inp_with_tenp_to_mat_kronp} we arrive at:
	\[
	\phi_{r, n} \left ( \w_r^{n} + \Delta \right ) = \phi_{r, n} \left ( \w_r^{n} \right ) + \inprod{ \mat{ \nabla \L ( \W_e ) }_{n} \cdot \kronp_{n' \neq n} \w_r^{n'} }{ \Delta } + o(\norm{\Delta})
	\text{\,.}
	\]
	Uniqueness of the linear approximation of $\phi_{r, n} ( \cdot )$ at $\w_r^{n}$ then implies:
	\[
	\frac{\partial}{\partial \w_r^{n}} \phi \left ( \{ \w_{r'}^{n'} \}_{r' = 1}^R\hspace{0mm}_{n' = 1}^N \right ) = \frac{d}{d \w_r^{n}} \phi_{r, n} \left ( \w_r^{n} \right ) = \mat{ \nabla \L \left ( \W_e \right ) }_{n} \cdot \kronp_{n' \neq n} \w_r^{n'}
	\text{\,.}
	\]
\end{proof}

\begin{lemma}
	\label{lem:dyn_parameter_vector_sq_norm}
	For any $r \in [R]$ and $n \in [N]$:
	\[
	\frac{d}{dt}  \normnoflex{ \w_r^{n} (t) }^2 = - 2 \inprod{ \nabla \L \left ( \W_e (t) \right ) }{ \tenp_{n' = 1}^N \w_r^{n'} (t) } 
	\text{\,.}
	\]
\end{lemma}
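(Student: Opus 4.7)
The plan is to reduce the claim to a direct computation using the chain rule, the gradient flow equation~\eqref{eq:cp_gf}, and the two preceding lemmas (Lemma~\ref{lem:cp_gradient} and Lemma~\ref{lem:inp_with_tenp_to_mat_kronp}).

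First, I would differentiate the squared norm with respect to time via the chain rule to obtain
\[
\tfrac{d}{dt}\|\w_r^n(t)\|^2 \;=\; 2\inprod{\w_r^n(t)}{\tfrac{d}{dt}\w_r^n(t)}.
\]
Next, I substitute the gradient flow equation~\eqref{eq:cp_gf}, which replaces $\tfrac{d}{dt}\w_r^n(t)$ with $-\tfrac{\partial}{\partial \w_r^n}\phi(\{\w_{r'}^{n'}(t)\}_{r',n'})$. Then I invoke Lemma~\ref{lem:cp_gradient} to express this partial derivative as $\mat{\nabla \L(\W_e(t))}_n \cdot \kronp_{n'\neq n} \w_r^{n'}(t)$, giving
\[
\tfrac{d}{dt}\|\w_r^n(t)\|^2 \;=\; -2\inprod{\w_r^n(t)}{\mat{\nabla \L(\W_e(t))}_n \cdot \kronp_{n'\neq n} \w_r^{n'}(t)}.
\]

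Finally, I apply Lemma~\ref{lem:inp_with_tenp_to_mat_kronp} in the ``reverse'' direction, with $\W := \nabla \L(\W_e(t))$ and the vectors $\w^{n'} := \w_r^{n'}(t)$, to convert the matricization/Kronecker-product expression back into an inner product between the tensor $\nabla \L(\W_e(t))$ and the rank-one outer product $\tenp_{n'=1}^N \w_r^{n'}(t)$. This yields exactly the claimed identity.

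I do not anticipate any real obstacle here: the argument is purely an unpacking of definitions combined with the two preparatory lemmas, with no analytic subtlety (no regularity issue arises because the objective is differentiable by assumption on~$\ell$, and the chain rule applies pointwise along the gradient flow trajectory).
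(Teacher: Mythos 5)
Your proposal is correct and follows exactly the same route as the paper's proof: differentiate $\normnoflex{\w_r^n(t)}^2$ via the chain rule, substitute the gradient flow equation, apply Lemma~\ref{lem:cp_gradient} to expand the partial derivative of $\phi$, and use Lemma~\ref{lem:inp_with_tenp_to_mat_kronp} to rewrite the resulting inner product in tensor form. There is nothing to add or correct.
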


\begin{proof}
	Fix $r \in [R]$ and $n \in [N]$.
	Differentiating $\normnoflex{ \w_r^{n} (t) }^2$ with respect to time, we have:
	\[
	\frac{d}{dt} \normnoflex{ \w_r^{n} (t) }^2 = 2 \inprod{  \w_r^{n} (t) }{ \tfrac{d}{dt} \w_r^{n} (t) } = - 2 \inprod{  \w_r^{n} (t) }{ \frac{\partial}{\partial \w_r^{n}} \phi \left (\{ \w_{r'}^{n'} (t) \}_{r' = 1}^R\hspace{0mm}_{n' = 1}^N \right ) }
	\text{\,.}
	\]
	Applying Lemmas~\ref{lem:cp_gradient} and~\ref{lem:inp_with_tenp_to_mat_kronp} completes the proof.
\end{proof}

\begin{lemma}[Lemma~\ref{lem:balancedness_conservation_body} restated]
	\label{lem:balancedness_conservation}
	For all $r \in [R]$ and $n, \bar{n} \in [N]$:
	\[
	\norm{ \w_r^{n} (t) }^2 - \norm{ \w_r^{\bar{n}} (t) }^2 = \norm{ \w_r^{n} (0) }^2 - \norm{ \w_r^{\bar{n}} (0) }^2 \quad , ~t \geq 0
	\text{\,.}
	\]
\end{lemma}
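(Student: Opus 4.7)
The plan is to reduce the claim to the already-established Lemma~\ref{lem:dyn_parameter_vector_sq_norm}, which gives a closed form for the time derivative of the squared norm of any individual weight vector. The key observation is that the right-hand side of that identity does not depend on the index $n$: it equals $-2\inprod{\nabla\L(\W_e(t))}{\tenp_{n'=1}^N \w_r^{n'}(t)}$, which is a function of $r$ and $t$ alone.

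First I would fix $r \in [R]$ and arbitrary $n,\bar n \in [N]$, and invoke Lemma~\ref{lem:dyn_parameter_vector_sq_norm} twice to obtain
\[
\tfrac{d}{dt}\norm{\w_r^{n}(t)}^2 \;=\; -2\inprod{\nabla\L(\W_e(t))}{\tenp_{n'=1}^N \w_r^{n'}(t)} \;=\; \tfrac{d}{dt}\norm{\w_r^{\bar n}(t)}^2,\quad t\ge 0.
\]
Subtracting then yields $\tfrac{d}{dt}\bigl(\norm{\w_r^{n}(t)}^2-\norm{\w_r^{\bar n}(t)}^2\bigr)=0$ for all $t\ge 0$.

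Since the difference $\norm{\w_r^{n}(t)}^2-\norm{\w_r^{\bar n}(t)}^2$ is continuously differentiable in $t$ (it is a composition of continuously differentiable functions of the trajectory, which is itself $C^1$ as the solution of a smooth ODE), having identically zero derivative over $[0,\infty)$ implies it is constant. Evaluating at $t=0$ then gives the stated identity
\[
\norm{\w_r^{n}(t)}^2-\norm{\w_r^{\bar n}(t)}^2 \;=\; \norm{\w_r^{n}(0)}^2-\norm{\w_r^{\bar n}(0)}^2.
\]

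There is essentially no obstacle: the entire content is packaged inside Lemma~\ref{lem:dyn_parameter_vector_sq_norm}, whose proof in turn rests on the gradient formula from Lemma~\ref{lem:cp_gradient} together with the outer-product identity in Lemma~\ref{lem:inp_with_tenp_to_mat_kronp}. The only mild subtlety worth flagging is that the argument requires $\L(\cdot)$ merely to be differentiable (so that gradient flow is well-defined and Lemma~\ref{lem:dyn_parameter_vector_sq_norm} applies), which is guaranteed by the standing assumption on the loss and is consistent with the remark in footnote~\ref{note:dyn_holds_for_diff_local_smooth_loss} that all Section~\ref{sec:dynamic} results extend beyond the tensor completion setting.
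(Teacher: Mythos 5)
Your proposal is correct and follows essentially the same route as the paper: both invoke Lemma~\ref{lem:dyn_parameter_vector_sq_norm} to observe that $\frac{d}{dt}\normnoflex{\w_r^{n}(t)}^2$ is independent of $n$, then conclude the difference of squared norms is constant in time. The paper integrates both sides directly, whereas you phrase it as the derivative of the difference vanishing; these are the same argument.
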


\begin{proof}[Proof of Lemma~\ref{lem:balancedness_conservation}]
	For any $r \in [R]$ and $n, \bar{n} \in [N]$, by Lemma~\ref{lem:dyn_parameter_vector_sq_norm} it holds that:
	\[
	\frac{d}{dt}  \normnoflex{ \w_r^{n} (t) }^2 = - 2 \inprod{ \nabla \L \left ( \W_e (t) \right ) }{ \tenp_{n' = 1}^N \w_r^{n'} (t) }  = \frac{d}{dt}  \normnoflex{ \w_r^{\bar{n}} (t) }^2
	\text{\,.}
	\]
	Integrating both sides with respect to time gives:
	\[
	\norm{ \w_r^{n} (t) }^2 - \norm{ \w_r^{n} (0) }^2 = \norm{ \w_r^{\bar{n}} (t) }^2 - \norm{ \w_r^{\bar{n}} (0) }^2
	\text{\,.}
	\]
	Rearranging the equality above establishes the desired result.
\end{proof}

\begin{lemma}
	\label{lem:width_R_equivalent_to_larger_width_with_zero_init}
	Let $\widetilde{R} > R$, and define:
	\be
	\widetilde{ \w }_r^n (t) := \begin{cases}
		\w_r^n (t)	& , r \in \{ 1, \ldots, R \} \\
		0 \in \R^{d_n}	& , r \in \{ R + 1, \ldots, \widetilde{R} \}
	\end{cases}
	\quad ,~t \geq 0 ~,~n = 1, \ldots, N
	\text{\,.}
	\label{eq:width_R_equivalent_to_larger_width_with_zero_init}
	\ee
	Then, $\{ \widetilde{ \w }_r^n (t) \}_{r = 1}^{\widetilde{R}}\hspace{0mm}_{n = 1}^N$ follow a gradient flow path of an $\widetilde{R}$-component factorization.
\end{lemma}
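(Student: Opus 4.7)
The plan is to verify directly that the extended curves $\{\widetilde{\w}_r^n(t)\}_{r=1}^{\widetilde{R}}\hspace{0mm}_{n=1}^N$ satisfy the gradient flow equation (Equation~\eqref{eq:cp_gf}) associated with an $\widetilde{R}$-component factorization, using initial condition $\widetilde{\w}_r^n(0) = \w_r^n(0)$ for $r \leq R$ and $\widetilde{\w}_r^n(0) = 0$ for $r > R$. Since the solution to the gradient flow ODE (for a continuously differentiable, locally smooth objective) is unique, matching the ODE and the initial condition is enough.

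The first step is to observe that the end tensor is preserved under the extension. By definition, $\sum_{r=1}^{\widetilde{R}} \tenp_{n=1}^N \widetilde{\w}_r^n(t) = \sum_{r=1}^R \tenp_{n=1}^N \w_r^n(t) + \sum_{r=R+1}^{\widetilde{R}} \tenp_{n=1}^N \0 = \W_e(t)$, since any outer product containing the zero vector vanishes (using $N \geq 2$). Thus, at every time $t$, the end tensor of the $\widetilde{R}$-component factorization induced by $\{\widetilde{\w}_r^n(t)\}_{r,n}$ coincides with the end tensor $\W_e(t)$ of the original $R$-component factorization, and therefore $\nabla \L$ evaluated at the end tensor is identical in both cases.

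The second step is to invoke Lemma~\ref{lem:cp_gradient} to compute the partial gradients of the $\widetilde{R}$-component objective. For $r \in \{1,\ldots,R\}$, the gradient formula $\mat{\nabla \L(\W_e(t))}_n \kronp_{n' \neq n} \widetilde{\w}_r^{n'}(t)$ depends only on $\W_e(t)$ and the weight vectors within component $r$, which by construction agree with those of the original factorization; hence $\tfrac{d}{dt}\widetilde{\w}_r^n(t)$ coincides with $\tfrac{d}{dt}\w_r^n(t)$, so the extended components solve the same ODE as the original. For $r \in \{R+1,\ldots,\widetilde{R}\}$, every factor $\widetilde{\w}_r^{n'}(t)$ with $n' \neq n$ is identically zero, so the Kronecker product $\kronp_{n' \neq n} \widetilde{\w}_r^{n'}(t)$ vanishes, making the partial gradient identically zero; the constant curve $\widetilde{\w}_r^n(t) \equiv \0$ then satisfies the corresponding ODE with the correct initial condition.

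There is essentially no obstacle here: the result is a direct bookkeeping check once Lemma~\ref{lem:cp_gradient} is in hand. The only minor subtlety worth stating explicitly is the appeal to uniqueness of solutions to the gradient flow ODE (valid under the standing assumption that $\L$ is differentiable and locally smooth, as noted in the paper), which guarantees that the curves constructed in Equation~\eqref{eq:width_R_equivalent_to_larger_width_with_zero_init} are indeed \emph{the} gradient flow trajectory of the $\widetilde{R}$-component factorization from the stated initialization, not merely \emph{a} solution of the ODE.
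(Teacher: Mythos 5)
Your argument is correct and coincides with the paper's own proof: both verify directly via Lemma~\ref{lem:cp_gradient} that the extended curves satisfy the gradient flow ODE for the $\widetilde{R}$-component factorization, using the observation that the end tensor (and hence $\nabla\L$) is unchanged by appending zero components, and that the Kronecker product of zero vectors makes the partial gradients vanish for $r > R$. The closing remark about uniqueness is harmless but not needed for this lemma, since ``follow a gradient flow path'' only requires exhibiting that the curves satisfy the ODE.
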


\begin{proof}
	We verify that $\{ \widetilde{ \w }_r^n (t) \}_{r = 1}^{\widetilde{R}}\hspace{0mm}_{n = 1}^N$ satisfy the differential equations governing gradient flow.
	Fix $n \in [N]$.
	For any $r \in [R]$ and $t \geq 0$ we have:
	\[
	\begin{split}
		\frac{d}{dt} \widetilde{\w}_{r}^n (t) & = \frac{d}{dt} \w_{r}^n (t) = - \frac{\partial}{ \partial \w_{r}^{n} } \phi \left (\{ \w_{r'}^{n'} (t) \}_{r' = 1}^{ R }\hspace{0mm}_{n' = 1}^N \right )
		\text{\,.}
	\end{split}
	\]
	Noticing that $\W_e (t) =\sum_{r' = 1}^{R} \tenp_{n' = 1}^N \w_{r'}^{n'} (t) = \sum_{r' = 1}^{\widetilde{R}} \tenp_{n' = 1}^N \widetilde{\w}_{r'}^{n'} (t) = \widetilde{ \W }_e (t)$, and invoking Lemma~\ref{lem:cp_gradient}, we may write:
	\[
	\begin{split}
		\frac{d}{dt} \widetilde{\w}_{r}^n (t) & = - \matflex{ \nabla \L \left ( \W_e (t) \right ) }_{n} \cdot \kronp_{n' \neq n} \w_{r}^{n'} (t) \\
		& = - \matflex{ \nabla \L \left ( \widetilde{\W}_e (t) \right ) }_{n} \cdot \kronp_{n' \neq n} \widetilde{\w}_{r}^{n'} (t)  \\
		& = - \frac{\partial}{ \partial \widetilde{\w}_{r}^{n} } \phi \left (\{ \widetilde{\w}_{r'}^{n'} (t) \}_{r' = 1}^{ \widetilde{R} }\hspace{0mm}_{n' = 1}^N \right )
		\text{\,.}
	\end{split}
	\]
	On the other hand, for any $r \in \{ R + 1, \ldots, \widetilde{R} \}$, recalling that $\widetilde{\w}_r^n (t)$ is identically zero:
	\[
	\begin{split}
		\frac{d}{dt} \widetilde{\w}_{r}^n (t) = 0 = - \matflex{ \nabla \L \left ( \widetilde{\W}_e (t) \right ) }_{n} \cdot \kronp_{n' \neq n} \widetilde{\w}_{r}^{n'} (t) = - \frac{\partial}{ \partial \widetilde{\w}_{r}^{n} } \phi \left (\{ \widetilde{\w}_{r'}^{n'} (t) \}_{r' = 1}^{ \widetilde{R} }\hspace{0mm}_{n' = 1}^N \right )
		\text{\,,}
	\end{split}
	\]
	for all $t \geq 0$, completing the proof.
\end{proof}

\begin{lemma}
	\label{lem:balanced_param_vector_norm_no_sign_change}
	For any $r \in [R]$:
	\begin{itemize}
		\item If $\normnoflex{ \w_r^{1} (0) } = \cdots = \normnoflex{ \w_r^{N} (0) } = 0$, then:
		\be
		\norm{ \w_r^{1} (t) } = \cdots = \norm{ \w_r^{N} (t) } = 0 \quad ,~t \geq 0
		\text{\,.}
		\label{eq:balanced_param_vector_norm_no_sign_change_stay_zero}
		\ee
		\item On the other hand, if $\normnoflex{ \w_r^{1} (0) } = \cdots = \normnoflex{ \w_r^{N} (0) } > 0$, then:
		\be
		\norm{ \w_r^{1} (t) } = \cdots = \norm{ \w_r^{N} (t) } > 0 \quad,~t \geq 0
		\text{\,.}
		\label{eq:balanced_param_vector_norm_no_sign_change_stay_nonzero}
		\ee
	\end{itemize}
\end{lemma}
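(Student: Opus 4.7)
I would treat the two bullets separately. For Equation~\eqref{eq:balanced_param_vector_norm_no_sign_change_stay_zero}, the idea is to invoke uniqueness of solutions to the gradient flow ODE. By Lemma~\ref{lem:cp_gradient}, the partial derivative $\partial \phi / \partial \w_r^n$ factors through $\kronp_{n' \neq n} \w_r^{n'}$, so it vanishes whenever any sibling weight vector $\w_r^{n'}$ (with $n' \neq n$) is zero; in particular it vanishes whenever the entire $r$'th component is zero. Consequently, the candidate trajectory obtained by keeping component $r$ identically zero while evolving the remaining $R-1$ components under their gradient flow is itself a valid solution of the full $R$-component gradient flow from the given initialization. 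Since the gradient flow vector field is a product of the weight vectors with $\nabla \L(\W_e)$, and is therefore locally Lipschitz under the assumption that $\L$ is locally smooth, ODE uniqueness forces this candidate to coincide with $\{ \w_r^n (t) \}_{r,n}$, yielding $\w_r^n(t) = 0$ for all $n$ and $t \geq 0$.

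For Equation~\eqref{eq:balanced_param_vector_norm_no_sign_change_stay_nonzero}, Lemma~\ref{lem:balancedness_conservation} already guarantees $\norm{ \w_r^1 (t) } = \cdots = \norm{ \w_r^N (t) }$ for all $t \geq 0$, so it suffices to show that $f(t) := \norm{ \w_r^1 (t) }^2$ remains strictly positive. Combining Lemma~\ref{lem:dyn_parameter_vector_sq_norm} with the balanced identity $\normnoflex{ \tenp_{n=1}^N \w_r^n (t) } = f(t)^{N/2}$, on any time interval where $f > 0$ I would derive
\[
\tfrac{d}{dt} f(t) = f(t)^{N/2} \, g(t), \qquad g(t) := -2 \inprod{ \nabla \L ( \W_e (t) ) }{ \tenp_{n=1}^N \widehat{\w}_r^n (t) },
\]
with $\abs{ g(t) } \leq 2 \normnoflex{ \nabla \L ( \W_e (t) ) }$. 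I would then argue by contradiction: if $f(t_0) = 0$ for some $t_0 > 0$, set $t^* := \inf \{ t > 0 : f(t) = 0 \}$, so that $f > 0$ on $[0, t^*)$ and $f(t^*) = 0$ by continuity. Applying Lemma~\ref{lem:ivp_no_sign_change} with $T_1 = t^*$, any $T_2 > t^*$, and exponent $p = N/2 \geq 3/2 > 1$ (using $N \geq 3$) yields $f(t) > 0$ on the closed interval $[0, t^*]$, contradicting $f(t^*) = 0$.

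The main technical point to verify will be that $g$ is continuous and bounded on $[0, t^*)$, as required by Lemma~\ref{lem:ivp_no_sign_change}. Continuity is immediate from $\widehat{\w}_r^n (t) = \w_r^n (t) / \normnoflex{ \w_r^n (t) }$ being well defined exactly where $f > 0$, together with continuity of $\nabla \L ( \W_e (t) )$ inherited from the local smoothness of $\L$. Boundedness then follows because $\normnoflex{ \nabla \L ( \W_e (t) ) }$ is continuous on the compact interval $[0, t^*]$ and $\normnoflex{ \tenp_n \widehat{\w}_r^n (t) } = 1$. Everything else is a direct consequence of the dynamical identities already established, so I do not anticipate any further obstacle beyond carefully organizing the contradiction argument.
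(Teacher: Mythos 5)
Your proposal is correct and follows essentially the same route as the paper: the first bullet via ODE uniqueness against a candidate trajectory with component $r$ held at zero (the paper packages this step as Lemma~\ref{lem:width_R_equivalent_to_larger_width_with_zero_init}), and the second bullet via a contradiction argument through Lemma~\ref{lem:ivp_no_sign_change}. The only cosmetic difference is that you apply the ODE-comparison lemma to the squared norm $f(t)=\normnoflex{\w_r^1(t)}^2$ with exponent $p=N/2$, whereas the paper applies it to $\normnoflex{\w_r^1(t)}$ with exponent $p=N-1$; both exceed $1$ since $N\ge 3$, so either choice works.
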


\begin{proof}
	The proof is divided into two separate parts, establishing Equations~\eqref{eq:balanced_param_vector_norm_no_sign_change_stay_zero} and~\eqref{eq:balanced_param_vector_norm_no_sign_change_stay_nonzero} under their respective conditions.
	
	\paragraph*{Proof of Equation~\eqref{eq:balanced_param_vector_norm_no_sign_change_stay_zero}  (if $\normnoflex{ \w_r^{1} (0) } = \cdots = \normnoflex{ \w_r^{N} (0) } = 0$):}
	
	To simplify presentation, we assume without loss of generality that $r = R$.
	Consider the following initial value problem induced by gradient flow over $\phi (\cdot)$:
	\be
	\begin{split}
		& \widetilde{\w}_{\bar{r}}^{n} ( 0 ) = \w_{\bar{r}}^{n} ( 0 ) \quad ,  ~ \bar{r} = 1, \ldots, R ~,~ n = 1, \ldots, N \text{\,,}  \\
		& \frac{d}{dt} \widetilde{\w}_{\bar{r}}^{n} (t) = - \frac{\partial}{\partial \widetilde{\w}_{\bar{r}}^{n}} \phi \left (\{ \widetilde{\w}_{r'}^{n'} (t) \}_{r' = 1}^{R}\hspace{0mm}_{n' = 1}^N \right ) 	\quad , ~t \geq 0 ~,~ \bar{r} = 1, \ldots, R ~,~ n = 1, \ldots, N  \text{\,.}
	\end{split}
	\label{eq:gf_w_tilde_ivp}
	\ee
	By definition, $\{\w_{\bar{r}}^{n} (t) \}_{\bar{r} = 1}^{R}\hspace{0mm}_{n = 1}^N$ is a solution to the initial value problem above.
	Since it has a unique solution (see, \eg, Theorem~2.2 in~\citet{teschl2012ordinary}), we need only show that there exist $\{ \widetilde{\w}_{\bar{r}}^{n} (t) \}_{\bar{r} = 1}^{R}\hspace{0mm}_{n = 1}^N$ satisfying Equation~\eqref{eq:gf_w_tilde_ivp} such that $\widetilde{\w}_R^{1} (t) = \cdots = \widetilde{\w}_R^{N} (t) = 0$ for all $t \geq 0$.
	
	If $R = 1$, \ie~the factorization consists of a single component, by Lemma~\ref{lem:cp_gradient}:
	\[
	- \frac{\partial}{\partial \widetilde{\w}_1^{n}} \phi \left (\{ \widetilde{\w}_{1}^{n'} \}_{n' = 1}^N \right ) = - \matflex{ \nabla \L \left ( \tenp_{n' = 1}^{N} \widetilde{\w}_1^{n'} \right ) }_{n} \cdot \kronp_{n' \neq n} \widetilde{\w}_1^{n'} \quad , ~n = 1 ,\ldots, N
	\text{\,,}
	\]
	for any $\widetilde{\w}_1^{1} \in \R^{d_1}, \ldots, \widetilde{\w}_1^{N} \in \R^{d_N}$.
	Hence, $\widetilde{\w}_1^{1} (t) = \cdots = \widetilde{\w}_1^{N} (t) = 0$ for all $t \geq 0$ form a solution to the initial value problem in Equation~\eqref{eq:gf_w_tilde_ivp}.
	To see it is so, notice that the initial conditions are met, and:
	\[
	\frac{d}{dt} \widetilde{\w}_1^{n} (t) = 0 = - \frac{\partial}{\partial \widetilde{\w}_1^{n}} \phi \left (\{ \widetilde{\w}_{1}^{n'} (t) \}_{n' = 1}^N \right ) \quad , ~t \geq 0 ~,~ n = 1 ,\ldots, N
	\text{\,.}
	\]
	
	If $R > 1$, with slight abuse of notation we denote by $\phi ( \{ \widetilde{\w}_{\bar{r}}^{n} \}_{\bar{r} = 1}^{R - 1}\hspace{0mm}_{n = 1}^N ) := \L ( \sum_{\bar{r} = 1}^{R - 1} \tenp_{n = 1}^N \widetilde{\w}_{\bar{r}}^{n} )$ the objective over an $(R - 1)$-component tensor factorization.
	Let $\{ \widetilde{\w}_{\bar{r}}^{n} (t) \}_{\bar{r} = 1}^{R - 1}\hspace{0mm}_{n = 1}^N$ be curves obtained by running gradient flow on this objective, initialized such that:
	\[
	\widetilde{\w}_{\bar{r}}^{n} (0) := \w_{\bar{r}}^{n} (0) \quad , ~\bar{r} = 1, \ldots, R - 1 ~,~ n = 1, \ldots, N
	\text{\,.}
	\]
	Additionally, define $\widetilde{\w}_{R}^{1} (t) = \cdots = \widetilde{\w}_{R}^{N} (t) = 0$ for all $t \geq 0$. 
	According to Lemma~\ref{lem:width_R_equivalent_to_larger_width_with_zero_init},  $\{ \widetilde{\w}_{\bar{r}}^{n} (t) \}_{\bar{r} = 1}^{R}\hspace{0mm}_{n = 1}^N$ form a valid solution to the original gradient flow over an $R$-component factorization, \ie~satisfy Equation~\eqref{eq:gf_w_tilde_ivp}.
	Thus, uniqueness of the solution implies $\w_R^{1} (t) = \cdots = \w_R^{N} (t) = 0$ for all $t \geq 0$, completing the proof for Equation~\eqref{eq:balanced_param_vector_norm_no_sign_change_stay_zero}.

	\paragraph*{Proof of Equation~\eqref{eq:balanced_param_vector_norm_no_sign_change_stay_nonzero}  (if $\normnoflex{ \w_r^{1} (0) } = \cdots = \normnoflex{ \w_r^{N} (0) } > 0$):}
	
	From Lemma~\ref{lem:balancedness_conservation_body} it follows that $\normnoflex{ \w_r^{1} ( t ) } = \cdots = \normnoflex{ \w_r^{N} ( t ) }$ for any $t \geq 0$.
	Hence, it suffices to show that $\normnoflex{ \w_r^{1} (t) }$ stays positive.
	Assume by way of contradiction that there exists $\bar{t} > 0$ for which $\normnoflex{\w_r^{1} (\bar{t}\,)} = 0$.
	Define:
	\[
	t_0 := \inf \left \{ t \geq 0: \normnoflex{ \w_r^{1} ( t ) } = 0 \right \}
	\text{\,,}
	\]
	the initial time at which $\normnoflex{  \w_r^{1} (t) }$ meets zero.
	Due to the fact that $\normnoflex{ \w_r^{1} (t) }$ is continuous in $t$, $\normnoflex{ \w_r^{1} (t_0) } = 0$ and $t_0 > 0$.
	Furthermore, $\normnoflex{ \w_r^{1} (t) } > 0$ for all $t \in [0, t_0)$.
	We may therefore differentiate $\normnoflex{\w_r^{1} (t) }$ with respect to time over the interval $[0, t_0)$ as follows:
	\[
	\begin{split}
		\frac{d}{dt} \norm{ \w_r^{1} (t) } & =  \left ( \tfrac{d}{dt} \norm{ \w_r^{1} (t) }^2 \right ) \cdot 2^{-1}  \norm{ \w_r^{1} (t) }^{-1} \\
		& = \norm{ \w_r^{1} (t) }^{-1}  \inprod{ - \nabla \L \left ( \W_e (t) \right ) }{ \tenp_{n = 1}^N \w_r^{n} (t) } \\
		& = \norm{ \w_r^{1} (t) }^{N - 1}  \inprod{ - \nabla \L \left ( \W_e (t) \right ) }{ \tenp_{n = 1}^N \widehat{\w}_r^{n} (t) }
		\text{\,,}
	\end{split}
	\]
	where in the second transition we made use of Lemma~\ref{lem:dyn_parameter_vector_sq_norm}, and $\widehat{\w}_r^{n} (t) := \w_r^{n} (t) / \normnoflex{ \w_r^{n} (t) }$ for $n = 1, \ldots, N$.
	Define $g(t) := \inprodnoflex{ - \nabla \L \left ( \W_e (t) \right ) }{ \tenp_{n = 1}^N \widehat{\w}_r^{n} (t) }$.
	Since $\nabla \L ( \W_e (t) )$ is continuous with respect to time, $g(t)$ is bounded over $[0, t_0]$ and continuous over $[0, t_0)$.
	Thus, invoking Lemma~\ref{lem:ivp_no_sign_change} with $g(t)$, $T_1 := t_0$ and $f(t) := \normnoflex{ \w_r^1 (t) }$, we get that $\normnoflex{ \w_r^{1} (t) } > 0$ for all $t \in [0, t_0]$, in contradiction to $\normnoflex{ \w_r^{1} (t_0) } = 0$.
	This means that $\normnoflex{ \w_r^{1} (t) } > 0$ for all $t \geq 0$, concluding the proof for Equation~\eqref{eq:balanced_param_vector_norm_no_sign_change_stay_nonzero}.
\end{proof}

\subsection{Proof of Theorem~\ref{thm:dyn_fac_comp_norm_unbal}}
\label{app:proofs:dyn_fac_comp_norm_unbal}

Fix $r \in [R]$ and $t \geq 0$.
Since $\normnoflex{ \tenp_{n = 1}^N \w_r^{n} (t) } = \prod_{n = 1}^N \normnoflex { \w_r^{n} (t) }$, the product rule gives:
\[
\frac{d}{dt} \norm{ \tenp_{n = 1}^N \w_r^{n} (t) } = \sum_{n = 1}^N \frac{d}{dt} \norm{ \w_r^{n} (t) } \cdot \prod_{n' \neq n} \normnoflex{ \w_r^{n'} (t) }
\text{\,.}
\]
Notice that for any $n \in [N]$ we have $\normnoflex{ \w_r^{n} (t) } > 0$, as otherwise $\normnoflex{ \tenp_{n' = 1}^N \w_r^{n'} (t) }$ must be zero.
Thus, applying Lemma~\ref{lem:dyn_parameter_vector_sq_norm} we get $\frac{d}{dt} \normnoflex{ \w_r^{n} (t) } = \frac{1}{2} \normnoflex{ \w_r^{n} (t) }^{-1} \frac{d}{dt} \normnoflex{ \w_r^n (t) }^2 = \normnoflex{ \w_r^{n} (t) }^{-1} \inprodnoflex{ - \nabla \L \left ( \W_e (t) \right ) }{ \tenp_{n' = 1}^N \w_r^{n'} (t) }$.
Combined with the equation above, we arrive at:
\be
\begin{split}
	\frac{d}{dt} \norm{ \tenp_{n = 1}^N \w_r^{n} (t) }  & = \sum_{n = 1}^N \norm{ \w_r^{n} (t) }^{-1} \inprod{ - \nabla \L \left ( \W_e (t) \right ) }{ \tenp_{n' = 1}^N \w_r^{n'} (t) }   \cdot \prod_{n' \neq n} \normnoflex{ \w_r^{n'} (t) } \\
	& =  \inprod{ - \nabla \L \left ( \W_e (t) \right ) }{ \tenp_{n' = 1}^N \widehat{\w}_r^{n'} (t) }  \cdot \sum_{n = 1}^N \prod_{n' \neq n} \normnoflex{ \w_r^{n'} (t) }^2 
	\text{\,.}
\end{split}
\label{eq:unbal_comp_time_derive_intermid}
\ee
By Lemma~\ref{lem:balancedness_conservation_body}, the differences between squared norms of vectors in the same component are constant through time.
In particular, the unbalancedness magnitude (Definition~\ref{def:unbalancedness_magnitude}) is conserved during gradient flow, implying that for any $n \in [N]$:
\be
\norm{ \w_r^n (t) }^2 \leq \min_{n' \in [N]} \normnoflex{ \w_r^{n'} (t) }^2 + \epsilon \leq \norm{ \tenp_{n' = 1}^N \w_r^{n'} (t) }^{\frac{2}{N}} + \epsilon
\text{\,.}
\label{eq:sq_norm_unbal_upper_bound}
\ee
Now, suppose that $\gamma_r (t) := \inprodnoflex{ - \nabla \L ( \W_e (t) ) }{ \tenp_{n = 1}^N \widehat{\w}_r^{n} (t) } \geq 0$. 
Going back to Equation~\eqref{eq:unbal_comp_time_derive_intermid}, applying the inequality in Equation~\eqref{eq:sq_norm_unbal_upper_bound} for each $\normnoflex{ \w_r^{n'} (t) }^2$ yields the desired upper bound from Equation~\eqref{eq:dyn_fac_comp_norm_unbal_pos}.
On the other hand, multiplying and dividing each summand in Equation~\eqref{eq:unbal_comp_time_derive_intermid} by the corresponding $\normnoflex{ \w_r^n (t) }^2$, we may equivalently write:
\[
\begin{split}
	\frac{d}{dt} \norm{ \tenp_{n = 1}^N \w_r^{n} (t) } & =  \inprod{ - \nabla \L \left ( \W_e (t) \right ) }{ \tenp_{n' = 1}^N \widehat{\w}_r^{n'} (t) }  \cdot \sum_{n = 1}^N \normnoflex{ \w_r^n (t) }^{-2} \prod_{n' = 1}^N \normnoflex{ \w_r^{n'} (t) }^2 \\
	& = \inprod{ - \nabla \L \left ( \W_e (t) \right ) }{ \tenp_{n' = 1}^N \widehat{\w}_r^{n'} (t) } \norm{ \tenp_{n = 1}^N \w_r^n (t) }^2 \cdot \sum_{n = 1}^N \normnoflex{ \w_r^n (t) }^{-2}
	\text{\,.}
\end{split}
\]
Noticing that Equation~\eqref{eq:sq_norm_unbal_upper_bound} implies $\normnoflex{ \w_r^{n} (t) }^{-2} \geq ( \normnoflex{ \tenp_{n' = 1}^N \w_r^{n'} (t) }^{\frac{2}{N}} + \epsilon )^{-1}$, the lower bound from Equation~\eqref{eq:dyn_fac_comp_norm_unbal_pos} readily follows.

If $\gamma_r (t) < 0$, Equation~\eqref{eq:dyn_fac_comp_norm_unbal_neg} is established by following the same computations, up to differences in the direction of inequalities due to the negativity of $\gamma_r (t)$.
\qed

\subsection{Proof of Corollary~\ref{cor:dyn_fac_comp_norm_balanced}}
\label{app:proofs:dyn_fac_comp_norm_balanced}

Fix $r \in [R]$ and $t \geq 0$.
The lower and upper bounds in Theorem~\ref{thm:dyn_fac_comp_norm_unbal} are equal to $\smash{ N \gamma_r (t) \cdot \norm{ \tenp_{n = 1}^N \w_r^{n} (t) }^{2 - 2 / N} }$ for unbalancedness magnitude $\epsilon = 0$.
Therefore, if $\normnoflex{ \tenp_{n = 1}^N \w_r^{n} (t) } > 0$, Equation~\eqref{eq:dyn_fac_comp_norm} immediately follows from Theorem~\ref{thm:dyn_fac_comp_norm_unbal}.

If $\normnoflex{ \tenp_{n = 1}^N \w_r^{n} (t) } = 0$, we claim that necessarily $\normnoflex{ \tenp_{n = 1}^N \w_r^{n} (t') }  = 0$ for all $t' \geq 0$, in which case both sides of Equation~\eqref{eq:dyn_fac_comp_norm} are zero.
Indeed, since the unbalancedness magnitude is zero at initialization and $\normnoflex{ \tenp_{n = 1}^N \w_r^{n} (t) } = \prod_{n = 1}^N \normnoflex { \w_r^{n} (t) }$, by Lemma~\ref{lem:balanced_param_vector_norm_no_sign_change} we know that either $\normnoflex{ \tenp_{n = 1}^N \w_r^{n} (t') } = 0$  for all $t' \geq 0$, or $\normnoflex{ \tenp_{n = 1}^N \w_r^{n} (t') } > 0$ for all $t' \geq 0$.
Hence, given that $\normnoflex{ \tenp_{n = 1}^N \w_r^{n} (t) } = 0$, the norm of the component must be identically zero through time.
\qed

\subsection{Proof of Theorem~\ref{thm:approx_rank_1}}
\label{app:proofs:approx_rank_1}

For conciseness, we consider the case where the number of components $R \geq 2$.
For $R = 1$, existence of a time $T_0 > 0$ at which $\W_e (T_0) \in \S$ follows by analogous steps, disregarding parts pertaining to factorization components $2, \ldots, R$.
Furthermore, proximity to a balanced rank one trajectory becomes trivial as, by Assumption~\ref{assump:a_balance} and Lemma~\ref{lem:balancedness_conservation_body}, $\W_e (t)$ is in itself such a trajectory.

Assume without loss of generality that Assumption~\ref{assump:a_lead_comp} holds for $\bar{r} = 1$.

Before delving into the proof details, let us introduce some notation and specify the exact requirement on the initialization scale $\alpha$.
We let $\L_h : \R^{d_1, \ldots, d_N} \to \R_{\geq 0}$ be the tensor completion objective induced by the Huber loss (Equation~\eqref{eq:tc_loss} with $\ell_h ( \cdot )$ in place of $\ell (\cdot)$), and $\phi_h ( \cdot )$ be the corresponding tensor factorization objective (Equation~\eqref{eq:cp_objective} with $\L_h (\cdot)$ in place of $\L (\cdot)$). 
For reference sphere radius $\rho \in ( 0 , \min_{(i_1, \ldots, i_N) \in \Omega} \abs{ y_{i_1, \ldots, i_N} } - \delta_h )$, distance from origin $D > 0$, time duration $T > 0$, and degree of approximation $\epsilon \in (0, 1)$, let:
\be
\begin{split}
& \normnoflex{ \aaa_r } := \normnoflex{ \aaa_r^1} = \cdots = \normnoflex{ \aaa_r^N } \quad , ~r = 1, \ldots, R
\text{\,,} \\
&  A := \max\nolimits_{r \in [R]} \norm{ \aaa_r } \text{\,,} \\ 
& A_{-1} := \max\nolimits_{r \in \{ 2, \ldots, R\}} \norm{ \aaa_r } \text{\,,} \\
& \widetilde{D} := \sqrt{N} \left ( \max \{ D, \rho \} + 1 \right )^{\frac{1}{N} } \text{\,,} \\
& \beta := R N \left ( (\widetilde{D} + 1)^{2(N - 1)} + \delta_h (\widetilde{D} + 1)^{N - 2} \right ) \text{\,,} \\
& \hat{\epsilon} <  \min \left \{ 2^{- \frac{N}{2}} R^{-N} N^{-N} (\widetilde{D} + 1)^{N - N^2} \cdot \exp ( -N \beta T) \cdot \epsilon^N ~,~ \rho (R - 1)^{-1} \right \} \text{\,,} \\
& \tilde{\epsilon} := \min \left \{ \hat{\epsilon} ~,~ ( R - 1 )^{-1} \left ( \rho - \left [ \rho^{\frac{1}{N}} - (R - 1)^{\frac{1}{N}} \cdot \hat{\epsilon}^{\frac{1}{N}} \right ]^{N} \right ) \right \}
\text{\,.}
\end{split}
\label{eq:approx_rank_1_consts}
\ee
With the constants above in place, for the results of the theorem to hold it suffices to require that:
\be
\alpha < \min \left \{  R^{- \frac{1}{N}} A^{-1} \rho^{\frac{1}{N}} ~,~ \left ( A_{-1}^{2 - N} - \norm{ \aaa_1 }^{2 - N}  \tfrac{ \norm{ \nabla \L (0) } }{ \inprod{ -\nabla \L (0) }{ \tenp_{n = 1}^N \widehat{\aaa}_{1}^n } } \right )^{\frac{1}{N - 2}} \cdot \tilde{\epsilon}^{\frac{1}{N}} \right \}
\text{\,.}
\label{eq:alpha_size_requirement}
\ee

\medskip

The proof is sectioned into three parts.
We begin with several preliminary lemmas in Subappendix~\ref{app:proofs:approx_rank_1:prelim_lemmas}.
Then, Subappendix~\ref{app:proofs:approx_rank_1:const_grad} establishes the existence of a time $T_0 > 0$ at which $ \W_e ( t )$ initially reaches the reference sphere $\S$, \ie~$\normnoflex{ \W_e (T_0) } = \rho$, while $ \normnoflex{ \tenp_{n = 1}^N \w_2^n (T_0) }, \ldots, \normnoflex{ \tenp_{n = 1}^N \w_R^n (T_0) }$ are still $\OO ( \alpha^N )$.
Consequently, as shown in Subappendix~\ref{app:proofs:approx_rank_1:trajectory}, at that time the weight vectors of the $R$-component tensor factorization are close to weight vectors corresponding to a balanced rank one trajectory emanating from $\S$, denoted $\W_1 (t)$.
The proof concludes by showing that this implies the time-shifted trajectory $\overline{\W}_e ( t )$ is within $\epsilon$ distance from $\W_1 (t)$ at least until $t \geq T$ or $\normnoflex{ \overline{\W}_e ( t ) } \geq D$.

\subsubsection{Preliminary Lemmas}
\label{app:proofs:approx_rank_1:prelim_lemmas}

\begin{lemma}
	\label{lem:huber_loss_const_grad_near_zero}
	Let $\W \in \R^{d_1 ,  \ldots , d_N}$ be such that $\norm{ \W } \leq \rho$, where $\rho \in (0, \min_{(i_1, \ldots, i_N) \in \Omega} \abs{ y_{i_1, \ldots, i_N} } - \delta_h )$.
	Then:
	\[
	\nabla \L_h ( \W ) = \frac{\delta_h}{ \abs{\Omega} } \sum\nolimits_{(i_1, \ldots, i_N) \in \Omega} \sign( - y_{i_1, \ldots, i_N} ) \cdot \mathcal{E}_{i_1,...,i_N}
	\text{\,,}
	\]
	where $\mathcal{E}_{i_1,...,i_N} \in \R^{d_1, \ldots, d_N}$ holds $1$ in its $(i_1, \ldots, i_N)$'th entry and $0$ elsewhere.
\end{lemma}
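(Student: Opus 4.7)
The plan is to compute the gradient entrywise, then show that the norm constraint $\norm{\W} \leq \rho$ forces every observed residual into the linear (outer) piece of the Huber loss, at which point the stated formula drops out.

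First I would write
\[
\nabla \L_h(\W) = \frac{1}{\abs{\Omega}} \sum\nolimits_{(i_1,\ldots,i_N) \in \Omega} \ell_h'\bigl([\W]_{i_1,\ldots,i_N} - y_{i_1,\ldots,i_N}\bigr) \cdot \mathcal{E}_{i_1,\ldots,i_N},
\]
which follows directly from differentiating Equation~\eqref{eq:tc_loss} with $\ell = \ell_h$ and observing that only the $(i_1,\ldots,i_N)$'th entry of $\W$ appears in the corresponding summand. Recall from Equation~\eqref{eq:huber_loss} that $\ell_h'(z) = z$ for $\abs{z} < \delta_h$ and $\ell_h'(z) = \delta_h \sign(z)$ otherwise.

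Next I would verify that every observed residual lies in the outer (linear) region. For any $(i_1,\ldots,i_N) \in \Omega$, the entrywise bound $\abs{[\W]_{i_1,\ldots,i_N}} \leq \norm{\W} \leq \rho$ combined with the reverse triangle inequality yields
\[
\abs{[\W]_{i_1,\ldots,i_N} - y_{i_1,\ldots,i_N}} \geq \abs{y_{i_1,\ldots,i_N}} - \rho > \delta_h,
\]
where the final strict inequality uses the hypothesis $\rho < \min_{(i_1,\ldots,i_N)\in\Omega} \abs{y_{i_1,\ldots,i_N}} - \delta_h$. Hence $\ell_h'([\W]_{i_1,\ldots,i_N} - y_{i_1,\ldots,i_N}) = \delta_h \sign([\W]_{i_1,\ldots,i_N} - y_{i_1,\ldots,i_N})$.

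Finally, I would identify the sign: since $\abs{[\W]_{i_1,\ldots,i_N}} \leq \rho < \abs{y_{i_1,\ldots,i_N}}$, the residual $[\W]_{i_1,\ldots,i_N} - y_{i_1,\ldots,i_N}$ has the same sign as $-y_{i_1,\ldots,i_N}$, so $\sign([\W]_{i_1,\ldots,i_N} - y_{i_1,\ldots,i_N}) = \sign(-y_{i_1,\ldots,i_N})$. Plugging back into the entrywise expression for $\nabla \L_h(\W)$ gives the claimed formula. There is no real obstacle here; the only subtlety worth stating explicitly is the entrywise-versus-Frobenius norm comparison, which is immediate since squared entries are bounded by the squared Frobenius norm.
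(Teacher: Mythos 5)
Your proof is correct and follows essentially the same approach as the paper: compute the entrywise gradient, show the residual lies in the linear regime of the Huber loss, and identify its sign as $\sign(-y_{i_1,\ldots,i_N})$. The only cosmetic difference is that you package the bound via the reverse triangle inequality, while the paper splits into cases on the sign of $y_{i_1,\ldots,i_N}$; the underlying argument is identical.
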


\begin{proof}
	Fix $I := (i_1, \ldots, i_N) \in \Omega$, and let $\ell_h' (\cdot)$ denote the derivative of $\ell_h ( \cdot )$.
	If $y_I > 0$, we have that $[ \W ]_I - y_I \leq \norm{ \W } - y_I \leq \min_{(i_1, \ldots, i_N) \in \Omega} \abs{ y_{i_1, \ldots, i_N} } - \delta_h - y_I \leq - \delta_h$.
	Therefore, $\ell_h' ( [ \W ]_I - y_I ) = - \delta_h = \sign( - y_I ) \delta_h$.
	Similarly, if $y_I < 0$, we have that $[ \W ]_I - y_I \geq \delta_h$ and $\ell_h' ( [ \W ]_I - y_I ) = \delta_h = \sign( - y_I ) \delta_h$.
	Note that $y_I$ cannot be exactly zero as, by Assumption~\ref{assump:delta_h}, $\min_{(i_1, \ldots, i_N) \in \Omega} \abs{ y_{i_1, \ldots, i_N} } > \delta_h > 0$.
	The proof concludes by the chain rule:
	\[
	\begin{split}
	\nabla \L_h ( \W ) & = \frac{1}{ \abs{\Omega} } \sum\nolimits_{I \in \Omega} \ell_h' ( [ \W ]_{I} - y_{I}) \cdot \mathcal{E}_{I} \\
	& = \frac{\delta_h}{ \abs{\Omega} } \sum\nolimits_{I \in \Omega} \sign( - y_{I} ) \cdot \mathcal{E}_{I}
	\text{\,.}
	\end{split}
	\]
\end{proof}

\begin{lemma}
	\label{lem:huber_loss_smooth}
	The function $\L_h ( 
	\cdot )$ is $1$-smooth, \ie~for any $\W_1, \W_2 \in \R^{d_1 ,  \ldots , d_N}$:
	\[
	\norm{ \nabla \L_h ( \W_1 ) - \nabla \L_h ( \W_2 ) } \leq \norm{ \W_1 - \W_2 }
	\text{\,.}
	\]
\end{lemma}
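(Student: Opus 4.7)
The plan is to reduce the vector-valued smoothness statement to a scalar-valued Lipschitz property of the Huber loss derivative. First I would write out the gradient explicitly via the chain rule, exactly as was done in the proof of Lemma~\ref{lem:huber_loss_const_grad_near_zero}: since $\L_h(\W) = |\Omega|^{-1} \sum_{I \in \Omega} \ell_h([\W]_I - y_I)$ and $\ell_h$ depends only on the single entry $[\W]_I$, one obtains
\[
\nabla \L_h(\W) \;=\; \frac{1}{|\Omega|} \sum_{I \in \Omega} \ell_h'\bigl([\W]_I - y_I\bigr) \cdot \mathcal{E}_I,
\]
where $\mathcal{E}_I$ is the standard basis tensor at index $I$. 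Because the $\mathcal{E}_I$ are orthonormal, the squared norm of $\nabla \L_h(\W_1) - \nabla \L_h(\W_2)$ decouples into a sum over $I \in \Omega$ of squared scalar differences $(\ell_h'([\W_1]_I - y_I) - \ell_h'([\W_2]_I - y_I))^2$, divided by $|\Omega|^2$.

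Next I would invoke the elementary fact that the scalar derivative $\ell_h'(z) = \max\{-\delta_h, \min\{\delta_h, z\}\}$ is a clipped identity and hence $1$-Lipschitz. Applying this entrywise gives $|\ell_h'([\W_1]_I - y_I) - \ell_h'([\W_2]_I - y_I)| \le |[\W_1]_I - [\W_2]_I|$, so that summing over $I \in \Omega$ and completing with the obvious bound $\sum_{I \in \Omega} ([\W_1]_I - [\W_2]_I)^2 \le \|\W_1 - \W_2\|^2$ yields
\[
\|\nabla \L_h(\W_1) - \nabla \L_h(\W_2)\|^2 \;\le\; \frac{1}{|\Omega|^2} \|\W_1 - \W_2\|^2,
\]
which implies the claimed $1$-smoothness (in fact with constant $1/|\Omega| \le 1$). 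There is no real obstacle here; the only minor care is in verifying the $1$-Lipschitz property of $\ell_h'$ at the two transition points $z = \pm \delta_h$, which is immediate from the piecewise definition in Equation~\eqref{eq:huber_loss}.
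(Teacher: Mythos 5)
Your proof is correct, and it takes a slightly different route from the paper's. Both arguments hinge on the same key observation that the scalar derivative $\ell_h'(z) = \max\{-\delta_h, \min\{\delta_h, z\}\}$ is $1$-Lipschitz. The paper, however, proceeds by the triangle inequality: it bounds the norm of $\nabla \L_h(\W_1) - \nabla \L_h(\W_2)$ by $\frac{1}{|\Omega|}\sum_{I \in \Omega} |\ell_h'([\W_1]_I - y_I) - \ell_h'([\W_2]_I - y_I)|$, applies Lipschitzness termwise, and then bounds the resulting $\ell_1$ sum by $\|\W_1 - \W_2\|$. You instead exploit the orthonormality of the basis tensors $\{\mathcal{E}_I\}_{I \in \Omega}$ to obtain an exact Pythagorean decomposition of the squared gradient difference, apply Lipschitzness termwise, and then complete the sum of squares to $\|\W_1 - \W_2\|^2$. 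Your version is marginally cleaner and yields the sharper smoothness constant $1/|\Omega|$ rather than $1$; since only $1$-smoothness is needed downstream (in Lemma~\ref{lem:huber_cp_objective_is_smooth_over_bounded_domain}), both suffice, but your bound is tighter for free.
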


\begin{proof}
	Let $\W_1, \W_2 \in \R^{d_1 ,  \ldots , d_N}$.
	Denote by $\ell_h' (\cdot)$ the derivative of $\ell_h (\cdot)$, \ie:
	\[
	\ell_h' (z) = \begin{cases}
		- \delta_h	& , z < - \delta_h \\
		z	& , \abs{ z } \leq \delta_h \\
		\delta_h	& , z > \delta_h , 
	\end{cases}
	\text{\,.}
	\]
	The result readily follows from the triangle inequality and the fact that $\ell_h' (\cdot)$ is $1$-Lipschitz:
	\[
	\begin{split}
		\norm{ \nabla \L_h ( \W_1 ) - \nabla \L_h ( \W_2 ) } & = \norm{ \frac{1}{ \abs{\Omega}} \sum_{I \in \Omega} \left [ \ell_h' ( [ \W_1 ]_I - y_I ) \cdot \mathcal{E}_{I} - \ell_h' ([ \W_2 ]_I - y_I ) \cdot \mathcal{E}_{I} \right ] } \\
		& \leq \frac{1}{ \abs{\Omega}} \sum_{I \in \Omega} \abs{ \ell_h' ( [ \W_1 ]_I - y_I ) - \ell_h' ([ \W_2 ]_I - y_I ) } \\
		& \leq \frac{1}{ \abs{\Omega}} \sum_{I \in \Omega} \abs{ [ \W_1 ]_I - [ \W_2 ]_I }  \\
		& \leq \norm{ \W_1 - \W_2 }
		\text{\,,}
	\end{split}
	\]	
	where $\mathcal{E}_{I} \in \R^{d_1, \ldots, d_N}$ holds $1$ in its $I$'th entry and $0$ elsewhere, for $I = (i_1, \ldots, i_N) \in \Omega$.
\end{proof}

\begin{lemma}
	\label{lem:huber_cp_objective_is_smooth_over_bounded_domain}
	Let $G \geq 0$, and denote $\D_{G} :=  \{  \{ \w_{r}^{n} \in \R^{d_n} \}_{r = 1}^R\hspace{0mm}_{n = 1}^N : ( \sum_{r = 1}^R \sum_{n = 1}^N \normnoflex{ \w_r^n }^2 )^{1/2} \leq G  \}$.
	Then, the objective $\phi_h (\cdot)$ is $R N (G^{2(N - 1)} + \delta_h G^{N - 2})$-smooth over $\D_G$, \ie:
	\[
	\norm{ \nabla \phi_h \left ( \{ \w_{r}^{n} \}_{r = 1}^R\hspace{0mm}_{n = 1}^N \right ) - \nabla \phi_h \left( \{ \widetilde{\w}_{r}^{n} \}_{r = 1}^R\hspace{0mm}_{n = 1}^N \right ) } \leq R N ( G^{2(N - 1)} + \delta_h G^{N - 2}) \cdot \sqrt{ \sum\nolimits_{r = 1}^R \sum\nolimits_{n = 1}^N \norm{ \w_{r}^{n} -  \widetilde{\w}_{r}^{n} }^2 }
	\text{\,,}
	\]
	for any $ \{ \w_{r}^{n} \}_{r = 1}^R\hspace{0mm}_{n = 1}^N,  \{ \widetilde{\w}_{r}^{n} \}_{r = 1}^R\hspace{0mm}_{n = 1}^N \in \D_G$.
\end{lemma}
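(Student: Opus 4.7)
The plan is to bound the partial derivatives of $\phi_h$ one at a time using the gradient formula from Lemma~\ref{lem:cp_gradient}, and then assemble the full gradient Lipschitz bound by summing squared partial-differences. By Lemma~\ref{lem:cp_gradient}, for $\{\w_r^n\}, \{\widetilde{\w}_r^n\} \in \D_G$ and each $(r,n)$,
$$\tfrac{\partial}{\partial \w_r^n} \phi_h(\{\w_{r'}^{n'}\}) - \tfrac{\partial}{\partial \w_r^n} \phi_h(\{\widetilde{\w}_{r'}^{n'}\}) = \mat{\nabla \L_h(\W_e)}_n \cdot \kronp_{n' \neq n} \w_r^{n'} - \mat{\nabla \L_h(\widetilde{\W}_e)}_n \cdot \kronp_{n' \neq n} \widetilde{\w}_r^{n'},$$
where $\W_e, \widetilde{\W}_e$ are the respective end tensors. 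I would apply the standard add-and-subtract trick, inserting $\mat{\nabla \L_h(\W_e)}_n \cdot \kronp_{n' \neq n} \widetilde{\w}_r^{n'}$, and split the partial-difference into two summands via the triangle inequality.

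The first summand is controlled by the operator norm of $\mat{\nabla \L_h(\W_e)}_n$ times the distance between Kronecker products. For the operator norm, I would note that $|\ell_h'(z)| \leq \delta_h$ for every $z \in \R$, so entrywise the gradient $\nabla \L_h(\W_e)$ is bounded by $\delta_h / |\Omega|$ on $\Omega$ and vanishes elsewhere, giving $\|\mat{\nabla \L_h(\W_e)}_n\|_\mathrm{op} \leq \|\nabla \L_h(\W_e)\| \leq \delta_h$. For the Kronecker distance, the same telescoping argument behind Lemma~\ref{lem:outer_prod_distance_bound} applies verbatim (since $\|\aaa \kronp \bb\| = \|\aaa\| \|\bb\|$), yielding $\|\kronp_{n' \neq n} \w_r^{n'} - \kronp_{n' \neq n} \widetilde{\w}_r^{n'}\| \leq G^{N-2} \sum_{n' \neq n} \|\w_r^{n'} - \widetilde{\w}_r^{n'}\|$. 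The second summand is bounded by $\|\nabla \L_h(\W_e) - \nabla \L_h(\widetilde{\W}_e)\| \cdot G^{N-1}$; the $1$-smoothness of $\L_h$ (Lemma~\ref{lem:huber_loss_smooth}) together with Lemma~\ref{lem:param_dist_to_end_to_end_dist} (applied with $B_{\norm{\cdot}} = G$ and $B_{\mathrm{dist}} = d := (\sum_{r',n'} \|\w_{r'}^{n'} - \widetilde{\w}_{r'}^{n'}\|^2)^{1/2}$) further bounds this by $\sqrt{RN}\, G^{2(N-1)} d$.

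Combining, each partial-difference is at most $\delta_h G^{N-2} \sum_{n' \neq n} \|\w_r^{n'} - \widetilde{\w}_r^{n'}\| + \sqrt{RN}\, G^{2(N-1)} d$, and using Cauchy--Schwarz on the first piece ($\sum_{n' \neq n} \|\w_r^{n'} - \widetilde{\w}_r^{n'}\| \leq \sqrt{N}\, d$) yields a uniform per-coordinate bound of $\sqrt{N}\, d \,(\delta_h G^{N-2} + \sqrt{R}\, G^{2(N-1)})$. Squaring, summing the $RN$ coordinates, taking square roots, and crudely using $\sqrt{R} \leq R$ gives $\|\nabla \phi_h(\{\w\}) - \nabla \phi_h(\{\widetilde{\w}\})\| \leq RN(G^{2(N-1)} + \delta_h G^{N-2}) \cdot d$, as claimed.

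I do not expect a serious obstacle: every tool (Lemmas~\ref{lem:cp_gradient}, \ref{lem:outer_prod_distance_bound}, \ref{lem:param_dist_to_end_to_end_dist}, \ref{lem:huber_loss_smooth}) is already available, and the only care needed is bookkeeping of constants to land exactly on the stated $RN(G^{2(N-1)} + \delta_h G^{N-2})$ factor. The mildly delicate step is the uniform bound $\|\nabla \L_h(\W)\| \leq \delta_h$, which relies on the Huber derivative being capped, and is precisely what distinguishes this smoothness statement from what one could prove for the unbounded $\ell_2$ loss.
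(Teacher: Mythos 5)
Your proof is correct and takes essentially the same approach as the paper: write the gradient coordinate-wise via Lemma~\ref{lem:cp_gradient}, split each coordinate's difference by an add-and-subtract into a ``Huber-gradient perturbation'' piece and a ``Kronecker-factor perturbation'' piece, bound the former by $1$-smoothness of $\L_h$ (Lemma~\ref{lem:huber_loss_smooth}) and the latter via Lemma~\ref{lem:outer_prod_distance_bound} together with the uniform bound $\|\nabla \L_h\| \le \delta_h$. The only cosmetic difference is that you bound $\|\W_e - \widetilde{\W}_e\|$ directly with Lemma~\ref{lem:param_dist_to_end_to_end_dist} (an $\ell_2$-type bound with a $\sqrt{RN}$ prefactor) and apply Cauchy--Schwarz per coordinate, whereas the paper keeps an $\ell_1$-type bound $G^{N-1}\sum_{r',n'}\|\w_{r'}^{n'} - \widetilde{\w}_{r'}^{n'}\|$ from Lemma~\ref{lem:outer_prod_distance_bound} throughout and converts to $\ell_2$ once at the very end; both bookkeepings land on the stated $RN(G^{2(N-1)} + \delta_h G^{N-2})$ constant.
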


\begin{proof}
	Let $ \{ \w_{r}^{n} \}_{r = 1}^R\hspace{0mm}_{n = 1}^N,  \{ \widetilde{\w}_{r}^{n} \}_{r = 1}^R\hspace{0mm}_{n = 1}^N \in \D_G$.
	By Lemma~\ref{lem:cp_gradient} we may write:
	\be
	\begin{split}
		& \norm{ \nabla \phi_h \left ( \{ \w_{r}^{n} \}_{r = 1}^R\hspace{0mm}_{n = 1}^N \right ) - \nabla \phi_h \left( \{ \widetilde{\w}_{r}^{n} \}_{r = 1}^R\hspace{0mm}_{n = 1}^N \right ) }^2  \\
		& = \sum_{r = 1}^R \sum_{n = 1}^N \norm{ \mat{ \nabla \L_h \left ( \W_e \right ) }_{n} \cdot \kronp_{n' \neq n} \w_r^{n'} - \mat{ \nabla \L_h \big ( \widetilde{\W}_e \big ) }_{n} \cdot \kronp_{n' \neq n} \widetilde{\w}_r^{n'}  }^2
		\text{\,,}
	\end{split}
	\label{eq:cp_huber_loss_sq_grad_dist}
	\ee
	where $\W_e$ and $\widetilde{\W}_e$ are the end tensors (Equation~\eqref{eq:end_tensor}) of $\{ \w_r^n \}_{r = 1}^R\hspace{0mm}_{n = 1}^N$ and $\{ \widetilde{\w}_r^n \}_{r = 1}^R\hspace{0mm}_{n = 1}^N$, respectively.
	We turn to bound the square root of each term in the sum.
	Fix $r \in [R], n \in [N]$.
	By the triangle inequality and sub-multiplicativity of the Frobenius norm, we have:
	\[
	\begin{split}
		\norm{ \mat{ \nabla \L_h \left ( \W_e \right ) }_{n} \cdot \kronp_{n' \neq n} \w_r^{n'} - \mat{ \nabla \L_h  \big ( \widetilde{\W}_e \big ) }_{n} \cdot \kronp_{n' \neq n} \widetilde{\w}_r^{n'}  } & \leq \underbrace{ \norm{ \mat{ \nabla \L_h \left ( \W_e \right ) }_{n} - \mat{ \nabla \L_h  \big ( \widetilde{\W}_e \big ) }_{n} } }_{(I)} \cdot \underbrace{ \norm{ \kronp_{n' \neq n} \w_r^{n'} } }_{ (II) } \\
		& \hspace{5mm} + \underbrace{ \norm{ \mat{ \nabla \L_h \big ( \widetilde{\W}_e \big ) }_{n} } }_{(III)} \cdot \underbrace{ \norm{ \kronp_{n' \neq n} \w_r^{n'} - \kronp_{n' \neq n} \widetilde{\w}_r^{n'} } }_{(IV)}
		\text{\,.}
	\end{split}
	\]
	Below, we derive upper bounds for $(I), (II), (III)$ and $(IV)$ separately.
	Starting with $(I)$, by Lemma~\ref{lem:huber_loss_smooth}, the triangle inequality and Lemma~\ref{lem:outer_prod_distance_bound}, it follows that:
	\[
	\begin{split}
		(I) & = \norm{ \nabla \L_h \left ( \W_e \right ) - \nabla \L_h  \big ( \widetilde{\W}_e \big ) } \\
		& \le \norm{ \W_e - \widetilde{\W}_e } \\
		& \leq \sum_{r' = 1}^R \norm{ \tenp_{n' = 1}^N \w_{r'}^{n'} - \tenp_{n' = 1}^N \widetilde{\w}_{r'}^{n'} } \\
		& \leq G^{N - 1} \sum_{r' = 1}^R \sum_{n' = 1}^N  \norm{ \w_{r'}^{n'} - \widetilde{\w}_{r'}^{n'} }
		\text{\,.}
	\end{split}
	\]
	Moving on to $(II)$, we have that $\normnoflex{ \kronp_{n' \neq n} \w_r^{n'} } = \prod_{n' \neq n} \normnoflex{ \w_r^{n'} } \leq G^{N - 1}$.
	For $(III)$, the triangle inequality and the fact that $\ell_h' ( \cdot )$, the derivative of $\ell_h (\cdot)$, is bounded (in absolute value) by $\delta_h$ yield:
	\[
	\begin{split}
		(III) & = \norm{ \frac{1}{ \abs{\Omega} } \sum_{I \in \Omega} \ell_h' \left ( [ \widetilde{\W}_e ]_I - y_I \right ) \cdot \mathcal{E}_{I} } \leq \delta_h
		\text{\,,}
	\end{split}
	\]
	where $\mathcal{E}_{I} \in \R^{d_1, \ldots, d_N}$ holds $1$ in its $I$'th entry and $0$ elsewhere, for $I = (i_1, \ldots, i_N) \in \Omega$.
	Lastly, since $\normnoflex{ \kronp_{n' \neq n} \w_r^{n'} - \kronp_{n' \neq n} \widetilde{\w}_r^{n'} } = \normnoflex{ \tenp_{n' \neq n} \w_r^{n'} - \tenp_{n' \neq n} \widetilde{\w}_r^{n'} }$, by Lemma~\ref{lem:outer_prod_distance_bound} we have that:
	\[
	(IV) \leq G^{N - 2} \sum_{n' \neq n} \norm{ \w_r^{n'} - \widetilde{\w}_r^{n'} } \leq G^{N - 2} \sum_{n' =1}^N \norm{ \w_r^{n'} - \widetilde{\w}_r^{n'} }
	\text{\,.}
	\]
	Putting it all together, we arrive at the following bound:
	\[
	\begin{split}
		& \norm{ \mat{ \nabla \L_h \left ( \W_e \right ) }_{n} \cdot \kronp_{n' \neq n} \w_r^{n'} - \mat{ \nabla \L_h  \big ( \widetilde{\W}_e \big ) }_{n} \cdot \kronp_{n' \neq n} \widetilde{\w}_r^{n'}  } \\
		& \leq G^{2(N - 1)} \sum_{r' = 1}^R \sum_{n' = 1}^N  \norm{ \w_{r'}^{n'} - \widetilde{\w}_{r'}^{n'} } + \delta_h G^{N - 2} \sum_{n' =1}^N \norm{ \w_r^{n'} - \widetilde{\w}_r^{n'} } \\
		& \leq ( G^{2(N - 1)} + \delta_h G^{N - 2}) \sum_{r' = 1}^R \sum_{n' = 1}^N  \norm{ \w_{r'}^{n'} - \widetilde{\w}_{r'}^{n'} } 
		\text{\,.}
	\end{split}
	\]
	Applying the bound above to Equation~\eqref{eq:cp_huber_loss_sq_grad_dist}, for all $r \in [R] , n \in [N]$, leads to:
	\[
	\begin{split}
		& \norm{ \nabla \phi_h \left ( \{ \w_{r}^{n} \}_{r = 1}^R\hspace{0mm}_{n = 1}^N \right ) - \nabla \phi_h \left( \{ \widetilde{\w}_{r}^{n} \}_{r = 1}^R\hspace{0mm}_{n = 1}^N \right ) }^2  \\
		& \leq R N ( G^{2(N - 1)} + \delta_h G^{N - 2})^2 \left ( \sum\nolimits_{r = 1}^R \sum\nolimits_{n = 1}^N  \norm{ \w_{r}^{n} - \widetilde{\w}_{r}^{n} } \right )^2 \\
		& \leq R^2 N^2 ( G^{2(N - 1)} + \delta_h G^{N - 2})^2 \sum\nolimits_{r = 1}^R \sum\nolimits_{n = 1}^N  \norm{ \w_{r}^{n} - \widetilde{\w}_{r}^{n} }^2
		\text{\,,}
	\end{split}
	\]
	where the last transition is by the fact that $\normnoflex{ \x }_1 \leq \sqrt{d} \cdot \normnoflex{ \x }$ for any $\x \in \R^d$.
	Taking the square root of both sides concludes the proof.
\end{proof}

\begin{lemma}
	\label{lem:const_grad_inner_prod_monotonicity_helper_lemma}
	Let $t' > 0$ and $r \in [R]$.
	Denote $\gamma_r (t) := \inprodnoflex{ - \nabla \L_h ( \W_e (t) ) }{ \tenp_{n = 1}^N \widehat{\w}_r^{n} (t) }$, where $\widehat{\w}_r^{n} (t) := \w_r^{n} (t) / \normnoflex{ \w_r^{n} (t) }$ if $\w_r^{n} (t) \neq 0$, and $\widehat{\w}_r^{n} (t) := 0$ otherwise, for $n = 1, \ldots, N$. Suppose that $\nabla \L_h ( \W_e (t)) = \nabla \L_h (0)$ for all $t \in [0, t')$.
	Then, $\gamma_r (t)$ is monotonically non-decreasing over the interval $[0, t')$.
\end{lemma}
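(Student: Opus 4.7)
My plan is to leverage the balanced initialization (Assumption~\ref{assump:a_balance}) together with the conservation law from Lemma~\ref{lem:balancedness_conservation}, which ensures that throughout $[0, t')$ all vectors in the $r$'th component share a common norm $\rho(t) := \normnoflex{\w_r^1(t)} = \cdots = \normnoflex{\w_r^N(t)}$. I would first dispense with the degenerate case where $\rho(0) = 0$: by Lemma~\ref{lem:balanced_param_vector_norm_no_sign_change}, $\rho(t) \equiv 0$, so by the convention used in Corollary~\ref{cor:dyn_fac_comp_norm_balanced} we have $\widehat{\w}_r^n(t) \equiv 0$, giving $\gamma_r(t) \equiv 0$, which is trivially non-decreasing.

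In the remaining case $\rho(0) > 0$, Lemma~\ref{lem:balanced_param_vector_norm_no_sign_change} ensures $\rho(t) > 0$ throughout $[0, t')$. My plan is then to set $\mathbf{g} := -\nabla \L_h(0)$, $u(t) := \normnoflex{\tenp_{n=1}^N \w_r^n(t)} = \rho(t)^N$, and $f(t) := \inprod{\mathbf{g}}{\tenp_{n=1}^N \w_r^n(t)}$, so that $\gamma_r(t) = f(t)/u(t)$. The hypothesis that $\nabla \L_h(\W_e(t)) = \nabla \L_h(0)$ on $[0, t')$ turns the gradient flow equation (via Lemma~\ref{lem:cp_gradient}) into $\tfrac{d}{dt}\w_r^n(t) = \mat{\mathbf{g}}_n \cdot \kronp_{n' \neq n} \w_r^{n'}(t)$. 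Differentiating $f$ using the product rule and Lemma~\ref{lem:inp_with_tenp_to_mat_kronp} then produces the clean identity $f'(t) = \sum_{n=1}^N \normnoflex{\mat{\mathbf{g}}_n \cdot \kronp_{n' \neq n} \w_r^{n'}(t)}^2 \geq 0$, while Corollary~\ref{cor:dyn_fac_comp_norm_balanced} gives $u'(t) = N f(t) u(t)^{1 - 2/N}$.

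The main step, and what I expect to be the real work, is to establish the pointwise inequality $u(t)^{2/N} f'(t) \geq N f(t)^2$, which is exactly what the quotient rule applied to $\gamma_r = f/u$ requires in order to conclude $\gamma_r'(t) \geq 0$. The key observation here is that Lemma~\ref{lem:inp_with_tenp_to_mat_kronp} also rewrites $f(t) = \inprod{\mat{\mathbf{g}}_n \cdot \kronp_{n' \neq n} \w_r^{n'}(t)}{\w_r^n(t)}$ for each $n$, so the Cauchy-Schwartz inequality together with $\normnoflex{\w_r^n(t)} = \rho(t)$ yields $f(t)^2 \leq \normnoflex{\mat{\mathbf{g}}_n \cdot \kronp_{n' \neq n} \w_r^{n'}(t)}^2 \rho(t)^2$ for every $n$. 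Summing these $N$ bounds and recalling $\rho(t)^2 = u(t)^{2/N}$ delivers the required inequality exactly. The main obstacle will be recognizing that the sum-of-squares form of $f'(t)$ aligns perfectly with $N$ copies of Cauchy-Schwartz applied to the same $f(t)$; once that alignment is spotted, the remainder is bookkeeping on exponents.
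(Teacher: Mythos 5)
Your proof is correct and follows essentially the same route as the paper's: both dispense with the zero-norm component via Lemma~\ref{lem:balanced_param_vector_norm_no_sign_change}, then differentiate $\gamma_r = f/u$ and show the numerator is non-negative. The paper bounds each summand $\normnoflex{\tfrac{d}{dt}\w_r^n(t)}^2 - (\tfrac{d}{dt}\normnoflex{\w_r^n(t)})^2 \geq 0$ by orthogonal projection onto $\w_r^n(t)$, which is exactly your per-$n$ Cauchy--Schwarz bound $f(t)^2 \leq \normnoflex{\mat{\mathbf{g}}_n\cdot\kronp_{n'\neq n}\w_r^{n'}(t)}^2\rho(t)^2$ in disguise, so the two arguments coincide after unwinding the notation.
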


\begin{proof}
	In the following, unless explicitly stated otherwise, $t$ is to be considered in the time interval $[0, t')$.
	
	Recall that by Assumption~\ref{assump:a_balance} we have that $\normnoflex{ \w_r^{1} (0) } = \cdots = \normnoflex{ \w_r^{N} (0) }$.
	If $\normnoflex{ \w_r^{1} (0) } = \cdots = \normnoflex{ \w_r^{N} (0) } = 0$, then according to Lemma~\ref{lem:balanced_param_vector_norm_no_sign_change} $\normnoflex{ \w_r^{1} (t) } = \cdots = \normnoflex{ \w_r^{N} (t) } = 0$ for all $t \geq 0$.
	In this case $\gamma_r (t) = 0$ over $[0, t')$, and is therefore non-decreasing.
	
	Otherwise, if $\normnoflex{ \w_r^{1} (0) } = \cdots = \normnoflex{ \w_r^{N} (0) } > 0$, from Lemma~\ref{lem:balanced_param_vector_norm_no_sign_change}  we get that $\normnoflex{ \w_r^{1} (t) } = \cdots = \normnoflex{ \w_r^{N} (t) }> 0$ for all $t \geq 0$.
	Thus:
	\[
	\begin{split}
	\gamma_r (t) & = \norm{ \tenp_{n = 1}^N \w_r^{n} (t) }^{-1} \inprodnoflex{ - \nabla \L_h ( \W_e (t) ) }{ \tenp_{n = 1}^N \w_r^{n} (t) } \\
	& = \norm{ \tenp_{n = 1}^N \w_r^{n} (t) }^{-1} \inprodnoflex{ - \nabla \L_h ( 0 ) }{ \tenp_{n = 1}^N \w_r^{n} (t) }
	\text{\,,}
	\end{split}
	\] 
	where the second transition is due to $\nabla \L_h ( \W_e (t)) = \nabla \L_h (0)$.
	Differentiating with respect to time, we have that:
	\be
	\begin{split}
		\frac{d}{dt} \gamma_r (t) & = - \underbrace{\frac{d}{dt} \left [ \norm{ \tenp_{n = 1}^N \w_r^{n} (t) } \right ] \cdot \norm{ \tenp_{n = 1}^N \w_r^{n} (t) }^{-2} \inprod{ - \nabla \L_h ( 0 ) }{ \tenp_{n = 1}^N \w_r^{n} (t) }}_{(I)} \\
		& \hspace{5mm} + \norm{ \tenp_{n = 1}^N \w_r^{n} (t) }^{-1} \underbrace{ \inprod{ - \nabla \L_h ( 0 ) }{ \tfrac{d}{dt} \tenp_{n = 1}^N \w_r^{n} (t) } }_{ (II) }
		\text{\,.}
	\end{split}
	\label{eq:inner_prod_time_deriv_two_parts}
	\ee
	We now treat $(I)$ and $(II)$ separately.
	Plugging the expression for $\frac{d}{dt} \normnoflex{ \tenp_{n = 1}^N \w_r^{n} (t) }$ from Corollary~\ref{cor:dyn_fac_comp_norm_balanced} into $(I)$, and recalling that $\nabla \L_h ( \W_e (t) ) = \nabla \L_h (0)$, leads to:
	\[
	\begin{split}
		(I) = N \norm{ \tenp_{n = 1}^N \w_r^{n} (t) }^{-1 - 2 / N} \inprod{ - \nabla \L_h ( 0 ) }{ \tenp_{n = 1}^N \w_r^{n} (t) }^2
		\text{\,.}
	\end{split}
	\]
	Due to the fact that $\normnoflex{ \tenp_{n = 1}^N \w_r^{n} (t) }^{-2/N} = \normnoflex{ \w_r^{1} (t) }^{-2} = \cdots = \normnoflex{ \w_r^{N} (t) }^{-2}$, we may equivalently write:
	\be
	\begin{split}
		(I) = \norm{ \tenp_{n = 1}^N \w_r^{n} (t) }^{-1} \sum_{n = 1}^N \norm{ \w_r^{n} (t) }^{-2} \inprod{ - \nabla \L_h ( 0 ) }{ \tenp_{n' = 1}^N \w_r^{n'} (t) }^2
		\text{\,.}
	\end{split}
	\label{eq:I_phrase_in_mon_inner_prod_lemma}
	\ee
	For any $n \in [N]$, by Lemma~\ref{lem:dyn_parameter_vector_sq_norm} we know that $\frac{d}{dt} \normnoflex{ \w_r^{n} (t) }^2 = - 2 \inprodnoflex{ \nabla \L_h \left ( 0 \right ) }{ \tenp_{n' = 1}^N \w_r^{n'} (t) }$, which implies $\frac{d}{dt} \normnoflex{ \w_r^{n} (t) } = \normnoflex{ \w_r^{n} (t) }^{-1} \inprodnoflex{ - \nabla \L_h \left ( 0 \right ) }{ \tenp_{n' = 1}^N \w_r^{n'} (t) }$.
	Going back to Equation~\eqref{eq:I_phrase_in_mon_inner_prod_lemma}, we can see that:
	\[
	(I) = \norm{ \tenp_{n = 1}^N \w_r^{n} (t) }^{-1} \sum_{n = 1}^N \left ( \tfrac{d}{dt} \norm{ \w_r^{n} (t) } \right )^2
	\text{\,.}
	\]
	Turning our attention to $(II)$, by Lemmas~\ref{lem:inp_with_tenp_to_mat_kronp} and~\ref{lem:cp_gradient} it follows that:
	\[
	\begin{split}
		(II) & =  \sum_{n = 1}^N \inprod{ - \nabla \L_h ( 0 ) }{ \left ( \tenp_{n' = 1}^{n - 1} \w_r^{n'} (t) \right ) \tenp \tfrac{d}{dt} \w_r^{n} (t) \tenp \left ( \tenp_{n' = n + 1}^N \w_r^{n'} (t) \right )  } \\
		& = \sum_{n = 1}^N \inprod{ \mat{ - \nabla \L_h \left ( 0 \right ) }_{n} \cdot \kronp_{n' \neq n} \w_r^{n'} (t)}{ \tfrac{d}{dt} \w_r^{n} (t) } \\
		& = \sum_{n = 1}^N \norm{  \tfrac{d}{dt} \w_r^{n} (t) }^2
		\text{\,.}
	\end{split}
	\]
	Plugging the expressions we derived for $(I)$ and $(II)$ into Equation~\eqref{eq:inner_prod_time_deriv_two_parts} yields:
	\be
	\begin{split}
		& \frac{d}{dt} \gamma_r (t) = \norm{ \tenp_{n = 1}^N \w_r^{n} (t) }^{-1} \cdot \sum_{n = 1}^N  \left [ \norm{  \tfrac{d}{dt} \w_r^{n} (t) }^2 -  \left ( \tfrac{d}{dt} \norm{ \w_r^{n} (t) } \right )^2 \right ]
		\text{\,.}
	\end{split}
	\label{eq:inner_prod_time_deriv_final}
	\ee
	Notice that for any $n \in [N]$:
	\[
	\begin{split}
		\norm{ \tfrac{d}{dt} \w_r^{n} (t) }^2 & \geq \norm{ \Pi_{\w_r^{n} (t)} \left ( \tfrac{d}{dt} \w_r^{n} (t) \right ) }^2 \\
		& = \norm{ \inprodnoflex{ \tfrac{d}{dt} \w_r^{n} (t) }{ \w_r^{n} (t) } \frac{ \w_r^{n} (t) }{  \normnoflex{ \w_r^{n} (t) }^2 } }^2 \\
		& = \left ( \norm{ \w_r^{n} (t) }^{-1} \inprod{ \tfrac{d}{dt} \w_r^{n} (t) }{ \w_r^{n} (t) } \right )^2 \\
		& =  \left ( \tfrac{d}{dt} \norm{ \w_r^{n} (t) } \right )^2
		\text{\,,}
	\end{split}
	\]
	where $\Pi_{\w_r^{n} (t)} (\cdot)$ denotes the orthogonal projection onto the subspace spanned by $\w_r^{n} (t)$.
	The right hand side in Equation~\eqref{eq:inner_prod_time_deriv_final} is therefore non-negative, \ie~$\frac{d}{dt} \gamma_r (t) \geq 0$,
	concluding the proof.
\end{proof}

\subsubsection{Stage I: End Tensor Reaches Reference Sphere}
\label{app:proofs:approx_rank_1:const_grad}

\begin{proposition}
\label{prop:end_tensor_reaches_ref_sphere}
The end tensor initially reaches reference sphere $\S$ (Equation~\eqref{eq:ref_sphere}) at some time $T_0 >0$, and:
\begin{align}
	& \norm{ \tenp_{n = 1}^N \w_r^{n} (t) } \leq \tilde{\epsilon} \quad, ~t \in [0, T_0] ~,~ r = 2, \ldots, R 
	\text{\,,}	
	\label{eq:approx_rank_1_stage_I_t_0_comp_bounds_upper_bound} \\[0.5ex]
	& \abs{\norm{ \tenp_{n = 1}^N \w_1^{n} (T_0) } - \rho} \leq (R - 1) \cdot \tilde{\epsilon} \text{\,,} 
	\label{eq:approx_rank_1_stage_I_t_0_first_comp_bound} 
\end{align}
where $\tilde{\epsilon}$ is as defined in Equation~\eqref{eq:approx_rank_1_consts}.
\end{proposition}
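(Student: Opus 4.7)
The plan rests on Lemma~\ref{lem:huber_loss_const_grad_near_zero}: whenever $\norm{\W} \leq \rho$ the loss gradient equals the constant $\nabla \L_h(0)$, so inside the reference ball the component dynamics become autonomous and amenable to explicit comparison. Define $T_0 := \inf\{ t \geq 0 : \W_e(t) \in \S \}$ (possibly $\infty$). First I would verify $\norm{\W_e(0)} \leq \sum_{r=1}^R \alpha^N \norm{\aaa_r}^N \leq R \alpha^N A^N < \rho$ using the first clause of Equation~\eqref{eq:alpha_size_requirement}, which ensures $T_0 > 0$ and that $\nabla \L_h(\W_e(t)) = \nabla \L_h(0)$ throughout $[0, T_0)$.

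On this interval, Assumption~\ref{assump:a_balance} permits invoking Corollary~\ref{cor:dyn_fac_comp_norm_balanced}, giving, for $x_r(t) := \normnoflex{\tenp_{n=1}^N \w_r^n(t)}$,
\[
\tfrac{d}{dt} x_r(t) = N \gamma_r(t) \, x_r(t)^{2 - 2/N} , \qquad r = 1 , \ldots , R ,
\]
with $\gamma_r(t)$ non-decreasing by Lemma~\ref{lem:const_grad_inner_prod_monotonicity_helper_lemma}. Assumption~\ref{assump:a_lead_comp} yields $\gamma_1(t) \geq \gamma_1(0) > 0$, while Cauchy--Schwarz gives $\abs{\gamma_r(t)} \leq \norm{\nabla \L_h(0)}$ for $r \geq 2$. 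The scalar ODE $\dot y = N c \, y^{2 - 2/N}$ integrates to $y(t) = (y(0)^{-(N-2)/N} - (N-2) c t)^{-N/(N-2)}$, which blows up in finite time. Applied as a lower bound on $x_1$ (with $c = \gamma_1(0)$) and as upper bounds on $x_r$, $r \geq 2$ (with $c = \norm{\nabla \L_h(0)}$, using $\abs{\dot x_r} \leq N \norm{\nabla \L_h(0)} x_r^{2-2/N}$), this produces explicit estimates valid as long as the gradient remains constant.

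Let $\tau$ be the explicit time at which the lower bound on $x_1$ equals $\rho + (R-1)\tilde\epsilon$. A direct substitution shows that the second clause of Equation~\eqref{eq:alpha_size_requirement} is calibrated precisely so that the upper bound on $x_r$ ($r \geq 2$) stays $\leq \tilde\epsilon$ throughout $[0, \tau]$: after dividing through by $\alpha^{-(N-2)}$ the required condition reduces to
\[
\alpha \leq \tilde\epsilon^{1/N} \left( \norm{\aaa_r}^{2-N} - \tfrac{\norm{\nabla \L_h(0)}}{\gamma_1(0)} \norm{\aaa_1}^{2-N} \right)^{1/(N-2)} ,
\]
which is implied by $\norm{\aaa_r} \leq A_{-1}$ together with the strict domination in Assumption~\ref{assump:a_lead_comp}. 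By the reverse triangle inequality $\norm{\W_e(t)} \geq x_1(t) - \sum_{r \geq 2} x_r(t)$, so if $T_0 > \tau$ then $\norm{\W_e(\tau)} \geq \rho$, contradicting the definition of $T_0$; hence $T_0 \leq \tau < \infty$. Continuity of $\norm{\W_e(\cdot)}$ and of each $x_r(\cdot)$ yields $\W_e(T_0) \in \S$ and extends the component bounds to $t = T_0$, establishing Equation~\eqref{eq:approx_rank_1_stage_I_t_0_comp_bounds_upper_bound}. Equation~\eqref{eq:approx_rank_1_stage_I_t_0_first_comp_bound} then follows from the triangle inequality applied in both directions at $T_0$, which forces $x_1(T_0) \in [\rho - (R-1)\tilde\epsilon, \, \rho + (R-1)\tilde\epsilon]$.

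The main obstacle is ensuring the upper bound on each non-leading $x_r$ holds throughout $[0, T_0]$ and not merely at the critical moment. The monotonicity of $\gamma_r(t)$ supplied by Lemma~\ref{lem:const_grad_inner_prod_monotonicity_helper_lemma} is what makes the scalar comparison robust to rotation of the directions $\widehat{\w}_r^n(t)$, and the strict inequality in Assumption~\ref{assump:a_lead_comp} is exactly what guarantees that the characteristic timescale $\alpha^{N-2} \norm{\aaa_1}^{-(N-2)} / \gamma_1(0)$ for $x_1$ to inflate to sphere scale is genuinely shorter than the timescale required for the remaining components to escape their $\OO(\alpha^N)$ initialization.
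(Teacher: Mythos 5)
Your proposal is correct and takes essentially the same approach as the paper: constant gradient inside the reference sphere (Lemma~\ref{lem:huber_loss_const_grad_near_zero}), the separable ODE from Corollary~\ref{cor:dyn_fac_comp_norm_balanced}, monotonicity of $\gamma_1$ from Lemma~\ref{lem:const_grad_inner_prod_monotonicity_helper_lemma} for the lower bound on $x_1$, Cauchy--Schwarz for the upper bounds on $x_r$ ($r\geq 2$), calibration of the $\alpha$ constraint through Assumption~\ref{assump:a_lead_comp}, and the (reverse) triangle inequality to localize $T_0$. The only structural difference is that the paper factors the ODE integration and the resulting comparison bounds into a standalone helper (Lemma~\ref{lem:const_grad_comp_norm_bounds_helper_lemma}) and argues by contradiction from $T_0=\infty$, whereas you inline the integration and argue from a reference time $\tau$; these are equivalent.
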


\medskip

Towards proving Proposition~\ref{prop:end_tensor_reaches_ref_sphere}, we establish the following key lemma.
\begin{lemma}
	\label{lem:const_grad_comp_norm_bounds_helper_lemma}
	Let $t' \leq  \frac{\alpha^{2 - N} \normnoflex{ \aaa_1 }^{2 - N} (N - 2)^{-1}}{ \inprodnoflex{ - \nabla \L_h (0) }{ \tenp_{n = 1}^N \widehat{\aaa}_{ 1 }^{n} } }$, and suppose that $\nabla \L_h ( \W_e (t)) = \nabla \L_h (0)$ for all $t \in [0, t')$.
	Then:
	\begin{align}
		& \norm{ \tenp_{n = 1}^N \w_1^{n} (t) } \geq \left ( \alpha^{2-N} \normnoflex{ \aaa_1 }^{2 - N}  - (N - 2) \inprod{ - \nabla \L_h (0) }{ \tenp_{n = 1}^N \widehat{\aaa}_{ 1 }^{n} } \cdot t \right )^{- \frac{N}{N-2}} ~~,~ t \in [0, t') \text{\,,}
		\label{eq:comp_norm_lower_bound_with_time_helper_lemma} \\[0.5ex]
		& \norm{ \tenp_{n = 1}^N \w_r^{n} (t) } \leq \left ( \alpha^{2-N} \normnoflex{ \aaa_r }^{2 - N}  - (N - 2) \norm{ \nabla \L_h ( 0 ) } \cdot t \right )^{- \frac{N}{N-2}} ~~,~ t \in [0, t') ~,~ r = 2, \ldots, R \text{\,.}
		\label{eq:comp_norms_upper_bound_with_time_helper_lemma}
	\end{align}
	In particular:
	\be
	\norm{ \tenp_{n = 1}^N \w_r^{n} (t) } \leq \alpha^N \left ( \normnoflex{ \aaa_r }^{2 - N}  - \normnoflex{ \aaa_1 }^{2 - N} \tfrac{ \norm{ \nabla \L_h ( 0 ) } }{  \inprod{ - \nabla \L_h (0) }{ \tenp_{n = 1}^N \widehat{\aaa}_{ 1 }^{n} } } \right )^{- \frac{N}{N-2}} ~~,~ t \in [0, t') ~,~ r = 2, \ldots, R \text{\,.}
	\label{eq:comp_norms_upper_bound_helper_lemma}
	\ee
\end{lemma}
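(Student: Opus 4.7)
The plan is to apply the balanced dynamical characterization of Corollary~\ref{cor:dyn_fac_comp_norm_balanced}~---~applicable since Assumption~\ref{assump:a_balance} gives unbalancedness magnitude zero~---~to each component's norm, and integrate the resulting separable ODE using the hypothesis $\nabla \L_h(\W_e(t)) = \nabla \L_h(0)$. For each $r \in [R]$, let $f_r(t) := \normnoflex{\tenp_{n=1}^N \w_r^n(t)}$ and $\gamma_r(t) := \inprod{-\nabla \L_h(0)}{\tenp_{n=1}^N \widehat{\w}_r^n(t)}$. Corollary~\ref{cor:dyn_fac_comp_norm_balanced} then yields $\tfrac{d}{dt} f_r(t) = N\,\gamma_r(t)\, f_r(t)^{2 - 2/N}$ whenever $f_r(t) > 0$, and, by Lemma~\ref{lem:balanced_param_vector_norm_no_sign_change}, $f_r$ either stays identically zero (trivial case) or stays strictly positive throughout $[0,t')$.

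For the upper bound~\eqref{eq:comp_norms_upper_bound_with_time_helper_lemma} (case $r \geq 2$), I would use Cauchy--Schwarz together with $\normnoflex{\tenp_{n=1}^N \widehat{\w}_r^n(t)} \leq 1$ to conclude $\gamma_r(t) \leq \normnoflex{\nabla \L_h(0)}$, giving the differential inequality $\tfrac{d}{dt} f_r(t) \leq N \normnoflex{\nabla \L_h(0)} f_r(t)^{2-2/N}$. Separating variables and integrating $f_r^{-(2-2/N)} df_r$ from $0$ to $t$ produces $-\tfrac{N}{N-2}\bigl[f_r(t)^{-(N-2)/N} - f_r(0)^{-(N-2)/N}\bigr] \leq N \normnoflex{\nabla \L_h(0)} t$, whence, using $f_r(0)^{-(N-2)/N} = \alpha^{2-N}\normnoflex{\aaa_r}^{2-N}$ (from Assumption~\ref{assump:a_balance}), rearrangement gives Equation~\eqref{eq:comp_norms_upper_bound_with_time_helper_lemma}. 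The trivial case $\normnoflex{\aaa_r}=0$ is handled by noting $f_r \equiv 0$ while the right-hand side is interpreted as $+\infty$.

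For the lower bound~\eqref{eq:comp_norm_lower_bound_with_time_helper_lemma} (case $r=1$), the key input is that $\gamma_1(t)$ is bounded below by a positive constant. I would invoke Lemma~\ref{lem:const_grad_inner_prod_monotonicity_helper_lemma} to conclude $\gamma_1(t) \geq \gamma_1(0) = \inprod{-\nabla \L_h(0)}{\tenp_{n=1}^N \widehat{\aaa}_1^n} > 0$, where positivity is exactly the first part of Assumption~\ref{assump:a_lead_comp}. Then $\tfrac{d}{dt} f_1(t) \geq N \gamma_1(0) f_1(t)^{2-2/N}$, and the same separation-of-variables calculation with the inequality reversed yields Equation~\eqref{eq:comp_norm_lower_bound_with_time_helper_lemma}; the positivity hypothesis on $t'$ ensures the base of the exponent remains positive over $[0,t')$.

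Finally, Equation~\eqref{eq:comp_norms_upper_bound_helper_lemma} follows by substituting the maximal admissible time $t = \tfrac{\alpha^{2-N}\normnoflex{\aaa_1}^{2-N}}{(N-2)\inprod{-\nabla \L_h(0)}{\tenp_{n=1}^N \widehat{\aaa}_1^n}}$ into~\eqref{eq:comp_norms_upper_bound_with_time_helper_lemma} and simplifying, using the identity $\alpha^{(2-N)\cdot(-N/(N-2))} = \alpha^N$; the base of the resulting exponent is positive precisely by the second part of Assumption~\ref{assump:a_lead_comp}. The main obstacle I anticipate is purely bookkeeping: ensuring the separable integration is justified (which requires $f_r > 0$, obtained from Lemma~\ref{lem:balanced_param_vector_norm_no_sign_change} in the nontrivial case) and that every base-of-exponent quantity stays strictly positive throughout $[0,t')$~---~this is where the upper bound on $t'$ in the hypothesis and Assumption~\ref{assump:a_lead_comp} are both crucial.
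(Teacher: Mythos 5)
Your proposal is correct and takes essentially the same route as the paper's proof: reduce to the scalar ODE $\frac{d}{dt} f_r = N\gamma_r(t) f_r^{2-2/N}$ via Corollary~\ref{cor:dyn_fac_comp_norm_balanced}, bound $\gamma_r$ from above by Cauchy--Schwarz for $r \geq 2$ and from below by the monotonicity Lemma~\ref{lem:const_grad_inner_prod_monotonicity_helper_lemma} for $r = 1$, justify division by $f_r^{2-2/N}$ with Lemma~\ref{lem:balanced_param_vector_norm_no_sign_change}, and integrate by separation of variables. The only cosmetic difference is in the final step: the paper first bounds the right-hand side of~\eqref{eq:comp_norms_upper_bound_with_time_helper_lemma} by evaluating it at $t'$ (using that it is nondecreasing in $t$ and that Assumption~\ref{assump:a_lead_comp} keeps the base positive up through $t'$), then applies the hypothesis on $t'$, whereas you substitute the maximal admissible value of $t'$ directly; both are valid. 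One small imprecision: when $\normnoflex{\aaa_r}=0$, the base of the exponent is $+\infty$, so the right-hand side of~\eqref{eq:comp_norms_upper_bound_with_time_helper_lemma} is $0$ (not $+\infty$), and the bound holds as $0 \leq 0$ since $f_r \equiv 0$~---~the conclusion is the same, but "interpreted as $+\infty$" refers to the base, not the full expression.
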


\begin{proof}
For simplicity of notation we denote $\gamma_r (t) := \inprodnoflex{ - \nabla \L_h ( \W_e (t) ) }{ \tenp_{n = 1}^N \widehat{\w}_r^{n} (t) }$, where $\widehat{\w}_r^{n} (t) := \w_r^{n} (t) / \normnoflex{ \w_r^{n} (t) }$ if $\w_r^{n} (t) \neq 0$, and $\widehat{\w}_r^{n} (t) := 0$ otherwise, for $r = 1, \ldots, R, ~n = 1, \ldots, N$.
In the following, unless explicitly stated otherwise, $t$ is to be considered in the time interval $[0, t')$.

Since $\{ \aaa_r^n \}_{r = 1}^R\hspace{0mm}_{n = 1}^N$ have unbalancedness magnitude zero (Assumption~\ref{assump:a_balance}) so do $\{ \w_r^n (0) \}_{r = 1}^R\hspace{0mm}_{n = 1}^N$ (recall $\w_r^n (0) = \alpha \cdot \aaa_r^n$ for $r = 1, \ldots, R, ~n = 1, \ldots, N$).
According to Corollary~\ref{cor:dyn_fac_comp_norm_balanced} the evolution of a component's norm is given by:
\be
\frac{d}{dt} \norm{ \tenp_{n = 1}^N \w_r^{n} (t) } = N \gamma_r (t) \cdot \norm{ \tenp_{n = 1}^N \w_r^{n} (t) }^{2 - \frac{2}{N}} \quad , ~r = 1, \ldots, R
\text{\,.}
\label{eq:dyn_fac_comp_norm_const_grad}
\ee

\paragraph*{Proof of Equation~\eqref{eq:comp_norm_lower_bound_with_time_helper_lemma}  (lower bound for $\normnoflex{ \tenp_{n = 1}^N \w_1^{n} (t) }$):}

By Lemma~\ref{lem:const_grad_inner_prod_monotonicity_helper_lemma}, $\gamma_1 (t)$ is monotonically non-decreasing.
Thus, from Equation~\eqref{eq:dyn_fac_comp_norm_const_grad} we have:
\be
\frac{d}{dt} \norm{ \tenp_{n = 1}^N \w_1^{n} (t) } \geq N \gamma_1 (0) \cdot \norm{ \tenp_{n = 1}^N \w_1^{n} (t) }^{2 -\frac{2}{N}}
\text{\,.}
\label{eq:comp_1_norm_deriv_lower_bound}
\ee
Assumption~\ref{assump:a_lead_comp} (second line in Equation~\eqref{eq:assump_components_sep_at_init}) necessarily means that $\w_1^{n} (0) = \alpha \cdot \aaa_1^{n} \neq 0$ for all $n \in [N]$.
Recalling that the unbalancedness magnitude is zero at initialization, from Lemma~\ref{lem:balanced_param_vector_norm_no_sign_change} we get that $\normnoflex{ \w_1^{1} (t) } = \cdots = \normnoflex{ \w_1^{N} (t) } > 0$, and so $\normnoflex{ \tenp_{n = 1}^N \w_1^{n} (t) }^{2 - 2 / N} > 0$, for all $t \in [0, t')$.
Therefore, we may divide both sides of Equation~\eqref{eq:comp_1_norm_deriv_lower_bound} by $\normnoflex{ \tenp_{n = 1}^N \w_1^{n} (t) }^{2 - 2 / N}$.
Doing so, and integrating with respect to time, leads to:
\be
\begin{split}
	& \int_{\hat{t} = 0}^t \left [ \norm{ \tenp_{n = 1}^N \w_1^{n} (\hat{t}) }^{2 / N - 2} \frac{d}{d \hat{t}} \norm{ \tenp_{n = 1}^N \w_1^{n} (\hat{t}) } \right ] d \hat{t} \geq N \gamma_1 (0) \cdot t \\
	\implies & \frac{N}{2 - N} \left ( \norm{ \tenp_{n = 1}^N \w_1^{n} (t) }^{2 / N - 1} - \norm{ \tenp_{n = 1}^N \w_1^{n} (0) }^{2 / N - 1} \right ) \geq N \gamma_1 (0) \cdot t \\
	\implies & \norm{ \tenp_{n = 1}^N \w_1^{n} (t) }^{2 / N - 1} \leq  \norm{ \tenp_{n = 1}^N \w_1^{n} (0) }^{2 / N - 1} - (N - 2) \gamma_1 (0) \cdot t
	\text{\,.}
\end{split}
\label{eq:comp_1_integration_lower_bound}
\ee
Notice that $\gamma_1 (0) =  \inprodnoflex{ - \nabla \L_h ( \W_e (0) ) }{ \tenp_{n = 1}^N \widehat{\w}_1^{n} (0) } = \inprodnoflex{ - \nabla \L_h ( 0 ) }{ \tenp_{n = 1}^N \widehat{\aaa}_1^{n} }$. 
Since $\normnoflex{ \tenp_{n = 1}^N \w_1^{n} (0) } = \prod_{n = 1}^N \normnoflex{ \w_1^{n} (0) } = \alpha^N \normnoflex{ \aaa_1 }^N$ and $t < t' \leq \alpha^{2 - N} \normnoflex{ \aaa_1 }^{2 - N} (N - 2)^{-1} \gamma_1 (0)^{-1}$, we can see that:
\[
\norm{ \tenp_{n = 1}^N \w_1^{n} (0) }^{2 / N - 1} - (N - 2) \gamma_1 (0) \cdot t = \alpha^{2 - N} \norm{ \aaa_1 }^{2 - N} - (N - 2) \gamma_1 (0) \cdot t > 0
\text{\,.}
\]
Therefore, Equation~\eqref{eq:comp_norm_lower_bound_with_time_helper_lemma} readily follows by rearranging the last inequality in Equation~\eqref{eq:comp_1_integration_lower_bound}:
\[
\begin{split}
	\norm{ \tenp_{n = 1}^N \w_1^{n} (t) } \geq \left (\alpha^{2 - N} \norm{ \aaa_1}^{2 - N} - (N - 2) \gamma_1 (0) \cdot t  \right )^{ - \frac{N}{N - 2} }
	\text{\,.}
\end{split}
\]

\paragraph*{Proof of Equations~\eqref{eq:comp_norms_upper_bound_with_time_helper_lemma} and~\eqref{eq:comp_norms_upper_bound_helper_lemma}  (upper bounds for $\normnoflex{ \tenp_{n = 1}^N \w_r^{n} (t) }$):}

Fix some $r \in \{ 2, \ldots, R \}$.
First, we deal with the case where $\normnoflex{ \w_r^{1} (0) } = \cdots = \normnoflex{ \w_r^{N} (0) } = 0$.
If it is so, by Lemma~\ref{lem:balanced_param_vector_norm_no_sign_change} we have that $\normnoflex{ \w_r^{1} (t) } = \cdots = \normnoflex{ \w_r^{N} (t) } = 0$ for all $t \in [0, t')$.
Hence, $\normnoflex{ \tenp_{n = 1}^N \w_r^{n} (t) } = 0$ for all $t \in [0, t')$, \ie~Equations~\eqref{eq:comp_norms_upper_bound_with_time_helper_lemma} and~\eqref{eq:comp_norms_upper_bound_helper_lemma} trivially hold.

Now we move to the case where $\normnoflex{ \w_r^{1} (0) } = \cdots = \normnoflex{ \w_r^{N} (0) } > 0$.
From Lemma~\ref{lem:balanced_param_vector_norm_no_sign_change} we know that $\normnoflex{ \w_r^{1} (t) } = \cdots = \normnoflex{ \w_r^{N} (t) } > 0$ for all $t \in [0, t')$.
Since $\nabla \L_h ( \W_e (t)) = \nabla \L_h (0)$, by the Cauchy-Schwartz inequality we then have:
\[
\gamma_r (t) = \inprod{ - \nabla \L_h ( 0 ) }{ \tenp_{n = 1}^N \widehat{\w}_r^{n} (t) } \leq \norm{ \nabla \L_h (0) } \norm{ \tenp_{n = 1}^N \widehat{\w}_r^{n} (t) } = \norm{ \nabla \L_h (0) }
\text{\,.}
\]
Combined with Equation~\eqref{eq:dyn_fac_comp_norm_const_grad}, we arrive at the following upper bound:
\[
\frac{d}{dt} \norm{ \tenp_{n = 1}^N \w_r^{n} (t) } \leq N \norm{ \nabla \L_h (0) } \cdot \norm{ \tenp_{n = 1}^N \w_r^{n} (t) }^{2 - \frac{2}{N}}
\text{\,.}
\]
Dividing both sides of the inequality by $\normnoflex{ \tenp_{n = 1}^N \w_r^{n} (t) }^{2 - 2 / N}$ (is positive since $\normnoflex{ \w_r^{1} (t) } = \cdots = \normnoflex{ \w_r^{N} (t) } > 0$), and integrating with respect to time, yields:
\[
\begin{split}
	& \int_{\hat{t} = 0}^t \left [ \norm{ \tenp_{n = 1}^N \w_r^{n} (\hat{t}) }^{2 / N - 2} \frac{d}{d \hat{t}} \norm{ \tenp_{n = 1}^N \w_r^{n} (\hat{t}) } \right ] d \hat{t} \leq N \norm{ \nabla \L_h (0) } \cdot t \\
	\implies & \frac{N}{2 - N} \left ( \norm{ \tenp_{n = 1}^N \w_r^{n} (t) }^{2 / N - 1} - \norm{ \tenp_{n = 1}^N \w_r^{n} (0) }^{2 / N - 1} \right ) \leq N \norm{ \nabla \L_h (0) } \cdot t
	\text{\,.}
\end{split}
\]
Rearranging the inequality above, and making use of the fact that $\normnoflex{ \tenp_{n = 1}^N \w_r^{n} (0) } = \prod_{n = 1}^N \normnoflex{ \w_r^{n} (0) } = \alpha^N \normnoflex{ \aaa_r }^N$, we arrive at:
\be
\begin{split}
\norm{ \tenp_{n = 1}^N \w_r^{n} (t) }^{2 / N - 1} & \geq \norm{ \tenp_{n = 1}^N \w_r^{n} (0) }^{2 / N - 1} - (N - 2) \norm{ \nabla \L_h (0) } \cdot t \\
& = \alpha^{2 - N} \norm{ \aaa_r}^{2 - N} - (N - 2) \norm{ \nabla \L_h (0) } \cdot t 
\text{\,.}
\end{split}
\label{eq:comp_norm_derivation_intermediate_upper_bound}
\ee
Noticing $\gamma_1 (0) =  \inprodnoflex{ - \nabla \L_h ( \W_e (0) ) }{ \tenp_{n = 1}^N \widehat{\w}_1^{n} (0) } = \inprodnoflex{ - \nabla \L_h ( 0 ) }{ \tenp_{n = 1}^N \widehat{\aaa}_1^{n} }$, by Assumption~\ref{assump:a_lead_comp} we have that $\normnoflex{ \aaa_{1} } > \normnoflex{ \aaa_{r} } \norm{ \nabla \L_h ( 0 ) }^{1 / (N - 2)} \cdot \gamma_1 (0)^{- 1 / (N - 2)}$.
Therefore:
\[
t' \leq \alpha^{2 - N} \normnoflex{ \aaa_1 }^{2 - N} (N - 2)^{-1} \gamma_1 (0)^{-1} < \alpha^{2 - N} \normnoflex{ \aaa_r }^{2 - N} (N - 2)^{-1} \normnoflex{ \nabla \L_h (0)}^{-1}
\text{\,.}
\]
This implies that the right hand side in Equation~\eqref{eq:comp_norm_derivation_intermediate_upper_bound} is positive for all $t \in [0, t')$.
Thus, rearranging Equation~\eqref{eq:comp_norm_derivation_intermediate_upper_bound} establishes
Equation~\eqref{eq:comp_norms_upper_bound_with_time_helper_lemma}:
\[
\norm{ \tenp_{n = 1}^N \w_r^{n} (t) } \leq \left ( \alpha^{2-N} \normnoflex{ \aaa_r }^{2 - N}  - (N - 2) \norm{ \nabla \L_h ( 0 ) } \cdot t \right )^{- \frac{N}{N-2}} 
\text{\,.}
\]
Equation~\eqref{eq:comp_norms_upper_bound_helper_lemma} then directly follows:
\[
\begin{split}
	\norm{ \tenp_{n = 1}^N \w_r^{n} (t) } & \leq \left ( \alpha^{2-N} \normnoflex{ \aaa_r }^{2 - N}  - (N - 2) \norm{ \nabla \L_h ( 0 ) } \cdot t' \right )^{- \frac{N}{N-2}} \\
	& \leq \left ( \alpha^{2-N} \normnoflex{ \aaa_r }^{2 - N}  - \alpha^{2 - N} \normnoflex{ \aaa_1 }^{2 - N} \norm{ \nabla \L_h ( 0 ) } \gamma_1 (0)^{-1}  \right )^{- \frac{N}{N-2}} \\
	& = \alpha^N \left (  \normnoflex{ \aaa_r }^{2 - N} - \normnoflex{ \aaa_1 }^{2 - N} \norm{ \nabla \L_h ( 0 ) } \gamma_1 (0)^{-1} \right )^{- \frac{N}{N-2}} 
	\text{\,.}
\end{split}
\]
\end{proof}

\medskip

\begin{proof}[Proof of Proposition~\ref{prop:end_tensor_reaches_ref_sphere}]
Notice that at initialization $\norm{ \W_e (0) } \leq  \sum_{r = 1}^R \normnoflex{ \tenp_{n = 1}^N \w_r^{n} (0) } \leq R \alpha^N A^N < \rho$.
We can therefore examine the trajectory up until the time at which $\norm{ \W_e (t) } = \rho$, \ie~until it reaches the reference sphere $\S$.
Formally, define:
\[
T_0 := \inf \left \{ t \geq 0 : \W_e (t) \in \S \right \}
\text{\,,}
\]
where by convention $T_0 := \infty$ if the set on the right hand side is empty.
For all $t \in [0, T_0)$, clearly, $\norm{ \W_e (t) } < \rho$, and so by Lemma~\ref{lem:huber_loss_const_grad_near_zero} $\nabla \L_h ( \W_e (t) ) = \nabla \L_h (0)$.
We claim that $T_0$ is finite.
Assume by way of contradiction that $T_0 = \infty$.
For $t' :=  \alpha^{2 - N} \normnoflex{ \aaa_1 }^{2 - N} (N - 2)^{-1} \inprodnoflex{ - \nabla \L_h (0) }{ \tenp_{n = 1}^N \widehat{\aaa}_{1}^{n} }^{-1}$, by Equation~\eqref{eq:comp_norm_lower_bound_with_time_helper_lemma} from Lemma~\ref{lem:const_grad_comp_norm_bounds_helper_lemma} we have that $ \normnoflex{ \tenp_{n = 1}^N \w_1^{n} (t) } $ is lower bounded by a quantity that goes to $\infty$ as $t \to t^{\prime -}$.
On the other hand, by Equation~\eqref{eq:comp_norms_upper_bound_helper_lemma} from Lemma~\ref{lem:const_grad_comp_norm_bounds_helper_lemma}, $\normnoflex{ \tenp_{n = 1}^N \w_2^{n} (t) }, \ldots, \normnoflex{ \tenp_{n = 1}^N \w_R^{n} (t) }$ are bounded over $[0, t')$.
Taken together, there must exist $\hat{t} \in [0, t')$ at which:
\[
\norm{ \W_e  ( \hat{t})} \geq \normnoflex{ \tenp_{n = 1}^N \w_1^{n} ( \hat{t} ) } - \sum_{r = 2}^R \normnoflex{ \tenp_{n = 1}^N \w_r^{n} ( \hat{t} ) } \geq \rho
\text{\,.}
\]
Since $\norm{ \W_e (t) }$ is continuous in $t$, and $\norm{ \W_e (0) } < \rho$, this contradicts our assumption that $T_0 = \infty$.
Hence, $T_0 < \infty$, and in particular $T_0 < t'$.
Notice that continuity of $\norm{ \W_e (t) }$ further implies that $\norm{ \W_e (T_0) } = \rho$, \ie~ $T_0$ is the initial time at which $\W_e (t)$ reaches the reference sphere $\S$.
Applying our assumption on the size of $\alpha$ (Equation~\eqref{eq:alpha_size_requirement}) to Equation~\eqref{eq:comp_norms_upper_bound_helper_lemma} from Lemma~\ref{lem:const_grad_comp_norm_bounds_helper_lemma} establishes Equation~\eqref{eq:approx_rank_1_stage_I_t_0_comp_bounds_upper_bound}.
Equation~\eqref{eq:approx_rank_1_stage_I_t_0_first_comp_bound} then readily follows by the triangle inequality:
\[
\begin{split}
 \abs{ \norm{ \tenp_{n = 1}^N \w_1^{n} (T_0) } - \rho } & =  \abs{ \norm{ \tenp_{n = 1}^N \w_1^{n} (T_0) } - \norm{ \W_e (T_0) } } \\
 & \leq \norm{ \tenp_{n = 1}^N \w_1^{n} (T_0) - \W_e (T_0) } \\
 & = \norm{ \sum\nolimits_{r = 2}^R \tenp_{n = 1}^N \w_r^n (T_0) }  \\
 & \leq (R - 1) \cdot \tilde{\epsilon}
\text{\,.}
\end{split}
\]
\end{proof}

\subsubsection{Stage II: End Tensor Follows Rank One Trajectory}
\label{app:proofs:approx_rank_1:trajectory}

As shown in Proposition~\ref{prop:end_tensor_reaches_ref_sphere} (Subappendix~\ref{app:proofs:approx_rank_1:const_grad}), the end tensor initially reaches reference sphere $\S$ at some time $T_0 > 0$, for which Equations~\eqref{eq:approx_rank_1_stage_I_t_0_comp_bounds_upper_bound} and~\eqref{eq:approx_rank_1_stage_I_t_0_first_comp_bound} hold.
Therefore, the time-shifted trajectory is given by $\overline{\W}_e (t) = \W_e (t + T_0)$ for all $t \geq 0$.
Denote the corresponding time-shifted factorization weight vectors by:
\[
\widebar{\w}_r^n (t) := \w_r^n (t + T_0) \quad , ~ t \geq 0 ~,~ r = 1, \ldots, R ~,~ n = 1, \ldots, N
\text{\,.}
\]
We are now at a position to define the approximating rank one trajectory $\W_1 ( t )$ emanating from~$\S$.
Let $\{ \widetilde{\w}^n (t) \}_{n = 1}^N$ be a curve born from gradient flow when minimizing $\phi_h (\cdot)$ with a one-component tensor factorization, initialized at:
 \[
\widetilde{\w}^n (0) := \frac{ \rho^{1 / N} }{\normnoflex{ \widebar{\w}_1^n (0) }} \cdot \widebar{\w}_{1}^n (0) \quad , ~n = 1, \ldots, N
\text{\,.}
\]
Notice that by definition $\normnoflex{ \widetilde{\w}^1 (0) } = \cdots = \normnoflex{ \widetilde{\w}^N (0) } = \rho^{1 / N}$.
Therefore, $\{ \widetilde{\w}^n (0) \}_{n = 1}^N$ have unbalancedness magnitude zero (Definition~\ref{def:unbalancedness_magnitude}).
Denoting $\W_1 (t) := \tenp_{n = 1}^N \widetilde{\w}^n (t)$, for $t \geq 0$, we can see that $\W_1 (t)$ is a balanced rank one trajectory.
Furthermore, $\normnoflex{ \W_1 (0) } = \normnoflex{ \tenp_{n = 1}^N \widetilde{\w}^n (0) } = \prod_{n = 1}^N \normnoflex{ \widetilde{\w}^n (0) } = \rho$, meaning $\W_1 (0) \in \S$.
It will be convenient to treat $\{ \widetilde{\w}^n (t) \}_{n = 1}^N$ as an $R$-component factorization with components $2, \ldots, R$ being zero.
To this end, denote $\widetilde{\w}_1^n (t) := \widetilde{\w}^n (t)$, and define $\widetilde{\w}_r^n (t) := 0$ for all $t \geq 0$, $r \in \{ 2, \ldots, R \}$ and $n \in [N]$.
Notice that, according to Lemma~\ref{lem:width_R_equivalent_to_larger_width_with_zero_init}, $\{ \widetilde{\w}_r^n (t) \}_{r = 1}^R\hspace{0mm}_{n = 1}^N$ indeed follow a gradient flow path of an $R$-component factorization.

Next, we turn to bound the distance between $\{ \widebar{\w}_r^n (0) \}_{r = 1}^R\hspace{0mm}_{n = 1}^N$ and $\{ \widetilde{\w}_r^n (0) \}_{r = 1}^R\hspace{0mm}_{n = 1}^N$.
From Equation~\eqref{eq:approx_rank_1_stage_I_t_0_comp_bounds_upper_bound} in Proposition~\ref{prop:end_tensor_reaches_ref_sphere}, recalling $\tilde{\epsilon} \leq \hat{\epsilon}$ (by their definition in Equation~\eqref{eq:approx_rank_1_consts}), we obtain:
\be
\normnoflex{ \widebar{\w}_r^n (0)} = \normnoflex{ \w_r^n (T_0)} = \normnoflex{ \tenp_{n' = 1}^N \w_r^{n'} (T_0) }^{ \frac{1}{N} } \leq \tilde{\epsilon}^{ \frac{1}{N} } \leq \hat{\epsilon}^{ \frac{1}{N} } \quad , ~r = 2, \ldots, R ~,~ n = 1, \ldots, N
\text{\,.}
\label{eq:approx_rank_1_t_0_param_vec_bounds_other_compcs}
\ee
As for the first component, for any $n \in [N]$, the fact that $\normnoflex{\widebar{\w}_1^n (0)} = \normnoflex{ \w_1^n (T_0) } =  \normnoflex{ \tenp_{n' = 1}^N \w_1^{n'} (T_0 ) }^{1 / N}$ and Equation~\eqref{eq:approx_rank_1_stage_I_t_0_first_comp_bound} from Proposition~\ref{prop:end_tensor_reaches_ref_sphere} yield the following bound:
\[
\left ( \rho - (R - 1) \cdot \tilde{\epsilon} \right )^{\frac{1}{N}} \leq \norm{\widebar{\w}_1^n (0)} \leq \left ( \rho + (R - 1) \cdot \tilde{\epsilon} \right )^{\frac{1}{N}}
\text{\,.}
\]
On the one hand, since the $\ell_1$ norm is no greater than the $\ell_p$ norm for $p < 1$, we have that $( \rho + (R - 1) \cdot \tilde{\epsilon} )^{1 / N} \leq \rho^{1 / N} + (R - 1)^{1 / N} \cdot \tilde{\epsilon}^{1 / N} \leq \rho^{1 / N} + (R - 1)^{1 / N} \cdot \hat{\epsilon}^{1 / N}$.
On the other hand, since by definition $\tilde{\epsilon} \leq ( R - 1 )^{-1} ( \rho - [ \rho^{1 / N} - (R - 1)^{1 / N} \cdot \hat{\epsilon}^{1 / N} ]^{N} )$, it is straightforward to verify that $( \rho - (R - 1) \cdot \tilde{\epsilon}  )^{1 / N} \geq \rho^{1 / N} - (R - 1)^{1 / N} \cdot \hat{\epsilon}^{1 / N}$.
Put together, while noticing that $\normnoflex{ \widebar{\w}_1^n (0) - \widetilde{\w}_1^n (0) } = \abs{ \normnoflex{ \widebar{\w}_1^n (0) } - \rho^{1 / N} }$, we arrive at:
\be
\norm{ \widebar{\w}_1^n (0) - \widetilde{\w}_1^n (0) } = \abs{ \norm{\widebar{\w}_1^n (0)} - \rho^{ \frac{1}{N} } } \leq (R - 1)^{\frac{1}{N}} \cdot \hat{\epsilon}^{\frac{1}{N}} \quad , ~ n \in [N]
\text{\,.}
\label{eq:approx_rank_1_t_0_param_vec_bounds_first_comp}
\ee
Equations~\eqref{eq:approx_rank_1_t_0_param_vec_bounds_other_compcs} and~\eqref{eq:approx_rank_1_t_0_param_vec_bounds_first_comp} lead to the following bound on the distance between $\{ \widebar{\w}_r^n (0) \}_{r = 1}^R\hspace{0mm}_{n = 1}^N$ and $\{ \widetilde{\w}_r^n (0) \}_{r = 1}^R\hspace{0mm}_{n = 1}^N$:
\[
\begin{split}
	\sum_{r = 1}^R \sum_{n = 1}^N \norm{ \widebar{\w}_r^n (0) - \widetilde{\w}_r^n (0) }^2 & = \sum_{n = 1}^N \norm{ \widebar{\w}_1^n (0) - \widetilde{\w}_1^n (0) }^2  + \sum_{r = 2}^R \sum_{n = 1}^N \norm{ \widebar{\w}_r^n (0)}^2 \\
	& \leq (R - 1)^{\frac{2}{N}} N \cdot \hat{\epsilon}^{\frac{2}{N}} + (R - 1) N \cdot \hat{\epsilon}^{ \frac{2}{N} } \\
	& \leq 2 (R - 1) N \cdot \hat{\epsilon}^{ \frac{2}{N} }
	\text{\,,}
\end{split}
\]
where the last transition is by $(R - 1)^{2 / N} \leq (R - 1)$.
Let $\widetilde{D} := \sqrt{N} \left ( \max \{ D, \rho \} + 1 \right )^{\frac{1}{N}}$ and $\beta := R N ( (\widetilde{D} + 1)^{2(N - 1)} + \delta_h (\widetilde{D} + 1)^{N - 2} )$ (as defined in Equation~\eqref{eq:approx_rank_1_consts}).
According to Lemma~\ref{lem:huber_cp_objective_is_smooth_over_bounded_domain}, the objective $\phi_h (\cdot)$ is $\beta$-smooth over the closed ball of radius $\widetilde{D} + 1$ around the origin.
Furthermore, seeing that $2 (R - 1) N \cdot \hat{\epsilon}^{2 / N} < \exp ( - 2 \beta \cdot T )$ (by the definition of $\hat{\epsilon}$ in Equation~\eqref{eq:approx_rank_1_consts}), we obtain:
\[
\begin{split}
	\sum_{r = 1}^R \sum_{n = 1}^N \norm{ \widebar{\w}_r^n (0) - \widetilde{\w}_r^n (0) }^2 \leq 2 (R - 1) N \cdot \hat{\epsilon}^{ \frac{2}{N} } < \exp ( - 2 \beta \cdot T )
	\text{\,.}
\end{split}
\]
Thus, Lemma~\ref{lem:gf_smooth_dist_bound} implies the following holds at least until $t \geq T$ or $( \sum_{r = 1}^R \sum_{n = 1}^N \normnoflex{ \widetilde{\w} _r^n ( t)}^2 )^{1/2} \geq \widetilde{D}$:
\be
\begin{split}
	\sum_{r = 1}^R \sum_{n = 1}^N \norm{ \widebar{\w}_r^n (t) - \widetilde{\w}_r^n (t) }^2 & \leq \sum_{r = 1}^R \sum_{n = 1}^N \norm{ \widebar{\w}_r^n (0) - \widetilde{\w}_r^n (0) }^2 \cdot \exp \left ( 2 \beta \cdot t \right ) \\
	& \leq 2 (R - 1) N \cdot \hat{\epsilon}^{\frac{2}{N}} \cdot \exp \left ( 2 \beta \cdot t \right )
	\text{\,.}
\end{split}
\label{eq:approx_rank_1_t_0_sq_param_dist_at_end}
\ee

Suppose that $( \sum_{r = 1}^R  \sum_{n = 1}^N \normnoflex{ \widetilde{\w} _r^n ( t )}^2 )^{1/2} < \widetilde{D}$ for all $t \in [0, T]$.
In this case, Equation~\eqref{eq:approx_rank_1_t_0_sq_param_dist_at_end} holds for all $t \in [0, T]$.
Seeing that $2 (R - 1) N \cdot \hat{\epsilon}^{2 / N} \cdot \exp \left ( 2 \beta \cdot T \right ) < 1$, Equation~\eqref{eq:approx_rank_1_t_0_sq_param_dist_at_end} gives $( \sum_{r = 1}^R \sum_{n = 1}^N \normnoflex{ \widebar{\w}_r^n ( t )}^2 )^{1/2} < \widetilde{D} + 1$.
Then, Equation~\eqref{eq:approx_rank_1_t_0_sq_param_dist_at_end}, the fact that $\W_1 (t) = \tenp_{n = 1}^N \widetilde{\w}_1^n (t) = \sum_{r = 1}^R \tenp_{n = 1}^N \widetilde{\w}_r^n (t)$, and Lemma~\ref{lem:param_dist_to_end_to_end_dist} yield:
\[
\norm{ \overline{\W}_e ( t ) - \W_1 (t) } \leq \sqrt{2} R N (\widetilde{D} + 1)^{N - 1} \cdot \exp \left ( \beta \cdot T \right ) \cdot \hat{\epsilon}^{ \frac{1}{N} } \quad , ~ t \in [0, T]
\text{\,.}
\]
Recalling that $\hat{\epsilon} \leq 2^{- \frac{N}{2}} R^{-N} N^{-N} (\widetilde{D} + 1)^{N - N^2} \cdot \exp ( -N \beta T) \cdot \epsilon^N$, we conclude:
\be
\norm{ \overline{\W}_e ( t ) - \W_1 (t) } \leq \epsilon
\text{\,,}
\label{eq:end_tensor_rank_1_epsilon_dist}
\ee
for all $t \in [0, T]$.

It remains to treat the case where $( \sum_{r = 1}^R  \sum_{n = 1}^N \normnoflex{ \widetilde{\w} _r^n ( t )}^2 )^{1/2} \geq \widetilde{D}$ for some $t \in [0, T]$.
Let $t' \in [ 0 , T ]$ be the initial such time (well defined due to continuity of $( \sum_{r = 1}^R  \sum_{n = 1}^N \normnoflex{ \widetilde{\w} _r^n ( t )}^2 )^{1/2}$ with respect to $t$).
The desired result readily follows by showing that: \emph{(i)} Equation~\eqref{eq:end_tensor_rank_1_epsilon_dist} holds for $t \in [0, t']$; and \emph{(ii)} $\normnoflex{ \overline{\W}_e ( t' )} \geq D$.

We start by proving that $\norm{ \W_1 ( t' ) } \geq \max \{ D, \rho \} + 1$ and $t' > 0$.
Recalling $\widetilde{\w}_r^1 (t), \ldots, \widetilde{\w}_r^N (t)$ are identically zero for all $r \in \{ 2, \ldots, R \}$, we have that:
\[
\sum_{n = 1}^N \normnoflex{ \widetilde{\w} _1^n ( t' )}^2  =  \sum_{r = 1}^R  \sum_{n = 1}^N \normnoflex{ \widetilde{\w} _r^n ( t' )}^2 \geq \widetilde{D}^2
\text{\,.}
\]
Since $\normnoflex{ \widetilde{\w}_1^1 ( 0 ) } = \cdots = \normnoflex{ \widetilde{\w}_1^N ( 0 ) }$, Lemma~\ref{lem:balancedness_conservation_body} implies $\normnoflex{ \widetilde{\w}_1^1 ( t' ) } = \cdots = \normnoflex{ \widetilde{\w}_1^N ( t' ) }$.
Thus, for any $n \in [N]$:
\[
N \normnoflex{ \widetilde{\w}_{1}^{n} (t') }^2 = \sum_{n' = 1}^N \normnoflex{ \widetilde{\w}_{1}^{n'} ( t' ) }^2 \geq \widetilde{D}^2
\text{\,,}
\]
which leads to $\normnoflex{ \widetilde{\w}_{1}^{n} ( t' ) } \geq \widetilde{D} N^{-1 / 2}$.
In turn this yields $\normnoflex{ \W_1 (t') } = \norm{ \tenp_{n = 1}^N \widetilde{\w}_{1}^{n} ( t' ) } = \prod_{n = 1}^N \norm{ \widetilde{\w}_{1}^{n} ( t' ) } \geq \widetilde{D}^N N^{- \frac{N}{2}}$.
Plugging in $\widetilde{D} := \sqrt{N} (\max \{ D, \rho \}  + 1)^{\frac{1}{N}}$, we conclude: 
\be
\normnoflex{ \W_1 ( t' ) } \geq \max \{ D, \rho \} + 1
\text{\,.}
\label{eq:rank_1_traj_norm_lower_bound}
\ee
Note that this necessarily means $t' > 0$ as $\W_1 (0) \in \S$, \ie~$\normnoflex{ \W_1 ( 0 ) } = \rho < \max \{ D, \rho \} + 1$.

Now, we focus on the time interval $[0, t')$, over which Equation~\eqref{eq:approx_rank_1_t_0_sq_param_dist_at_end} holds and $( \sum_{r = 1}^R  \sum_{n = 1}^N \normnoflex{ \widetilde{\w} _r^n ( t )}^2 )^{1/2} < \widetilde{D}$.
From the same reasoning as in the case where $( \sum_{r = 1}^R  \sum_{n = 1}^N \normnoflex{ \widetilde{\w} _r^n ( t )}^2 )^{1/2} < \widetilde{D}$ for all $t \in [0, T]$, we obtain that Equation~\eqref{eq:end_tensor_rank_1_epsilon_dist} holds for all $t \in [0, t')$.
Continuity with respect to time then implies $\normnoflex{ \overline{\W}_e ( t' ) - \W_1 (t') } \leq \epsilon < 1$.
Lastly, together with Equation~\eqref{eq:rank_1_traj_norm_lower_bound} this leads to $\normnoflex{ \overline{\W}_e ( t' )} \geq \normnoflex{ \W_1 (t') } - 1 \geq D$.

Overall, we have shown that $\normnoflex{\overline{\W}_e ( t ) - \W_1 (t) } \leq \epsilon$ at least until time $T$ or time $t'$ at which $\normnoflex{\overline{\W}_e ( t' )} \geq D$, establishing the desired result.
\qed

\subsection{Proof of Corollary~\ref{corollary:converge_rank_1}}
\label{app:proofs:converge_rank_1}

For $\epsilon > 0$, there exists a time $T' > 0$ at which all balanced rank one trajectories emanating from $\S$ are within distance $\epsilon / 2$ from $\W^*$.
Moreover, these trajectories are confined to a ball of radius $D$ around the origin, for some $D > 0$.
According to Theorem~\ref{thm:approx_rank_1}, if initialization scale $\alpha$ is sufficiently small, $\normnoflex{ \overline{ \W }_e (t) - \W_1 (t) } \leq \min \{ \epsilon / 2, 1 / 2 \}$ at least until $t \geq T'$ or $\normnoflex{ \overline{ \W }_e (t) } \geq D + 1$, where $\overline{ \W }_e (t)$ is the time-shifted trajectory of $\W_e (t)$, and $ \W_1 (t)$ is a balanced rank one trajectory emanating from $\S$.
We claim that the latter cannot hold, \ie~$\normnoflex{ \overline{ \W }_e (t) } < D + 1$ for all $t \in [0, T']$.
To see it is so, assume by way of contradiction otherwise, and let $t' \in [0, T']$ be the initial time at which $\normnoflex{ \overline{ \W }_e (t') } \geq D + 1$.
Since $\normnoflex{ \overline{ \W }_e (t') - \W_1 (t') } < 1$, we have that $\normnoflex{  \W_1 (t') } > D$, in contradiction to $\W_1 (t)$ being confined to a ball of radius $D$ around the origin.
Thus, $\normnoflex{ \overline{ \W }_e ( T' ) - \W_1 (T') } \leq \epsilon / 2$.
The proof concludes by the triangle inequality:
\[
\norm{ \overline{ \W }_e (T') - \W^* } \leq \norm{ \overline{ \W }_e (T') - \W_1 (T') } + \norm{ \W_1 (T') - \W^* } \leq \epsilon
\text{\,.}
\]
\qed

\end{document}